\theoremstyle{definition}  %Sets style of subsequent newtheorems to 'definition'
\newtheorem{lemma}{Lemma}
\newtheorem{corollary}{Corollary}
\newtheorem{proposition}{Proposition}
\newtheorem{assumption}{Assumption}
\newtheorem{definition}{Definition}
\theoremstyle{plain}
\newtheorem{example}{Example}
\newtheorem{theorem}{Theorem}
\xpatchcmd{\proof}{\itshape}{\normalfont\proofnameformat}{}{}
\newcommand{\proofnameformat}{\bfseries}
\newcommand{\pref}[1]{\prettyref{#1}}
\newcommand{\pfref}[1]{Proof of \prettyref{#1}}
\newcommand{\savehyperref}[2]{\texorpdfstring{\hyperref[#1]{#2}}{#2}}
\DeclarePairedDelimiter{\abs}{\lvert}{\rvert} %
\DeclarePairedDelimiter{\brk}{[}{]}
\DeclarePairedDelimiter{\crl}{\{}{\}}
\DeclarePairedDelimiter{\prn}{(}{)}
\DeclarePairedDelimiter{\nrm}{\|}{\|}
\DeclarePairedDelimiter{\tri}{\langle}{\rangle}
\DeclarePairedDelimiter{\ceil}{\lceil}{\rceil}
\DeclareMathOperator{\En}{\mathbb{E}}
\DeclareMathOperator*{\argmin}{arg\,min} % * Places subscript directly under operator
\DeclareMathOperator*{\argmax}{arg\,max}
\newcommand{\ls}{\ell}
\newcommand{\pmo}{\crl*{\pm{}1}}
\newcommand{\eps}{\epsilon}
\newcommand{\veps}{\varepsilon}
\newcommand{\ldef}{\vcentcolon=}
\newcommand{\wt}[1]{\widetilde{#1}}
\newcommand{\wh}[1]{\widehat{#1}}
\def\ddefloop#1{\ifx\ddefloop#1\else\ddef{#1}\expandafter\ddefloop\fi}
\def\ddef#1{\expandafter\def\csname bb#1\endcsname{\ensuremath{\mathbb{#1}}}}
\def\ddefloop#1{\ifx\ddefloop#1\else\ddef{#1}\expandafter\ddefloop\fi}
\def\ddef#1{\expandafter\def\csname b#1\endcsname{\ensuremath{\mathbf{#1}}}}
\def\ddef#1{\expandafter\def\csname c#1\endcsname{\ensuremath{\mathcal{#1}}}}
\def\ddef#1{\expandafter\def\csname h#1\endcsname{\ensuremath{\widehat{#1}}}}
\def\ddef#1{\expandafter\def\csname hc#1\endcsname{\ensuremath{\widehat{\mathcal{#1}}}}}
\def\ddef#1{\expandafter\def\csname t#1\endcsname{\ensuremath{\widetilde{#1}}}}
\def\ddef#1{\expandafter\def\csname tc#1\endcsname{\ensuremath{\widetilde{\mathcal{#1}}}}}
\newcommand{\burk}{\mathbf{U}}
\let\wt\undefined
\newcommand{\wt}[1]{\widetilde{#1}}
\newcommand{\mb}[1]{\boldsymbol{#1}}
\newcommand{\F}{\mathcal{F}}
\newcommand{\X}{\mathcal{X}}
\newcommand{\Y}{\mathcal{Y}}
\newcommand{\T}{\mathcal{T}}
\newcommand{\sB}{{\mathsf B}}
\newcommand{\bp}{\boldsymbol{p}}
\newcommand{\bx}{\boldsymbol{x}}
\newcommand{\bz}{\boldsymbol{z}}
\newcommand{\suff}{\mathbf{T}}
\newcommand{\xr}[1][n]{x_{1:#1}}
\newcommand{\y}{\mathbf{y}}
\newcommand{\pred}{{\widehat{y}}}
\newcommand{\yh}{\pred}
\newcommand{\loss}{{\ell}}
\newcommand{\inner}[1]{\left\langle #1 \right\rangle}
\newcommand{\reals}{{\mathbb R}}
\newcommand{\norm}[1]{\left\|#1\right\|}
\newcommand{\grad}{\nabla}
\newcommand{\trn}{\intercal}
\newcommand{\x}{\mathbf{x}}
\renewcommand{\trn}{\dagger}
\newcommand{\Tr}{\mathsf{tr}}
\newcommand{\multiminimax}[1]{\ensuremath{\left\llangle #1\right\rrangle}}
\newcommand{\reg}{\mathrm{Reg}_n}
\renewcommand{\trn}{\top}
\newcommand{\dl}{\delta}
\newcommand{\sym}{\mathbb{S}}
\newcommand{\Bfun}{Burkholder }
\newcommand{\predt}{\wt{y}}
\newcommand{\propone}{$1^{o}$}
\newcommand{\proptwo}{$2^{o}$}
\newcommand{\propthree}{$3^{o}$}
\newcommand{\propthreep}{$3'$}
\title{\textbf{Online Learning: \\Sufficient Statistics and the Burkholder Method}}
\date{}
\author{
  Dylan J. Foster\thanks{Cornell University}
  \and
  Alexander Rakhlin\thanks{MIT}
  \and
  Karthik Sridharan\footnotemark[1]
  }
\begin{document}

\maketitle

\begin{abstract}
We uncover a fairly general principle in online learning: If regret can be (approximately) expressed as a function of certain ``sufficient statistics'' for the data sequence, then there exists a special \emph{Burkholder function} that 1) can be used algorithmically to achieve the regret bound and 2) only depends on these sufficient statistics, not the entire data sequence,  so that the online strategy is only required to keep the sufficient statistics in memory. This characterization is achieved by bringing the full power of the \emph{Burkholder Method}---originally developed for certifying probabilistic martingale inequalities---to bear on the online learning setting.

To demonstrate the scope and effectiveness of the Burkholder method, we develop a novel online strategy for matrix prediction that attains a regret bound corresponding to the variance term in matrix concentration inequalities. We also present a linear-time/space prediction strategy for parameter free supervised learning with linear classes and general smooth norms. 

\end{abstract}

\section{Introduction}
% !TEX root = paper.tex

Two of the most appealing features of online learning methods are
	 (a) robustness, due to the absence of assumptions on the data-generating process, and (b) the ability to efficiently incorporate data on the fly. According to this latter desideratum, online methods should not store all the data observed so far in memory, but instead maintain some ``compressed'' representation, sufficient for making online predictions. The focus of this work is the study of such \emph{sufficient statistics} for online learning, and the design of computationally efficient methods that employ them.
	
It is natural to turn to Statistics for inspiration: a classical notion of \emph{sufficient statistics} \citep{ra1922mathematical} ensures that a statistician can search for methods that work on ``compressed'' representations of the data. Sufficient statistics have also been studied in sequential decision theory \citep{bahadur1954sufficiency}. However, the very notion of sufficiency is inherently tied to the posited probabilistic model, and the corresponding notion for arbitrary sequences---as postulated by the above desideratum (a)---is all but obvious.

The current theory of online learning offers little guidance as to what summaries of past data should be recorded by an online algorithm. For instance, the Exponential Weights algorithm \citep{vovk1990aggregating,littlestone1994weighted} keeps in memory the cumulative losses of the experts, while the general potential-based forecaster \citep{PLG} updates the cumulative regret of the algorithm with respect to each expert. The methods from the Follow-the-Regularized-Leader family (also known as Dual Averaging methods) work with the sum of gradients of convex functions, while the Online Newton Step \citep{hazan2007logarithmic} method and the Vovk-Azoury-Warmuth forecaster \citep{PLG} also store the ``covariance'' matrix of outer products. The well-known adaptive gradient descent procedure (e.g. \citep{rakhlin2015equivalence}) tunes the step size of online gradient descent according to the cumulative squared norms of gradients, a statistic that appears to be necessary for achieving the adaptive bound, while the ZigZag method of \cite{foster2017zigzag} keeps track of a sign-transformed sequence of the gradients to achieve the empirical Rademacher complexity as a regret bound.

The question of sufficient statistics for online methods appears to be unexplored and poorly understood, and it will take significant effort to answer it. In this paper we propose an approach that appears to be general yet, inevitably, incomplete. We propose a definition that brings many existing methods under the same umbrella, and allows us to develop new efficient strategies that have been out of reach. The key workhorse for our development is the Burkholder method, studied in probability theory and harmonic analysis.

Beyond studying a notion of sufficient statistic for online methods, our work can be seen as providing a further understanding of emerging connections between online learning, martingale inequalities, and deterministic geometric quantities. At the risk of being imprecise, let us describe the bird's-eye view of our overall approach:
\begin{figure}[H]
	\label{fig:high_level_equiv}
  \centering
    \includegraphics[width=\textwidth]{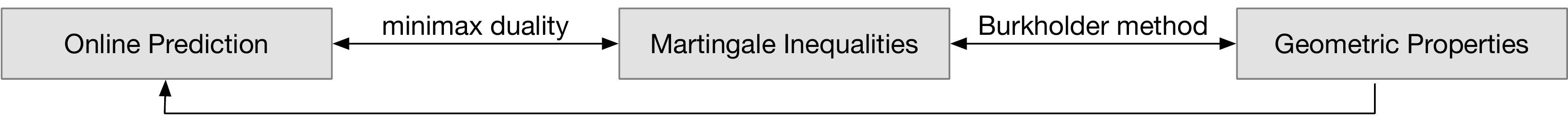}
	\vspace{-9mm}
\end{figure}
Based on the definition of sufficient statistics for online methods, we first derive the corresponding martingale inequalities with the help of the minimax theorem. We then turn to the Burkholder method, and show equivalence of these martingale inequalities with sufficient statistics and existence of a special Burkholder (or Bellman) function, a purely geometric object. We then use this function for the problem of online prediction, thus completing the circle. Crucially, the sufficient statistics we start with are reflected in the Burkholder function, and, hence, the proposed algorithm is only required to update these compressed representations of the data. We exhibit the power of this approach by deriving several new efficient prediction methods.

We remark that \citep{foster2017zigzag} studied a particular case of the Burkholder method related to the UMD property. The present work shows that the approach can be generalized significantly and used to address the question of sufficient statistics. For example, the explicit construction of the UMD-style Burkholder function for certain matrix prediction problems was noted to be challenging in \citep{foster2017zigzag} and indeed does not appear to be known in the analysis community \citep{osekowski2017personal}. In spite of this, the approach in the present paper uses different sufficient statistics to attain the same results with an explicit (and efficient) Burkholder function.

\section{Problem Setup and Sufficient Statistics}
\label{sec:problem}

Consider the \emph{Online Supervised Learning} setting where, for each round $t=1,\ldots,n$, the forecaster observes side information $x_t\in\X$, makes a prediction $\pred_t\in\Y\subset \reals$, observes the outcome $y_t\in\Y$, and incurs the loss of $\loss(\pred_t,y_t)$, where $\ls:\bbR\times{}\bbR\to\bbR$. In a general form, the goal of the forecaster is to ensure that
\begin{align}
	\label{eq:def_phi_regret}
	\En\left[\sum_{t=1}^n \loss(\pred_t,y_t)\right] \leq \phi(x_1,y_1,\ldots,x_n,y_n)
\end{align}
for any sequence $(x_1,y_1),\ldots,(x_n,y_n)$, where the expectation is with respect to forecaster's randomization. The choice of $\phi$ models the problem at hand, and examples in this paper focus on
\begin{align}
	\label{eq:phi_comp_adap}
	\phi(x_1,y_1,\ldots,x_n,y_n) = \min_{f\in\F} \left\{ \sum_{t=1}^n \loss(f(x_t),y_t) + \cA(f, x_1,\ldots,x_n)\right\},
\end{align}
for some class of functions $\F:\X\to\reals$ and an \emph{
adaptive bound} $\cA:\F\times\X^n\to\reals$. In this case, the difference of the cumulative losses of the forecaster and an $f\in\F$ is commonly referred to as \emph{regret},
$$\reg(f) = \sum_{t=1}^n\loss(\pred_t,y_t)-\loss(f(x_t),y_t).$$
We assume that $\phi$ is uniformly bounded over $(\cX\times{}\cY)^{n}$. We further assume that $\loss$ is convex and $L$-Lipschitz in the first argument over $\cY$. We denote the derivative (or a subderivative) of $\loss(\cdot,y)$ at $\pred$ by $\partial \loss(\pred,y)\in[-L,L]$. We will abbreviate $\delta_t = \partial\loss(\pred_t,y_t)$ when it is clear from context, but keep in mind that this value depends on the two variables $\yh_t$ and $y_t$. 
We assume that for any distribution $p$ on $\Y$, $\arg\min_{\pred\in\bbR} \En_{y \sim p}\loss(\pred,y) \in \cY$, and that $\cY$ is compact. We let $\Delta_{\cY}$ denote the space of all Borel probability measures on $\cY$ (more generally, $\Delta_{A}$ will denote the set of Borel probability measures over some set $A$). Since $\cY$ is compact, Prokhorov's theorem implies that $\Delta_{\cY}$ is compact in the weak topology. This enables application of the minimax theorem as in previous works in this direction \citep{RakSriTew10,RakSriTew14jmlr,FosRakSri15}.

\textbf{Additional notation}~~~~
 Given a function $f:S\to\bbR$, its Fenchel dual $f^{\star}$ is defined via $f^{\star}(w) = \sup_{x\in{}S}\crl*{\tri*{w,x}-f(x)}$. For any norm $\nrm*{\cdot}$, the dual norm will be denoted by $\nrm*{\cdot}_{\star}$. $\sB_p^d$ will denote the $d$-dimension unit $\ls_p$ ball and the shorthand $\Delta_{d}$ will denote the simplex in $d$ dimensions. For any interval $\brk*{a,b}$, we let $\mathrm{proj}_{\brk*{a,b}}(x) = \min\crl*{b, \max\crl*{a, x}}$.

\subsection{Sufficient Statistics}

Since there is no probabilistic model for data in the online learning setting, the notion of ``sufficiency'' has to be tied to the particular choice of $\phi$. It is then tempting to define a sufficient statistic as a ``compressed'' representation which may be used by some strategy to ensure \pref{eq:def_phi_regret}. While natural, such a definition does not provide any additional structure to narrow the search for an algorithm.

The definition we propose is as follows:
\begin{definition}
	\label{def:sufficiency}
	Let $\T$ be some vector space. A function $\suff: \X \times \Y \times [-L,L] \to \T$ is an \emph{additive sufficient statistic} for $\phi$ if there exists $V: \T \to \reals$ such that
\begin{align}
	\label{eq:suffiency_def}
\sum_{t=1}^n \loss(\pred_t,y_t)- \phi(x_1,y_1,\ldots,x_n,y_n) \le  V\left(\sum_{t=1}^n \suff(x_t,\pred_t,\partial \loss(\pred_t,y_t))\right) 
\end{align}
for any sequence $x_1,\pred_1,y_1, \ldots, x_n,\pred_n,y_n$. We refer to $(\suff, V)$ as a \emph{sufficient statistic pair}.
\end{definition}
In \pref{sec:discussion}, we consider a more general non-additive definition. All examples in this paper, however, are already covered by \pref{def:sufficiency}, and we will drop the word ``additive'' for now. We will also make the mild assumption that there exists $(x^0,y^0)\in\X\times\Y$ such that $\suff(x^0,y^0,0)=0\in \T$.
\begin{example}[Prediction with expert advice]
	\label{ex:experts}
	Consider $\phi$ as in Eq. \pref{eq:phi_comp_adap} with $\cF$ as the set of linear functions $f(x)=\inner{f,x}$ for $f\in\Delta_d$, with $\X=[-1,1]^d$, and with non-adaptive rate $\cA \ldef c\sqrt{n\log d}$. Then the left-hand-side of \pref{eq:suffiency_def} can be upper bounded via linearization of the convex loss by 
	$$\max_{j \in 1,\ldots,d}\sum_{t=1}^n \partial\loss(\pred_t,y_t) \cdot (\pred_t-\inner{e_j,x_t}) - c\sqrt{n\log d}.$$
	It follows that $\reals^d$-valued map $\suff$ defined by $[\suff(x_t,\pred_t,\delta_t)]_j = \delta_t\cdot (\pred_t-\inner{e_j,x_t})$ is a sufficient statistic.
\end{example}
\begin{example}[Adaptive Gradient Descent]
	Consider $\phi$ as in Eq. \pref{eq:phi_comp_adap} with $\cF$ as the set of linear functions $f(x)=\inner{f,x}$ for $f\in \sB_2^d$, $\X=\reals^d$, and adaptive bound $\cA(\nabla_{1},\ldots\nabla_n) \ldef (\sum_{t=1}^n \norm{\nabla_t}^2)^{1/2}$, where $\nabla_t \ldef \delta_t x_t$. The left-hand-side of \pref{eq:suffiency_def} is at most
	$$\max_{f\in \sB_2^d}\sum_{t=1}^n \delta_t \cdot (\pred_t-\inner{f,x_t}) - (\sum_{t=1}^n \norm{\nabla_t}^2)^{1/2} = \sum_{t=1}^n \delta_t \cdot \pred_t+\norm{\sum_{t=1}^n \nabla_t} - (\sum_{t=1}^n \norm{\nabla_t}^2)^{1/2},$$
	This implies that $\suff(x_t,\pred_t,\delta_t) = \left( \delta_t\pred_t, \nabla_t, \norm{\nabla_t}^2 \right)\in\reals\times \X\times \reals$ is a sufficient statistic.
\end{example}

\section{Martingale Inequalities and the Burkholder Method}
\label{sec:burkholder}

The notion of sufficient statistics introduced in the previous section will only be useful if we exhibit a prediction strategy employing this type of representation. Before doing so, we need to build the two bridges outlined in the diagram on the previous page.
These correspond to \pref{lem:suff_to_martingale} and \pref{lem:equivalence_burkholder} below.

First, we show that existence of a prediction strategy that guarantees the regret inequality \pref{eq:def_phi_regret} for all sequences can be ensured by checking a martingale inequality involving only the sufficient statistics. The key tool in proving the lemma is the minimax theorem. 

Note that in a slight abuse of notation, we will concatenate the first two arguments of any sufficient statistic $\suff$ and write them as $z_{t}\ldef(x_t, \pred_t)$ going forward.

\begin{lemma}
	\label{lem:suff_to_martingale}
Suppose $(\suff,V)$ is a sufficient statistic pair for $\phi$. Let $\delta=(\delta_1,\ldots,\delta_n)$ be a $[-L,L]$-valued martingale difference sequence (i.e. $\En[\delta_t\mid\cG_{t-1}]=0$, where $\cG_{t-1}=\sigma(\delta_1,\ldots,\delta_{t-1})$). Let $\bz=(\bz_1,\ldots,\bz_n)$ be a sequence of functions $\bz_t: [-L,L]^{t-1} \to \X \times \Y$, each viewed as a predictable process with respect to $\cG_{t-1}$. Then a sufficient condition for existence of a prediction strategy such that \pref{eq:def_phi_regret} holds for all sequences $(x_1,y_1),\ldots,(x_n,y_n)$ is that
\begin{align}
	\label{eq:suff_prob_ineq}
	\En\left[  V\left(\sum_{t=1}^n \suff(\bz_t, \delta_t) \right) \right] \le 0
\end{align}
holds for any $\bz$ and any law of $\delta$. Moreover, when $\alpha \mapsto V(\tau+\suff(z,\alpha))$ is convex for any $z\in\X\times\Y,\tau\in\T$, it is enough to check \pref{eq:suff_prob_ineq} for $\delta_t=\epsilon_t \cdot L$, $t=1,\ldots,n$, where $\epsilon_t$s are independent Rademacher random variables.
\end{lemma}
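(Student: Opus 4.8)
The plan is to exhibit $\En\brk*{\sum_{t=1}^n\loss(\pred_t,y_t)}-\phi(x_1,y_1,\ldots,x_n,y_n)$ as the value of the associated sequential minimax game and to show this value is nonpositive; by the now-standard minimax argument for online prediction \citep{RakSriTew10,RakSriTew14jmlr,FosRakSri15}, a value of at most $0$ yields a prediction strategy satisfying \pref{eq:def_phi_regret}. Write
\[\cV=\sup_{x_1}\inf_{q_1}\En_{\pred_1\sim q_1}\sup_{y_1}\cdots\sup_{x_n}\inf_{q_n}\En_{\pred_n\sim q_n}\sup_{y_n}\brk*{\sum_{t=1}^{n}\loss(\pred_t,y_t)-\phi(x_1,y_1,\ldots,x_n,y_n)},\]
each $q_t$ ranging over $\Delta_{\cY}$; it suffices to show $\cV\le0$. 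First I would carry out the minimax swap one round at a time: rewriting $\sup_{y_t}$ as $\sup_{p_t\in\Delta_{\cY}}\En_{y_t\sim p_t}$ makes the per-round objective bilinear in the pair of measures $(q_t,p_t)$ on the compact space $\cY$, so weak compactness of $\Delta_{\cY}$ licenses the minimax theorem exactly as in the cited works; after the swap nature commits to $p_t$ first and the forecaster may respond with a pure $\pred_t$.

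Next I would upper bound $\cV$ by forcing the forecaster, at round $t$, to respond with some $\pred_t^{\star}\in\arg\min_{\pred}\En_{y\sim p_t}\loss(\pred,y)$, which lies in $\cY$ by assumption. Since $\loss(\cdot,y)$ is convex and $L$-Lipschitz, the one-sided derivatives of $\pred\mapsto\En_{y\sim p_t}\loss(\pred,y)$ straddle $0$ at $\pred_t^{\star}$, so one can pick a subderivative $\delta_t=\partial\loss(\pred_t^{\star},y_t)\in[-L,L]$, measurable in $y_t$, with $\En_{y\sim p_t}[\delta_t]=0$; the linearization inequality underlying \pref{eq:suffiency_def} is valid for this choice. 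Setting $z_t=(x_t,\pred_t^{\star})$ and applying \pref{eq:suffiency_def} along each realized path gives
\[\cV\le\sup_{x_1}\sup_{p_1}\En_{y_1\sim p_1}\cdots\sup_{x_n}\sup_{p_n}\En_{y_n\sim p_n}\brk*{V\prn*{\sum_{t=1}^{n}\suff(z_t,\delta_t)}}.\]

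The third and crucial step is to collapse this ``$y$-game'' into the ``$\delta$-game'' of \pref{eq:suff_prob_ineq} by backward induction on the round. Because $V\prn*{\sum_t\suff(z_t,\delta_t)}$ depends on the history only through the running sum $\tau_t=\sum_{s\le t}\suff(z_s,\delta_s)$, one shows inductively that the value-to-go after round $t$ is a function of $\tau_t$ alone; hence the value-to-go after round $t-1$, namely $\sup_{x_t,p_t}\En_{y_t\sim p_t}[\,\cdot\,]$ of such a function, is again a function of $\tau_{t-1}$ alone. Moreover, for each fixed $(x_t,p_t)$ the pair (value of $z_t$, conditional law of $\delta_t$) lies in $\crl*{(z,\nu):z\in\X\times\cY,\ \nu\text{ a mean-zero law on }[-L,L]}$, so enlarging the inner supremum to all such $(z,\nu)$ only increases the value. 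Unrolling then produces $\cV\le\sup\En\brk*{V\prn*{\sum_{t=1}^{n}\suff(\bz_t,\delta_t)}}$, the supremum being over all predictable $\bz$ (each $\bz_t$ a function of $\delta_{1:t-1}$) and all laws making $\delta$ a $[-L,L]$-valued martingale difference sequence, which is $\le0$ by hypothesis. For the ``moreover'' claim I would propagate convexity through the same recursion: if $\alpha\mapsto V(\tau+\suff(z,\alpha))$ is convex for all $z,\tau$, then, since sums, suprema, and expectations preserve convexity, each value-to-go $v$ inherits the property that $\alpha\mapsto v(\tau+\suff(z,\alpha))$ is convex; a convex function on $[-L,L]$ has maximal expectation over mean-zero laws under the two-point law on $\crl*{-L,+L}$, so the supremum over the conditional law of $\delta_t$ at each step is attained at $\delta_t=\eps_tL$, and unrolling gives $\cV\le\sup_{\bz}\En_{\eps}\brk*{V\prn*{\sum_{t=1}^{n}\suff(\bz_t,\eps_tL)}}\le0$.

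The main obstacle is exactly this collapse: a priori nature's conditioning information $y_{1:t-1}$ is strictly richer than $\delta_{1:t-1}$ (the map $y_t\mapsto\delta_t$ need not be injective), so one must genuinely verify that the value-to-go depends on the past only through $\tau_{t-1}$ before it becomes legitimate to restrict nature to processes predictable with respect to $\sigma(\delta_{1:t-1})$. The remaining ingredients---the measurable mean-zero subderivative selection, the hypotheses under which each round-wise minimax swap is valid, and the passage from ``$\cV\le0$'' to an actual measurable strategy---are routine and are dispatched as in \citep{RakSriTew10,RakSriTew14jmlr,FosRakSri15}.
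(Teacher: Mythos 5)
Your proposal is correct and follows essentially the same route as the paper's proof: a round-by-round minimax swap justified by compactness of $\Delta_{\cY}$, the dual response $\pred_t^{\star}=\argmin_{\pred}\En_{y\sim p_t}\loss(\pred,y)$ making $\delta_t$ conditionally mean-zero, the sufficiency inequality \pref{eq:suffiency_def}, and then enlargement of each round's supremum to all $z\in\X\times\cY$ and mean-zero laws on $[-L,L]$, giving the martingale form \pref{eq:suff_prob_ineq}. You additionally spell out two points the paper leaves implicit---the backward-induction argument that the value-to-go depends on the history only through the running statistic $\tau_{t-1}$, and the convexity/extreme-point reduction to Rademacher $\delta_t=\eps_t L$---and both are handled correctly.
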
	

\pref{lem:suff_to_martingale} is in the spirit of results in \citep{RakSriTew10,RakSriTew14jmlr,FosRakSri15} whereby existence of a strategy (or, ``learnability'') is certified non-constructively by proving a martingale inequality. 

The next lemma provides a key insight into existence of certain deterministic functions with ``geometric'' properties (in particular, \emph{restricted concavity}) and can be seen as a variation on the so-called \emph{Burkholder method}  (also sometimes called the \emph{Bellman function method}; see \citep{osekowski2012sharp} for the detailed treatment and examples).

\begin{lemma}
	\label{lem:equivalence_burkholder}
Let $\delta=(\delta_1,\ldots,\delta_n)$ be a $[-L,L]$-valued martingale difference sequence with joint law $\bp$ and let $\bz=(\bz_1,\ldots,\bz_n)$ be a predictable process ($\bz_t: [-L,L]^{t-1} \to \X \times \Y$) with respect to $\cG_{t-1}=\sigma(\delta_1,\ldots,\delta_{t-1})$. The probabilistic inequality 
\begin{align}
	\label{eq:martingale_nonpositive}
	\En\left[  V\left(\sum_{t=1}^n \suff(\bz_t,\delta_t)\right) \right] \le 0
\end{align}
holds for any $n\geq 1$, $\bz$, and $\bp$ \emph{if and only if} one can find function $\burk:\T\to\reals$ that satisfies the following three properties:
\begin{enumerate}
 \item[$1^o$] $\burk(0) \le 0$.
 \item[$2^o$] For any $\tau \in \T$, $\burk(\tau) \ge V(\tau)$.
 \item[$3^o$] For any $\tau \in \T$, $z \in \X \times \Y$, and any mean-zero distribution $p$ on $[-L,L]$, 
 \begin{align}
 \En_{\alpha\sim p}\left[\burk(\tau + \suff(z,\alpha))\right] \le \burk(\tau). \hspace{-20mm} \tag{restricted concavity}
 \end{align}
\end{enumerate}
Furthermore, if for any $\tau \in \T$ and $z \in \X \times \Y$ the mapping $\alpha \mapsto V(\tau + \suff(z,\alpha))$ is convex, then condition \pref{eq:martingale_nonpositive} can be relaxed with $\delta$ replaced by independent Rademacher random variables $(\epsilon_1,\ldots,\epsilon_n)$. In this case the following property holds for $\burk$:
\begin{enumerate}
 \item[$3'$] The mapping $\alpha \mapsto \burk(\tau + \suff(z,\alpha))$ is convex and (property $3^o$ is replaced by): 
 $$
 \forall \tau \in \T, z \in \X \times \Y,~~~ \En_\epsilon \burk(\tau + \suff(z,\epsilon L))  \le \burk(\tau),
 $$
 where $\epsilon$ is a Rademacher random variable. 
\end{enumerate}
\end{lemma}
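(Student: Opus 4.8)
The plan is to prove the two implications separately, the forward (``only if'') direction via an explicit Bellman-type construction, and then obtain the convex addendum as a refinement of that construction. For the ``if'' direction, suppose $\burk$ satisfies $1^o$--$3^o$, fix $n$, a predictable $\bz$, and a martingale difference law $\bp$, and set $\tau_k\ldef\sum_{t=1}^k\suff(\bz_t,\delta_t)$ with $\tau_0=0$. Since $\tau_{k-1}$ and $\bz_k$ are $\cG_{k-1}$-measurable and the conditional law of $\delta_k$ given $\cG_{k-1}$ is a mean-zero distribution on $[-L,L]$, property $3^o$ applied with $\tau=\tau_{k-1}$, $z=\bz_k$, and $p$ that conditional law gives $\En[\burk(\tau_k)\mid\cG_{k-1}]\le\burk(\tau_{k-1})$, so $(\burk(\tau_k))_k$ is a supermartingale. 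Telescoping and using $1^o$ gives $\En[\burk(\tau_n)]\le\burk(0)\le0$, and then $2^o$ gives $\En[V(\tau_n)]\le\En[\burk(\tau_n)]\le0$, which is \pref{eq:martingale_nonpositive}; the integrability needed to run the telescoping is a mild regularity point controlled by the standing boundedness hypotheses, which I suppress. Rerunning the same computation with $\delta_t=\epsilon_tL$ and $3'$ in place of $3^o$ shows that $1^o,2^o,3'$ already yield the Rademacher version.

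For the ``only if'' direction, assume \pref{eq:martingale_nonpositive} holds for every $n,\bz,\bp$, and define the Bellman function
\[
\burk(\tau)\;\ldef\;\sup\Bigl\{\En\bigl[V\bigl(\tau+\textstyle\sum_{t=1}^n\suff(\bz_t,\delta_t)\bigr)\bigr]\;:\;n\ge0,\ \bz\ \text{predictable},\ \bp\ \text{a martingale difference law}\Bigr\}.
\]
Property $2^o$ is the $n=0$ term (equivalently, the degenerate one-step choice $\bz_1=(x^0,y^0)$, $\delta_1\equiv0$, using $\suff(x^0,y^0,0)=0$). For $1^o$, set $\tau=0$: each term with $n\ge1$ is $\le0$ by hypothesis, and the degenerate term equals $V(0)$, itself $\le0$ by the hypothesis applied to that one-step triple; hence $\burk(0)\le0$. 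For restricted concavity, fix $\tau$, $z$, and a mean-zero $p$ on $[-L,L]$, assumed for now to be finitely supported. Given $\epsilon>0$, for each $\alpha\in\mathrm{supp}(p)$ choose an $\epsilon$-optimal triple $(n_\alpha,\bz^\alpha,\bp^\alpha)$ for $\burk(\tau+\suff(z,\alpha))$, and pad all of them to a common horizon by appending degenerate steps. The concatenated process --- play $z$ with $\delta_1\sim p$, then, conditionally on $\delta_1=\alpha$, run $(\bz^\alpha,\bp^\alpha)$ --- is a legitimate martingale difference sequence with predictable $\bz$, and its value equals $\En_{\alpha\sim p}\bigl[\En[V(\tau+\suff(z,\alpha)+\sum_s\suff(\bz^\alpha_s,\delta^\alpha_s))]\bigr]\ge\En_{\alpha\sim p}[\burk(\tau+\suff(z,\alpha))]-\epsilon$; being one of the terms defining $\burk(\tau)$, it is $\le\burk(\tau)$. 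Letting $\epsilon\downarrow0$ gives $3^o$ for finitely supported $p$, and the general mean-zero case follows by approximating $p$ weakly by finitely supported mean-zero measures together with lower semicontinuity of $\alpha\mapsto\burk(\tau+\suff(z,\alpha))$, or, equivalently, via a measurable selection of the $\epsilon$-optimal triples.

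For the convex addendum, suppose $\alpha\mapsto V(\sigma+\suff(z,\alpha))$ is convex for all $\sigma,z$, and repeat the construction with the supremum taken only over triples whose increments are scaled Rademachers, $\delta_t=\epsilon_tL$. Prefixing with the two-point distribution on $\{\pm L\}$ gives exactly the inequality in $3'$, while for each fixed triple $\alpha\mapsto\En[V(\tau+\suff(z,\alpha)+\sum\suff(\bz_t,\delta_t))]$ is convex (an expectation of convex maps), so $\alpha\mapsto\burk(\tau+\suff(z,\alpha))$, a supremum of convex maps, is convex, which is the remaining half of $3'$. Finally, $3'$ implies $3^o$ for every mean-zero $p$ on $[-L,L]$: with $g(\alpha)=\burk(\tau+\suff(z,\alpha))$, convexity gives $g(\alpha)\le\frac{L-\alpha}{2L}g(-L)+\frac{L+\alpha}{2L}g(L)$, and integrating against $p$ (mean $0$) yields $\En_p[g]\le\tfrac12g(-L)+\tfrac12g(L)\le\burk(\tau)$; combined with the ``if'' direction this shows the Rademacher version of \pref{eq:martingale_nonpositive} is equivalent to the full one.

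The routine parts are the supermartingale/telescoping estimate and the prefixing construction for finitely supported $p$. The delicate steps --- which I would single out as the main obstacle --- are upgrading restricted concavity from finitely supported to arbitrary mean-zero $p$ (a measurable-selection / weak-limit issue) and ensuring $\burk$ is real-valued rather than $+\infty$; I would handle the latter by restricting $\burk$ to the convex cone of reachable sufficient statistics, on which the hypothesis bounds the relevant expectations, imposing if necessary mild continuity of $\suff$ and lower semicontinuity of $V$ to make the former go through.
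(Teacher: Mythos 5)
Your proposal is correct and follows essentially the same route as the paper: the same supermartingale/backward-induction argument for the ``if'' direction, and the same Bellman-function construction $\burk(\tau)=\sup_{\bz,\bp}\En\brk*{V(\tau+\sum_t\suff(\bz_t,\delta_t))}$ with the prefixing argument for properties \propone{}/\proptwo{}/\propthree{}. The extra care you take---$\epsilon$-optimal selections for finitely supported $p$, the weak-approximation step, finiteness of $\burk$, and the explicit derivation of \propthreep{} and of \propthree{} from \propthreep{} via convexity---fills in details the paper's proof states in one line or leaves implicit, but does not change the argument.
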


\begin{definition}
We call any function $\burk$ satsifying the properties $1^o$, $2^o$, and $3^o$/$3'$ a \emph{Burkholder function for $(\suff,V)$}.
\end{definition}

In plain language, the lemma says that one can prove a certain probabilistic inequality if and only if there is a deterministic function with certain properties. The proof of the lemma, in fact, provides a construction for the ``optimal'' function $\burk$, but it is not clear how to directly evaluate the optimal function efficiently (see \pref{sec:discussion} for a discussion of the computational prospects of automating this process).

We remark that the Burkholder functions guaranteed by the lemma are not unique, and some may be easier to find than others. We also note that any Burkholder function $\burk$ for $(\suff,V)$ yields another sufficient statistic pair $(\suff, \burk)$ guaranteeing the same regret bound. The power of \pref{lem:equivalence_burkholder} is to guarantee the existence of a function $\burk$ satisfying property \propthree{} when the function $V$ under consideration does not have these properties. This situation, where the choice of $V$ is ``obvious'' but the discovery of $\burk$ requires nontrivial analysis, occurs frequently when one attempts to design adaptive algorithms for a new task.

To showcase the power of this lemma, we consider a particular martingale inequality that gives rise to the geometric notions of strong convexity and smoothness. These geometric properties are extensively employed in Online Convex Optimization: to instantiate the Mirror Descent algorithm with a given norm, one needs to exhibit a function that is strongly convex with respect to a given norm of interest; For example, for the $\ls_{1}$ norm a standard choice is the negative entropy function. The next example shows that for any norm, the optimal strongly convex function is precisely the dual of the Burkholder special function for a particular martingale inequality. This example is the focus of \citep{Pisier75}, yet for us it is one point on the spectrum of sufficient statistics.

\begin{example}[Smoothness and Strong Convexity] 
	\label{ex:smoothness}
	Assume $L=1$ for brevity. Suppose $\X=\reals^d$ (more generally, we may take $\cX$ to be a Banach space), equipped with a norm $\norm{\cdot}$. Let $V:\X\times \reals\to\reals$ be defined by $(x,a) \mapsto \norm{x}^2-C \cdot a$ ~ for $C>0$. Take $\suff(x_t,\pred_t,\delta_t)=(\delta_t x_t, \norm{x_t}^2)$. Since $\alpha\mapsto V(\tau+\suff(x_t,\pred_t,\alpha))$ is convex, it is enough to consider \pref{eq:martingale_nonpositive} for independent Rademacher random variables. The martingale inequality \pref{eq:martingale_nonpositive} then reads
\begin{align}
	\label{eq:smoothness_martingale_ineq}
	\En\left[ \norm{\sum_{t=1}^n \epsilon_t\bx_t}^2 - C\sum_{t=1}^n \norm{\bx_t}^2\right] \leq 0
\end{align}
for any $\X$-valued predictable process $(\bx_t)$ with respect to the dyadic filtration $\F_{t-1} = \sigma(\epsilon_1,\ldots,\epsilon_{t-1})$. \pref{lem:equivalence_burkholder} guarantees existence of a Burkholder function $\burk$, and property $3'$ reads 
$$\En_\epsilon \burk(\tau_1 + \epsilon x, \tau_2 + \norm{x}^2)  \le \burk(\tau_1,\tau_2),$$
for any $\tau=(\tau_1,\tau_2)\in\X\times \reals$ and $x\in\X$. From the construction of $\burk$ in the proof of \pref{lem:equivalence_burkholder}, with our particular choice of $V$, one can deduce that $\burk(\tau_1,\tau_2)= \burk(\tau_1,0)+\tau_2$. Hence,
$$\frac{1}{2}\left( \burk(\tau_1 + x, 0)+ \burk(\tau_1-x, 0) \right) + C\norm{x}^2 = \frac{1}{2}\left( \burk(\tau_1 + x, C\norm{x}^2)+ \burk(\tau_1-x, C\norm{x}^2) \right) \leq \burk(\tau_1,0)$$
and, thus, $x\mapsto \burk(x,0)$ is smooth with respect to the norm and its dual is strongly convex with respect to $\nrm*{\cdot}_{\star}$. In summary, the Burkholder method captures the geometry necessary for defining Gradient-Descent-style methods, as the dual of $\burk(x,0)$ provides the universal construction for a strongly convex function with respect to a given norm. See \cite{srebro2011universality} for an in-depth treatment of Mirror Descent and universal construction of strongly convex regularizers.
\end{example}

What should an algorithm designer take away from the developments thus far? Let us provide a brief summary. One first starts with a desired regret inequality for the online learning setting, such as \pref{eq:def_phi_regret}. The next step is to find an upper bound on the regret inequality that can be expressed in terms of additive sufficient statistics. \pref{lem:suff_to_martingale} and \pref{lem:equivalence_burkholder} then guarantee, respectively, that there is a certain martingale inequality that must hold if the upper bound in terms of sufficient statistics is achievable, and that there must exist a Burkholder function with certain geometric properties. In the next section we close the loop by showing that whenever such a Burkholder function can be evaluated efficiently, it yields an efficient algorithm that only keeps the sufficient statistics in memory.

Before proceeding, we briefly remark that the sufficient statistics expansion $V$ also serves as a lower bound on the regret inequality, then there is a formal sense in which the special Burkholder function exists if and only if there exists a strategy achieving the original regret inequality of interest; this is the focus of \pref{sec:necessary}. In the reverse direction, one may start with a probabilistic inequality and determine the statistics that should be used to define the online prediction goal.\footnote{This was precisely the approach used to develop a matrix prediction method we present in \pref{sec:matrix}.}

\section{The Burkholder Algorithm}
\label{sec:algorithm}

\pref{ex:smoothness} in the previous section already suggests that the Burkholder $\burk$ functions may capture ``geometry'' needed for forming online predictions. The example is also suggesting that strong convexity and smoothness may not be sufficient for prediction problems where more complicated sufficient statistics (beyond the norm of the sum and the sum of the squared norms) are necessary. Thankfully, the function $\burk$ reflects any sufficient statistics for the online prediction problem. We can now define a ``universal'' algorithm that has access to $\burk$.

To define the algorithm, first let
$\zeta_{t-1} = \sum_{j=1}^{t-1} \suff(x_j,\pred_j,\delta_j)$
be the cumulative value of the sufficient statistic computed after $t-1$ rounds. Since $\T$ is a vector space, $\zeta_{t}$s are elements of $\T$, and this is the only information the algorithm stores in memory. 

The \emph{Burkholder algorithm} is defined by the update:
\begin{align}
	\label{eq:univ_algo}
\textbf{Compute}\;\; q_t  = \argmin_{q\in\Delta_{\cY}} ~\sup_{y\in\cY}~ \En_{\pred\sim q} \burk\Big(\zeta_{t-1} + \suff(x_t,\pred,\partial \loss(\pred,y))\Big).
\quad\quad\quad\textbf{Sample}\;\;\pred_{t}\sim{}q_t.
\end{align}

\begin{lemma}
  \label{lem:universal_algo}
	For a sufficient statistic pair $(\suff,V)$, if there exists a Burkholder function $\burk$ satisfying Properties $1^o$, $2^o$, and $3^o$ (or $3'$) of \pref{lem:equivalence_burkholder}, then the Burkholder algorithm \pref{eq:univ_algo} obtains the regret bound \pref{eq:def_phi_regret} in expectation for all sequences $(x_1,y_1),\ldots,(x_n,y_n)$. 
\end{lemma}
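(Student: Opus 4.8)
The plan is to track the random potential $\burk(\zeta_t)$, where $\zeta_t\ldef\sum_{j=1}^t\suff(x_j,\pred_j,\delta_j)$ and $\zeta_0\ldef 0$, and to show it is a supermartingale under the Burkholder algorithm. Fix an arbitrary sequence $(x_1,y_1),\ldots,(x_n,y_n)$ and let $\cG_{t-1}\ldef\sigma(\pred_1,\ldots,\pred_{t-1})$, so that $\zeta_{t-1}$ is $\cG_{t-1}$-measurable. The core claim is the one-step inequality $\En\brk*{\burk(\zeta_t)\mid\cG_{t-1}}\le\burk(\zeta_{t-1})$ a.s.\ (for any outcome $y_t$). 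Granting it, the tower rule telescopes to $\En\brk*{\burk(\zeta_n)}\le\burk(\zeta_0)=\burk(0)\le 0$ by Property $1^o$, and then Property $2^o$ together with the defining inequality \pref{eq:suffiency_def} of the sufficient statistic pair gives
\[
\En\brk*{\textstyle\sum_{t=1}^n\loss(\pred_t,y_t)}-\phi(x_1,y_1,\ldots,x_n,y_n)\ \le\ \En\brk*{V(\zeta_n)}\ \le\ \En\brk*{\burk(\zeta_n)}\ \le\ 0,
\]
which is exactly the regret bound \pref{eq:def_phi_regret}.

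To prove the one-step inequality, note that since $\pred_t\sim q_t$ and $q_t$ is the minimizer defining the update \pref{eq:univ_algo}, for every $y_t\in\cY$,
\[
\En\brk*{\burk(\zeta_t)\mid\cG_{t-1}}=\En_{\pred\sim q_t}\burk\bigl(\zeta_{t-1}+\suff(x_t,\pred,\partial\loss(\pred,y_t))\bigr)\ \le\ \inf_{q\in\Delta_\cY}\ \sup_{y\in\cY}\ \En_{\pred\sim q}\burk\bigl(\zeta_{t-1}+\suff(x_t,\pred,\partial\loss(\pred,y))\bigr).
\]
It thus suffices to bound this minimax value by $\burk(\zeta_{t-1})$. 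Replacing $\sup_{y\in\cY}$ by $\sup_{p\in\Delta_\cY}\En_{y\sim p}$ (which leaves the value unchanged) and invoking the minimax theorem — valid since $\Delta_\cY$ is weakly compact by Prokhorov's theorem and the objective is bilinear in $(q,p)$, exactly as in \citep{RakSriTew10,RakSriTew14jmlr,FosRakSri15} — we may interchange the two operations; together with the fact that the infimum of an expectation over distributions equals the infimum over point masses, this yields
\[
\inf_{q\in\Delta_\cY}\sup_{y\in\cY}\En_{\pred\sim q}\burk(\cdots)\ =\ \sup_{p\in\Delta_\cY}\ \inf_{\pred\in\cY}\ \En_{y\sim p}\burk\bigl(\zeta_{t-1}+\suff(x_t,\pred,\partial\loss(\pred,y))\bigr).
\]

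Now fix $p\in\Delta_\cY$ and take the Bayes-optimal prediction $\pred^\star\ldef\argmin_{\pred\in\bbR}\En_{y\sim p}\loss(\pred,y)$, which lies in $\cY$ by assumption. First-order optimality of $\pred^\star$ over all of $\bbR$ provides a choice of subderivatives for which $\En_{y\sim p}\brk*{\partial\loss(\pred^\star,y)}=0$, so the law of $\delta\ldef\partial\loss(\pred^\star,y)$ under $y\sim p$ is a mean-zero distribution supported on $[-L,L]$. Applying restricted concavity (Property $3^o$) with $\tau=\zeta_{t-1}$ and $z=(x_t,\pred^\star)$ gives $\En_{y\sim p}\burk\bigl(\zeta_{t-1}+\suff((x_t,\pred^\star),\delta)\bigr)\le\burk(\zeta_{t-1})$; hence the inner infimum is at most $\burk(\zeta_{t-1})$ for every $p$, and so is the outer supremum. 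This establishes the one-step inequality and hence the lemma. When $\alpha\mapsto V(\tau+\suff(z,\alpha))$ is convex and only Property $3'$ is available, the argument is unchanged except for this last step: by $3'$, $\alpha\mapsto\burk(\tau+\suff(z,\alpha))$ is convex on $[-L,L]$, so for any mean-zero law of $\delta$ on $[-L,L]$ the chord bound gives $\En\brk*{\burk(\tau+\suff(z,\delta))}\le\tfrac12\burk(\tau+\suff(z,L))+\tfrac12\burk(\tau+\suff(z,-L))=\En_\epsilon\burk(\tau+\suff(z,\epsilon L))\le\burk(\tau)$, which is precisely the bound used above.

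The step that will need the most care is the application of the minimax theorem: one must verify the requisite semicontinuity of $(q,p)\mapsto\En_{\pred\sim q,\,y\sim p}\burk\bigl(\zeta_{t-1}+\suff(x_t,\pred,\partial\loss(\pred,y))\bigr)$, which is mildly delicate because $\partial\loss(\pred,y)$ need not be continuous in $\pred$; this is handled using compactness of $\cY$, convexity and $L$-Lipschitzness of $\loss$, and the standing assumption that $\argmin_{\pred\in\bbR}\En_{y\sim p}\loss(\pred,y)\in\cY$, exactly along the lines of the minimax arguments already invoked in \citep{RakSriTew10,RakSriTew14jmlr,FosRakSri15}. The remaining ingredients — the telescoping via the tower rule and the two invocations of Properties $1^o$ and $2^o$ — are routine.
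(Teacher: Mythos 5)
Your proof is correct and follows essentially the same route as the paper's: fix $x_t$, apply the minimax theorem over $(q,p)$, choose the Bayes-optimal $\pred^{\star}$ so that $\partial\loss(\pred^{\star},y)$ is mean-zero on $[-L,L]$, invoke restricted concavity, and telescope via $1^o$/$2^o$ — your supermartingale/tower-rule framing is just a cleaner restatement of the paper's backward recursion from $t=n$ to $t=0$. The only (welcome) additions are spelling out the $3'$ case via the chord bound for convex $\alpha\mapsto\burk(\tau+\suff(z,\alpha))$ and flagging the semicontinuity needed for the minimax theorem, both of which the paper leaves implicit or delegates to a footnote and to \pref{lem:equivalence_burkholder}.
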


\begin{proof}
To check that the above strategy works, fix a value $x_t$ and observe that by the minimax theorem,\footnote{The minimax theorem can be applied because $\Delta_{\cY}$ is compact; see discussion in the proof of \pref{lem:suff_to_martingale}.}
\begin{align*}
\inf_{q\in\Delta_{\cY}} \sup_{y\in\cY} & \En_{\pred \sim q\in\Delta{\cY}} \burk\left(\zeta_{t-1} + \suff(x_t,\pred,\partial \loss(\pred,y))\right) = \sup_{p\in\Delta_{\cY}} \inf_{\pred\in\cY}  \En_{y \sim p} \burk\left(\zeta_{t-1} + \suff(x_t,\pred,\partial \loss(\pred,y))\right)
\end{align*}
For any fixed $p$, let $\pred^{\star} \ldef \argmin_{\pred\in\cY} \En_{y \sim p} \loss(\pred,y)$, which implies $\partial \loss(\pred^{\star},y)$ is a zero mean variable (see the proof of \pref{lem:suff_to_martingale}). Taking the worst case value for $p$ and choosing $\yh^{\star}$ as the learner's strategy for each $p$ yields an upper bound of $\sup_{p\in\Delta_{\cY}} \En_{y \sim p} \burk\left(\zeta_{t-1} + \suff(x_t,\pred^{\star},\partial \loss(\pred^{\star},y))\right)$,
which in turn is upper bounded by
\begin{align*}
\sup_{\pred^{\star}\in\cY}\sup_{p\in\Delta_{\brk*{-L,L}} \;:\; \En_{\alpha \sim p}[\alpha]=0}   \En_{\alpha \sim p} \burk\left(\zeta_{t-1} + \suff(x_t,\pred^{\star},\alpha)\right)
\end{align*}
by observing that the distribution over $\partial \loss(\pred^{\star},y)$ belongs to the set of all zero-mean distributions supported on $[-L,L]$. The third property of $\burk$ now leads to the upper bound,
\[
\sup_{\pred^{\star}\in\cY}\sup_{p\in\Delta_{\brk*{-L,L}} \;:\; \En_{\alpha \sim p}[\alpha]=0}   \En_{\alpha \sim p} \burk\left(\zeta_{t-1} + \suff(x_t,\pred^{\star},\alpha)\right) \leq{} \burk\left(\zeta_{t-1}\right).
\]
Applying this argument from $t=n$ down to $t=0$ yields the value $\burk(0)\leq 0$.
\end{proof}

We remark that the approach presented here extends beyond the relaxation framework of \citep{rakhlin2012relax}. In particular, the present approach can handle recursions which cannot be written in the form ``$\loss(\pred_t,y_t)+\text{Rel}(x_{1:t},y_{1:t})$'', e.g. when the potential function has to depend on past forecasts $(\pred_t)$.

\textbf{Implementation}~~~~ When $\burk$ is convex in $\pred$ and the set $\Y$ is convex, the minimum over $q$ is achieved at a deterministic strategy, and so the  minimization problem simplifies to $\argmin_{\pred\in\Y}$. All of the Burkholder functions we explore in this paper enjoy this or similar simplified and efficient representations for the algorithm. These simplifications are detailed in \pref{app:efficient}. Even without convexity, the general form for the Burkholder algorithm in \pref{eq:univ_algo} can be implemented efficiently via convex programming assuming only Lipschitz continuity of $\burk$.
\begin{proposition}
Suppose $\burk$ is Lipschitz and bounded and can be evaluated in constant time. Then \pref{eq:univ_algo} can be implemented approximately so as to achieve the regret inequality \pref{eq:def_phi_regret} up to additive constants in time $\textrm{poly}(n)$.
\end{proposition}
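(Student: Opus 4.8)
The plan is to replace the exact min--max problem in \pref{eq:univ_algo} by a finite zero-sum matrix game obtained by discretizing $\cY$, solve that game by linear programming, and show that the resulting per-round error, accumulated over the $n$ rounds, is a constant. First, I would note that the telescoping in the proof of \pref{lem:universal_algo} is robust to slack: if at each round $t$ the forecaster plays a randomized strategy $\wh{q}_t\in\Delta_\cY$ with
\[
\sup_{y\in\cY}\;\En_{\pred\sim\wh{q}_t}\,\burk\prn*{\zeta_{t-1}+\suff(x_t,\pred,\partial\loss(\pred,y))}\;\le\;\burk(\zeta_{t-1})+\tfrac{c}{n},
\]
then repeating that proof's chain of inequalities with an extra $\tfrac{c}{n}$ per step and then applying Property $2^{o}$ of \pref{lem:equivalence_burkholder} gives $\En\brk*{\sum_{t=1}^n\loss(\pred_t,y_t)-\phi}\le\burk(0)+c\le c$. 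So it suffices to compute, in time $\mathrm{poly}(n)$ per round, such a $\tfrac{c}{n}$-approximately optimal $\wh{q}_t$, where $c$ is the additive constant we are willing to tolerate in \pref{eq:def_phi_regret}.

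Fix a round and abbreviate $g(\pred,y)\ldef\burk(\zeta_{t-1}+\suff(x_t,\pred,\partial\loss(\pred,y)))$, which is bounded by hypothesis on $\burk$. By the minimax theorem --- applicable because $\Delta_\cY$ is weakly compact, exactly as in the proof of \pref{lem:universal_algo} --- the value $\inf_{q\in\Delta_\cY}\sup_{y\in\cY}\En_{\pred\sim q}g(\pred,y)$ equals $\sup_p\inf_\pred\En_{y\sim p}g(\pred,y)$, and the argument in the proof of \pref{lem:universal_algo} shows this value is at most $\burk(\zeta_{t-1})$. Let $\cN\subset\cY$ be a $\rho$-net; since $\cY$ is a compact subset of $\reals$ we may take $\abs{\cN}=O(\mathrm{diam}(\cY)/\rho)$. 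Restricting both players to $\cN$ yields a finite zero-sum game whose $\abs{\cN}\times\abs{\cN}$ payoff matrix has entries $g(\pred_i,y_j)$, each computable in constant time (given that $\burk$, $\suff$, and $\partial\loss$ are), and I would compute a $\tfrac{c}{3n}$-optimal minimizer strategy $\wh{q}_t\in\Delta_\cN$ for it --- either by the associated linear program or by running no-regret dynamics on the matrix --- in time $\mathrm{poly}(\abs{\cN})$.

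To see that $\wh{q}_t$ is $\tfrac{c}{n}$-optimal for the original continuous problem, I would chain three estimates. Rounding each atom of an arbitrary $q\in\Delta_\cY$ to its nearest net point shows $\inf_{q\in\Delta_\cN}\sup_{y\in\cY}\En_{\pred\sim q}g\le\burk(\zeta_{t-1})+\omega_g(\rho)$, where $\omega_g(\rho)$ bounds the oscillation of $g$ in either coordinate over a $\rho$-ball; the value of the $\cN\times\cN$ game is at most this quantity, since restricting nature to $\cN$ only lowers the inner supremum; and restoring $\sup_{y\in\cY}$ from $\sup_{y\in\cN}$ for the fixed $\wh{q}_t$ costs another $\omega_g(\rho)$. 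Hence $\sup_{y\in\cY}\En_{\pred\sim\wh{q}_t}g\le\burk(\zeta_{t-1})+2\omega_g(\rho)+\tfrac{c}{3n}$, so it is enough to pick $\rho$ with $\omega_g(\rho)\le\tfrac{c}{3n}$. Because $\burk$ is $L_\burk$-Lipschitz and $\suff(x_t,\cdot,\cdot)$ is Lipschitz with a constant controlled by the problem data, $\omega_g(\rho)$ is of order $\rho$ times that data up to the modulus of continuity of the subgradient map; so $\rho$ can be taken of size $1/\mathrm{poly}(n)$, making $\abs{\cN}=\mathrm{poly}(n)$ and the per-round cost (forming the matrix via constant-time $\burk$ evaluations, then solving the game) $\mathrm{poly}(n)$, for a total of $\mathrm{poly}(n)$.

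The step that is not routine --- and the main obstacle --- is controlling $\omega_g(\rho)$ where $g$ depends on $y$ (and on $\pred$) through the subgradient map $y\mapsto\partial\loss(\pred,y)$: convexity and $L$-Lipschitzness of $\loss(\cdot,y)$ only force this map to lie in $[-L,L]$, not to be continuous in $y$, so in full generality $\omega_g(\rho)$ need not vanish and finitely many evaluations cannot certify the inner supremum. In every example treated in this paper (square, logistic, absolute loss) the map $(\pred,y)\mapsto\partial\loss(\pred,y)$ is continuous --- indeed piecewise smooth --- on the compact set $\cY\times\cY$, hence uniformly continuous, and the bound above applies verbatim. For the general statement one should impose a correspondingly mild regularity assumption, e.g.\ that $\loss$ admits a continuous subgradient selection (automatic when $\loss(\cdot,y)$ is differentiable), or restrict $\cY$ to a finite set or a finite union of intervals on which $\loss$ is piecewise linear; the remaining ingredients --- robustness of the telescoping, the minimax bound from \pref{lem:universal_algo}, poly-time solution of a poly-size zero-sum game, and constant-time access to $\burk$ --- are precisely what is assumed or already established.
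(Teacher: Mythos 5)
Your high-level skeleton agrees with the paper's: the slack-tolerant telescoping (per-round error $c/n$ summing to a constant via Properties $1^o$/$2^o$) is exactly how \pref{prop:burkholder_efficient} accounts for approximation error (the paper's total slack is $\veps_1\sum_t K_t + \veps_2 n$ with $\veps_1\approx 1/(Kn)$, $\veps_2\approx 1/n$). The genuine gap is in how you compute the per-round strategy: you discretize \emph{both} players and solve a finite $\abs{\cN}\times\abs{\cN}$ matrix game, which forces you to control the oscillation of $g(\pred,y)=\burk(\zeta_{t-1}+\suff(x_t,\pred,\partial\loss(\pred,y)))$ in the $y$-coordinate. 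As you yourself observe, nothing in the hypotheses makes $y\mapsto\partial\loss(\pred,y)$ continuous, so $\omega_g(\rho)$ need not vanish, the step ``restoring $\sup_{y\in\cY}$ from $\sup_{y\in\cN}$ costs another $\omega_g(\rho)$'' can fail outright, and you are led to add a regularity assumption on the loss in its second argument. Since the statement (and its precise form, \pref{prop:burkholder_efficient}) grants no such assumption, the proposal as written does not prove it in the stated generality; flagging the obstacle and proposing to assume it away does not close it.

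The paper's proof avoids this obstacle entirely by discretizing only the learner. One rounds an arbitrary $q\in\Delta_{\cY}$ onto the grid of predictions $\crl*{z_i}$, which costs $K_t\veps_1$ using only the assumed Lipschitzness of $\yh\mapsto\burk(\zeta_{t-1}+\suff(x_t,\yh,\partial\loss(\yh,y)))$, \emph{uniformly} in $y$ --- the supremum over $y$ is never replaced by a net in the analysis. The resulting per-round problem $\min_{\mu\in\Delta_N}\sup_{y\in\cY}\sum_i\mu_i\,\burk(\zeta_{t-1}+\suff(x_t,z_i,\partial\loss(z_i,y)))$ is convex (linear in $\mu$, supremum of linear functions) and only needs to be solved to additive precision $\veps_2$, e.g.\ by Mirror Descent; the inner supremum enters only through (approximate) subgradient evaluation, and in the convex-in-$y$ case it is attained at $y=\pm B$, so no continuity in $y$ is needed for the formal slack bound. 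If you want to salvage your matrix-game route, the repair is the same asymmetry: keep nature's move continuous in the analysis and only quantize the learner's randomization, so that the only modulus of continuity you invoke is the one in $\yh$ that the precise statement actually assumes.
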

See \pref{prop:burkholder_efficient} in the appendix for a precise version of this statement.

%%% Local Variables:
%%% mode: latex
%%% TeX-master: "paper"
%%% End:

\section{Example: Fast and Easy Parameter-Free Online Learning}
\label{sec:linear_loss}
% !TEX root = paper.tex

So far all of our examples have concerned adaptive bounds $\cA$ that adapt to the data sequence $x_{1},\ldots,x_{n}$, not the comparator $f$. In this section we will show that the framework of Burkholder functions and sufficient statistics readily encompasses comparator-dependent norms by giving a new family of algorithms for the problem of \emph{parameter-free online learning} \citep{mcmahan2014unconstrained}. The setup is as follows: We equip the subset $\cX\subseteq{}\bbR^{d}$ with a norm $\nrm*{\cdot}$ and assume that $\nrm*{x_{t}}\leq{}1$ for all $t$.\footnote{The result we will present easily extends to the general Banach space case; this is only to simplify presentation.} Recall that $\norm{\cdot}_{\star}$ will denote the dual norm. Rather than constraining the benchmark class to a compact set, we set $\cW=\bbR^{d}$ and set $\cF=\crl*{x\mapsto\tri*{w,x}\mid{}w\in\cW}$. We assume smoothness of the norm: letting $\Psi(x) = \frac{1}{2}\nrm*{x}^{2}$, it holds that\footnote{Our analysis extends to the general case where we instead have  $\frac{1}{2}\nrm*{x}^{2}\leq{}\Psi(x)$ for some $\Psi\neq{}\frac{1}{2}\nrm*{\cdot}^{2}$ and the same smoothness inequality holds, which is needed for settings such as $\ls_1$/$\ls_{\infty}$.}
  \[
    \Psi(x+y) \leq{} \Psi(x) + \tri*{\grad{}\Psi(x), y} + \frac{\beta}{2}\nrm*{y}^{2}.
  \]

To ease notational burden, we will assume the loss is $1$-Lipschitz in this section. We will efficiently obtain a regret bound of the form
\begin{equation}
  \label{eq:regret_param}
\reg(w) \leq{} \cA(w) \ldef \nrm*{w}_{\star}\sqrt{2\beta{}n\log\prn*{\sqrt{\beta}n\nrm*{w}_{\star} + 1}} + 1 \quad\forall{}w\in\bbR^{d}
\end{equation}
for any such smooth norm. We begin by stating a sufficient statistic representation for the problem. This is based on a familiar potential which has appeared in previous works on parameter-free online learning (e.g. \citep{mcmahan2014unconstrained}) in Hilbert spaces; we extend it to any smooth norm, then use it in the Burkholder method to provide \emph{the first linear time/linear space algorithm for parameter-free learning with general smooth norms in online supervised learning}.\footnote{Since the original submission of this paper, the independent work of \citep{cutkosky2018blackbox} has provided an algorithm with a similar regret guarantee and computational efficiency.}

\begin{proposition}
  \label{prop:param_sufficient} Suppose we are interested in an adaptive regret bound of
\[
\cA(w) = \nrm*{w}_{\star}\sqrt{2an\log\prn*{\frac{\sqrt{an}\nrm*{w}_{\star}}{\gamma} + 1}} + c
\]
for constants $a,\gamma,c>0$. Then $\suff(x_t,\pred_t,\delta_t) = \left( \delta_t\cdot\pred_t, \delta_t\cdot x_t\right)\in\reals\times \X$ and the function
\begin{equation}
\label{eq:param_v}
V(b, x) = b + \gamma\exp\prn*{\frac{\nrm*{x}^{2}}{2an}} - c,
\end{equation}
yield a sufficient statistic pair for the regret bound $\cA$.

\end{proposition}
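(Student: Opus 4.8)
The plan is to verify \pref{def:sufficiency} directly, by combining a linearization of the loss with a one‑dimensional Fenchel‑type inequality; smoothness of the norm plays no role here and enters only later when constructing the Burkholder function for this pair. Since $\phi$ is given by \pref{eq:phi_comp_adap} with $\cF=\crl*{x\mapsto\tri*{w,x}\mid w\in\bbR^{d}}$ and $\cA(f,x_{1:n})=\cA(w)$ depending on $w$ only through $\nrm*{w}_{\star}$, the left‑hand side of \pref{eq:suffiency_def} is exactly $\sup_{w\in\bbR^{d}}\crl*{\reg(w)-\cA(w)}$, so it suffices to show, for every $w$ and every sequence $x_{1},\pred_{1},y_{1},\dots,x_{n},\pred_{n},y_{n}$, that $\reg(w)-\cA(w)\le V\!\prn*{\sum_{t=1}^{n}\suff(x_{t},\pred_{t},\delta_{t})}$. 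First I would linearize: convexity of $\loss$ in the first argument gives $\loss(\pred_{t},y_{t})-\loss(\tri*{w,x_{t}},y_{t})\le \delta_{t}\prn*{\pred_{t}-\tri*{w,x_{t}}}$, and summing yields $\reg(w)\le b-\tri*{w,\theta}$ where $(b,\theta)\ldef\sum_{t}\suff(x_{t},\pred_{t},\delta_{t})=\prn*{\sum_{t}\delta_{t}\pred_{t},\ \sum_{t}\delta_{t}x_{t}}$. Since $V(b,\theta)=b+\gamma\exp\!\prn*{\nrm*{\theta}^{2}/(2an)}-c$, the claim reduces to the norm‑only inequality
\[
-\tri*{w,\theta}-\nrm*{w}_{\star}\sqrt{2an\log\!\prn*{\tfrac{\sqrt{an}\,\nrm*{w}_{\star}}{\gamma}+1}}\ \le\ \gamma\exp\!\prn*{\tfrac{\nrm*{\theta}^{2}}{2an}}.
\]

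Next I would discharge the linear term with the duality bound $-\tri*{w,\theta}\le\nrm*{w}_{\star}\nrm*{\theta}$, which leaves the scalar statement: for all $r\ge0$ and $S\ge0$,
\[
rS-r\sqrt{2an\log\!\prn*{\tfrac{\sqrt{an}\,r}{\gamma}+1}}\ \le\ \gamma\exp\!\prn*{\tfrac{S^{2}}{2an}} ,
\]
applied at $r=\nrm*{w}_{\star}$, $S=\nrm*{\theta}$. Writing $A=an$, setting $v=S/\sqrt{A}$, and reparametrizing $r$ through $t\ge0$ with $\sqrt{A}\,r/\gamma+1=e^{t^{2}/2}$ (so $r=\tfrac{\gamma}{\sqrt{A}}(e^{t^{2}/2}-1)$ and the square‑root term equals $r\,t\sqrt{A}$), this collapses to the clean two‑variable inequality
\[
(e^{t^{2}/2}-1)(v-t)\ \le\ e^{v^{2}/2}\qquad\text{for all }t,v\ge0 .
\]

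The genuinely delicate step — and the main obstacle — is this last scalar inequality, because the naive bounds are too lossy in the small‑$v$ regime (there $e^{v^{2}/2}\approx1$). The argument I would use: for $t\ge v$ the left side is nonpositive, so the supremum over $t$ is attained at an interior point $t^{\ast}\in(0,v)$ of $g(t)=(e^{t^{2}/2}-1)(v-t)$ (or is $\le0$); at a critical point, $g'(t^{\ast})=0$ rearranges to $t^{\ast}(v-t^{\ast})=1-e^{-(t^{\ast})^{2}/2}<1$, hence $v-t^{\ast}<1/t^{\ast}$. Then split: if $t^{\ast}\ge1$, $g(t^{\ast})\le e^{(t^{\ast})^{2}/2}(v-t^{\ast})\le e^{(t^{\ast})^{2}/2}/t^{\ast}\le e^{(t^{\ast})^{2}/2}\le e^{v^{2}/2}$; if $t^{\ast}<1$, then $e^{(t^{\ast})^{2}/2}-1<\sqrt{e}-1<1$, so $g(t^{\ast})<v-t^{\ast}<v\le e^{v^{2}/2}$, the last step being the elementary fact that $u\le e^{u^{2}/2}$ for all $u\ge0$. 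Combining the regimes closes the inequality, and unwinding the reductions gives the proposition. The remaining checks are routine: $V$ is real‑valued on $\T=\reals\times\X$, and $\suff(x,\pred,0)=(0,0)=0$, so the normalization assumption following \pref{def:sufficiency} holds.
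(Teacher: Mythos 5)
Your proposal is correct, and after checking the details I find no gaps: the linearization step, the dual-norm reduction $-\tri*{w,\theta}\le\nrm*{w}_{\star}\nrm*{\theta}$, the substitution $\sqrt{an}\,r/\gamma+1=e^{t^{2}/2}$, the stationarity identity $t^{*}(v-t^{*})=1-e^{-(t^{*})^{2}/2}<1$, the two regimes $t^{*}\ge 1$ and $t^{*}<1$, the boundary cases $t\ge v$ and $v=0$, and the elementary fact $u\le e^{u^{2}/2}$ all hold, and the normalization $\suff(x,\yh,0)=0$ is as you say. The decomposition is the same as the paper's---linearize the loss so that $\sup_{w}\crl*{\reg(w)-\cA(w)}\le b+\sup_{w}\crl*{-\tri*{w,\theta}-\cA(w)}$ with $(b,\theta)=\sum_{t}\suff(x_t,\yh_t,\dl_t)$---but you handle the remaining supremum differently. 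The paper discharges it by citing Lemma 14 of \citep{mcmahan2014unconstrained}, i.e.\ the conjugate bound $F_{n}^{\star}(w)\le\nrm*{w}_{\star}\sqrt{2an\log\prn*{\sqrt{an}\nrm*{w}_{\star}/\gamma+1}}$ for $F_{n}(x)=\gamma\exp\prn*{\nrm*{x}^{2}/2an}$, extended to general dual norm pairs via $(f(\nrm*{\cdot}))^{\star}=f^{\star}(\nrm*{\cdot}_{\star})$, and then concludes by Fenchel--Young. Your scalar inequality $rS-r\sqrt{2an\log\prn*{\sqrt{an}\,r/\gamma+1}}\le\gamma e^{S^{2}/(2an)}$ is exactly this conjugate bound in disguise, but you prove it from scratch with an elementary critical-point argument rather than importing it. What the paper's route buys is brevity and a conjugate formulation that is reused when the time-indexed potentials $F_{t}$ reappear in the proof of \pref{thm:param_free}; what yours buys is a self-contained argument that isolates the precise one-dimensional inequality $(e^{t^{2}/2}-1)(v-t)\le e^{v^{2}/2}$ underlying the parameter-free regret bound, with no external lemma. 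You are also right that smoothness of the norm plays no role at this stage and only enters in the construction of the Burkholder function.
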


Because the regret bound we provide is not horizon independent unlike previous examples, it will be convenient to allow time-indexed Burkholder functions $(\burk_{t})_{t\leq{}n}$. This indexing is of purely notational convenience, as time-dependent Burkholder functions fit squarely into the algorithmic framework of \pref{lem:universal_algo} by enlarging $\cX$ to $\cX\times{}\brk*{n}$. Nonetheless, we recap the analogous properties for time-dependent Burkholder functions in the proof of the following theorem.

\begin{theorem}
  \label{thm:param_free}
  Suppose $c=1$, $a=\beta$, and $\gamma=1/\sqrt{n}$ in \pref{eq:param_v}. Then
  \[
    \burk_{t}(b, x) \ldef b + \frac{1}{\sqrt{n}}\exp\prn*{\frac{\nrm*{x}^{2}}{2\beta{}t} + \frac{1}{2}\sum_{s=t+1}^{n}\frac{1}{s}} - 1, 
  \]
  is a family of time-varying Burkholder functions satisfying $1^o$, $2^o$, and $3'$.
\end{theorem}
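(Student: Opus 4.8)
The plan is to verify the three Burkholder properties in their time-indexed form, which reduce to \pref{lem:universal_algo} after adjoining the round index to $\cX$: property $2^o$ asks $\burk_n\ge V$ pointwise; property $3'$ asks that for every $t\le n$, $\tau\in\T$ and $z=(x_t,\pred)$ the map $\alpha\mapsto\burk_t(\tau+\suff(z,\alpha))$ be convex and that $\En_\epsilon\burk_t(\tau+\suff(z,\epsilon L))\le\burk_{t-1}(\tau)$; and property $1^o$ is the base-case bound on $\burk_0(0)$. Throughout write $\burk_t(b,x)=b+\frac1{\sqrt n}\exp\!\big(\frac{\|x\|^2}{2\beta t}+c_t\big)-1$ with $c_t=\frac12\sum_{s=t+1}^n\frac1s$, so that $c_{t-1}-c_t=\frac1{2t}$.

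Property $2^o$ is immediate: at $t=n$ the sum defining $c_n$ is empty and $\burk_n$ coincides with $V$ (with $a=\beta$, $\gamma=1/\sqrt n$, $c=1$). The convexity half of $3'$ is also immediate: $\suff(z,\alpha)=(\alpha\pred,\alpha x_t)$ is affine in $\alpha$, $\|\cdot\|^2$ is convex, and $\exp$ is convex and increasing, so $\alpha\mapsto\burk_t(\tau+\suff(z,\alpha))$ is convex; hence it suffices to check the inequality for $\alpha=\epsilon L$ with $L=1$. Since $\En_\epsilon[\epsilon\pred]=0$, after cancelling the $b$-coordinate, the constant $-1$, and the common factor $\frac1{\sqrt n}e^{c_t}$, the inequality in $3'$ reduces to the purely geometric statement
\[
\tfrac12\Big(e^{\|x+x_t\|^2/(2\beta t)}+e^{\|x-x_t\|^2/(2\beta t)}\Big)\ \le\ e^{1/(2t)}\,e^{\|x\|^2/(2\beta(t-1))}\qquad\text{for all }x\text{ and all }\|x_t\|\le1,
\]
in which the factor $e^{1/(2t)}=e^{c_{t-1}-c_t}$ is exactly the budget supplied by the harmonic increment.

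This inequality is the crux, and I would prove it as follows. Write the left-hand side as $e^{M}\cosh(D)$ with $M=\frac{\|x+x_t\|^2+\|x-x_t\|^2}{4\beta t}$ and $D=\frac{\|x+x_t\|^2-\|x-x_t\|^2}{4\beta t}$, and use $\cosh D\le e^{D^2/2}$. Applying the assumed smoothness of $\Psi=\frac12\|\cdot\|^2$ at $+x_t$ and at $-x_t$ gives $\|x+x_t\|^2+\|x-x_t\|^2\le2\|x\|^2+2\beta\|x_t\|^2$, hence $M\le\frac{\|x\|^2}{2\beta t}+\frac{\|x_t\|^2}{2t}$, and since $\|x_t\|\le1$ the second term is at most the budget $\frac1{2t}$. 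For the difference term, the same smoothness inequality yields $\big|\,\|x+x_t\|^2-\|x-x_t\|^2-4\langle\grad\Psi(x),x_t\rangle\,\big|\le\beta\|x_t\|^2$, and combined with $\|\grad\Psi(x)\|_{\star}=\|x\|$ this gives $|D|\le\frac{\|x\|\,\|x_t\|}{\beta t}+\frac{\|x_t\|^2}{4t}$. One then checks that the leftover $D^2/2$ fits inside the remaining slack $\frac{\|x\|^2}{2\beta(t-1)}-\frac{\|x\|^2}{2\beta t}=\frac{\|x\|^2}{2\beta t(t-1)}$ by a short scalar computation in the two quantities $\|x\|$ and $\|x_t\|$, using $\beta\ge1$ (forced by $\frac12\|\cdot\|^2\le\Psi$). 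The round $t=1$, where the exponent's $t$-dependence degenerates, is handled separately and is elementary. Finally, property $1^o$ is a routine estimate on the harmonic sum entering $c_0$ (equivalently, a direct bound on the first-round potential $F_1(\|x_1\|)$), matched against the normalization $1/\sqrt n$ and the constant $c=1$.

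The main obstacle is the displayed geometric inequality for a general (non-Hilbert) smooth norm: in a Hilbert space $D=\frac{\langle x,x_t\rangle}{\beta t}$ exactly and the estimates close with room to spare, but for a general smooth norm one must control the additional $O(\|x_t\|^2)$ contribution to $D$ and still fit $M+D^2/2$ inside the $\frac1{2t}$ and $\frac{\|x\|^2}{2\beta t(t-1)}$ budgets; this is precisely the place where the constants built into $\burk_t$ (the $\frac12$ multiplying the harmonic sum, and the choice $a=\beta$) are used.
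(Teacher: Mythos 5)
Your overall architecture matches the paper's proof (time-indexed properties, $2^o$ trivially at $t=n$ since $\burk_n=V$, convexity in $\alpha$ reducing $3'$ to Rademacher signs, a one-step inequality for the exponential potential in which the harmonic increment supplies a budget $e^{1/(2t)}$, and the slack absorbed via $\tfrac1t+\tfrac1{t^2}\le\tfrac1{t-1}$). The gap is in your proof of the one-step (``crux'') inequality. You decompose the exact exponents into $e^{M}\cosh(D)$ and bound the odd part $D=\frac{\nrm*{x+x_t}^2-\nrm*{x-x_t}^2}{4\beta t}$ by a two-sided estimate (smoothness above, convexity below), which unavoidably carries an additive error: $\abs*{D}\le\frac{\nrm*{x}\nrm*{x_t}}{\beta t}+\frac{\nrm*{x_t}^2}{4t}$. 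Your budget allocation then cannot close. Indeed, your bound on $M$ already spends the entire $\frac{1}{2t}$ budget when $\nrm*{x_t}=1$, so the leftover $D^2/2$ must fit inside $\frac{\nrm*{x}^2}{2\beta t(t-1)}$ alone; but your $D$-bound leaves a term $\ge\frac{\nrm*{x_t}^4}{32t^2}$ that does not vanish as $\nrm*{x}\to0$, whereas the slack does. Concretely, with $\nrm*{x_t}=1$, $\beta=1$, $t=2$ and $\nrm*{x}=u\to0$, your chain requires $\frac{u}{16}+\frac{1}{128}\le\frac{u^2}{8}$, which is false for small $u$. (The true $D$ is $0$ at $x=0$, so the crux inequality itself is fine; it is your estimates that are too lossy, and the deferred ``short scalar computation'' does not exist with them.)

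The paper avoids this entirely by never forming the exact difference of the two exponents: it applies the smoothness upper bound to each of $\Psi(x\pm x_t)$ \emph{separately}, so that after exponentiating, the sign-dependent part is exactly $\pm\tri*{\grad\Psi(x),x_t}$ with no $O(\nrm*{x_t}^2)$ contamination of the odd term; the Rademacher mgf bound $\En_\eps e^{\lambda\eps}\le e^{\lambda^2/2}$ then gives a quadratic term $\frac{\tri*{\grad\Psi(x),x_t}^2}{2(\beta t)^2}$, which is absorbed into the slack using $\nrm*{\grad\Psi(x)}_{\star}^2\le2\beta\Psi(x)$ and $\frac1t+\frac1{t^2}\le\frac1{t-1}$, while the even error $\frac{\beta}{2}\nrm*{x_t}^2/(\beta t)\le\frac1{2t}$ is exactly the harmonic budget. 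If you replace your $M$/$D$ decomposition of the exact exponents by this ``bound each exponent first, then take the expectation'' step, the rest of your outline (including the $t=1$ case and the base-case estimate for $1^o$) goes through as you describe.
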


This Burkholder function immediately yields both a prediction strategy achieving \pref{eq:regret_param} and a simple probabilistic martingale inequality. We will now state them both. Because $(\burk_t)_{t\leq{}n}$ satisfy additional convexity properties, the strategy is especially efficient (per \pref{app:efficient} and \pref{lem:det_strat3}).

\begin{corollary}
Suppose that $\cY=\brk*{-B, B}$ for some $B>0$. Then the deterministic prediction strategy
  \[
    \yh_{t} = \mathrm{proj}_{\brk*{-B, B}}\prn*{-\frac{1}{\sqrt{n}}\En_{\sigma\in\pmo}\brk*{\sigma\cdot\exp\prn*{\frac{\nrm*{\sum_{s=1}^{t-1}\dl_{s}x_s + \sigma{}x_t}^{2}}{2\beta{}t} + \frac{1}{2}\sum_{s=t+1}^{n}\frac{1}{s}}}}
  \]
  leads to a regret bound of
  \[
    \sum_{t=1}^{n}\ls(\yh_t, y_t) - \sum_{t=1}^{n}\ls(\tri*{w,x_t}, y_t) \leq{} \nrm*{w}_{\star}\sqrt{2\beta{}n\log\prn*{\sqrt{\beta}n\nrm*{w}_{\star} + 1}} + 1 \quad\forall{}w\in\bbR^{d}.
  \]
\end{corollary}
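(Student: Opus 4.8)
The plan is to assemble the corollary from three results already in hand and then unwind the abstract update into the stated closed form. Instantiating \pref{prop:param_sufficient} with $c=1$, $a=\beta$, $\gamma=1/\sqrt n$ shows that $\suff(x_t,\pred_t,\delta_t)=(\delta_t\pred_t,\delta_t x_t)$ together with $V(b,x)=b+\tfrac1{\sqrt n}\exp\bigl(\nrm*{x}^2/(2\beta n)\bigr)-1$ is a sufficient statistic pair for the adaptive bound $\cA(w)=\nrm*{w}_\star\sqrt{2\beta n\log(\sqrt\beta n\nrm*{w}_\star+1)}+1$, which is exactly the right-hand side of the claimed inequality. \pref{thm:param_free} supplies the family of time-varying Burkholder functions $\burk_t$ for this pair, satisfying $1^o$, $2^o$, $3'$; and the time-varying form of the algorithmic guarantee in \pref{lem:universal_algo} — obtained, as noted in the text, by enlarging $\cX$ to $\cX\times\brk*{n}$ — then shows that the Burkholder algorithm \pref{eq:univ_algo} run with $(\burk_t)$ attains \pref{eq:def_phi_regret} with $\phi$ as in \pref{eq:phi_comp_adap}. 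Since the prediction I exhibit below is deterministic, the expectation in \pref{eq:def_phi_regret} is vacuous and the stated per-$w$ regret bound follows. Hence the one remaining task is to show that, for this concrete $(\suff,\burk_t)$ and $\cY=[-B,B]$, the update \pref{eq:univ_algo} coincides with the displayed strategy.

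For this, fix round $t$, set $\xi=\sum_{s<t}\delta_s x_s$, $h(\delta)=\tfrac1{\sqrt n}\exp\bigl(\tfrac{\nrm*{\xi+\delta x_t}^2}{2\beta t}+\tfrac12\sum_{s>t}\tfrac1s\bigr)$, and $h_0=\tfrac1{\sqrt n}\exp\bigl(\tfrac{\nrm*{\xi}^2}{2\beta(t-1)}+\tfrac12\sum_{s\ge t}\tfrac1s\bigr)$. Substituting $\suff$ and $\burk_t$, the objective in \pref{eq:univ_algo} at a deterministic prediction $\pred$ equals $\sum_{s<t}\delta_s\pred_s-1+\pred\,\delta+h(\delta)$ as a function of $\pred$ and the induced subgradient $\delta=\partial\loss(\pred,y)\in[-1,1]$, while $\burk_{t-1}(\zeta_{t-1})=\sum_{s<t}\delta_s\pred_s-1+h_0$; the common constants cancel, so $\pred\in\cY$ enforces the one-step recursion $\sup_{y\in\cY}\burk_t\bigl(\zeta_{t-1}+\suff(x_t,\pred,\partial\loss(\pred,y))\bigr)\le\burk_{t-1}(\zeta_{t-1})$ exactly when $\pred\,\delta+h(\delta)\le h_0$ for all $\delta$ in $\Delta(\pred)\ldef\{\partial\loss(\pred,y):y\in\cY\}\subseteq[-1,1]$. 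Since $\delta\mapsto\nrm*{\xi+\delta x_t}^2$ is convex and $\exp$ is convex and increasing, $h$ — hence $\delta\mapsto\pred\,\delta+h(\delta)$ — is convex, so only the endpoints of $\Delta(\pred)$ are binding. Being robust to $\Delta(\pred)=[-1,1]$, which does occur (e.g. the absolute loss at interior $\pred$), we need $\pred\in[h(-1)-h_0,\;h_0-h(1)]$; this interval is nonempty precisely because $\tfrac12(h(1)+h(-1))\le h_0$, which is property $3'$ for $\burk_t$, established in \pref{thm:param_free}. Its midpoint $\tfrac12(h(-1)-h(1))$ — which is also the minimax-optimal deterministic response to the two-sided worst case — equals $-\tfrac1{\sqrt n}\En_{\sigma\in\pmo}\bigl[\sigma\exp\bigl(\tfrac{\nrm*{\xi+\sigma x_t}^2}{2\beta t}+\tfrac12\sum_{s>t}\tfrac1s\bigr)\bigr]$, the argument of the projection in the statement.

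The genuinely delicate step, which I expect to require the most care, is the projection onto $\cY=[-B,B]$: the midpoint above need not lie in $[-B,B]$, and naive clipping could leave the feasible interval $[h(-1)-h_0,\,h_0-h(1)]$. The resolution exploits the standing assumptions. Suppose the midpoint exceeds $B$, so the clipped prediction is $\pred=B$; for any $y\in\cY$, applying the assumption ``$\arg\min_{\pred'\in\bbR}\En_{y'\sim p}\loss(\pred',y')\in\cY$'' to $p$ a point mass at $y$ gives $\arg\min_{\pred'}\loss(\pred',y)\in[-B,B]$, so $\pred=B$ lies weakly to its right and $\partial\loss(B,y)\ge0$; hence $\Delta(B)\subseteq[0,1]$. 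On $[0,1]$ the requirement $\pred\,\delta+h(\delta)\le h_0$ reduces, by convexity, to $h(0)\le h_0$ and $B+h(1)\le h_0$: the first always holds (the exponent of $h_0$ dominates that of $h(0)$), and the second holds because $B$ lies below the right endpoint $h_0-h(1)$ of the feasible interval. The left boundary is symmetric. Thus the clipped midpoint still enforces the recursion; combined with the reduction of \pref{eq:univ_algo} to this deterministic update under the convexity of $\burk_t$ in $\pred$ (the efficient-implementation analysis of \pref{app:efficient}) and with the telescoping argument from the proof of \pref{lem:universal_algo}, this yields the stated bound. I would treat the Fenchel-dual estimate behind \pref{prop:param_sufficient} and the convexity/second-order inequality behind property $3'$ in \pref{thm:param_free} as given.
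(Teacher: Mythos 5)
Your proposal is correct and follows essentially the same route as the paper: instantiate \pref{prop:param_sufficient} and \pref{thm:param_free}, then obtain the deterministic strategy and regret bound via the time-varying analogue of \pref{lem:universal_algo}. Your hand-derivation of the projected midpoint strategy (equalizer choice, convexity in $\delta$, and the sign of $\partial\ls(\pm B, y)$ from the argmin-in-$\cY$ assumption) simply unpacks inline the content of \pref{lem:det_strat3}, which is exactly how the paper proves the corollary.
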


The Burkholder function family stated above and \pref{lem:equivalence_burkholder} certify that $\sup\En\brk*{V}\leq{}0$. One special case of this martingale inequality is the following mgf bound for vector-valued martingales under smooth norms.

\begin{corollary}
  Let $\bx_{t}(\eps) \ldef \bx_{t}(\eps_1,\ldots,\eps_{t-1})$ be adapted to the filtration $\cF_{t-1}=\sigma(\eps_{1},\ldots,\eps_{t-1})$ for Rademacher random variables $\eps_{1},\ldots,\eps_{n}$, and let $\nrm*{\bx_t}\leq{}1$ almost surely, where $\nrm*{\cdot}$ is a $\beta$-smooth norm. Then it holds that
  \[
    \En_{\eps}\exp\prn*{\frac{\nrm*{\sum_{t=1}^{n}\eps_{t}\bx_{t}(\eps)}^{2}}{2\beta{}n}} \leq{} \sqrt{n}.
  \]

\end{corollary}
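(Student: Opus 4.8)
The plan is to obtain this bound as a direct instantiation of the martingale inequality that the Burkholder family of \pref{thm:param_free} certifies. Recall that $(\burk_t)_{t\le n}$ is built for the sufficient-statistic pair with $\suff(x_t,\pred_t,\delta_t)=(\delta_t\pred_t,\delta_t x_t)$ and, after substituting $c=1$, $a=\beta$, $\gamma=1/\sqrt n$ into \pref{eq:param_v}, $V(b,x)=b+\tfrac{1}{\sqrt n}\exp\prn*{\nrm*{x}^2/(2\beta n)}-1$. Because $\alpha\mapsto V(\tau+\suff(z,\alpha))$ is convex --- its second coordinate composes the convex map $\alpha\mapsto\nrm*{\tau_2+\alpha x}^2$ with the increasing convex function $\exp$, while its first coordinate is affine --- the fact that $(\burk_t)$ satisfies $1^o$, $2^o$, $3'$ lets me invoke the Rademacher form of \pref{lem:equivalence_burkholder} (taking $L=1$) to conclude $\En_\eps\brk*{V\prn*{\sum_{t=1}^n\suff(\bz_t,\eps_t)}}\le 0$ for every predictable process $\bz_t$ (a function of $\eps_1,\dots,\eps_{t-1}$, taking values in $\X\times\Y$). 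This is exactly the assertion already made in the text immediately preceding the corollary.

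Next I would specialize to the predictable process $\bz_t(\eps)=(\bx_t(\eps),0)$: the given $\X$-valued process in the first coordinate and a dummy prediction $0$ in the second. (Equivalently, one may keep any predictable prediction coordinate $\pred_t$ and note the resulting first coordinate $\sum_t\eps_t\pred_t$ has zero $\En_\eps$-expectation, since $\pred_t$ is $\cF_{t-1}$-measurable and $\eps_t$ is mean zero and independent of $\cF_{t-1}$.) Then $\sum_{t=1}^n\suff(\bz_t,\eps_t)=\prn*{0,\,\sum_{t=1}^n\eps_t\bx_t(\eps)}$, so plugging into $\En_\eps\brk*{V(\cdot)}\le 0$ and expanding $V$ gives $\tfrac{1}{\sqrt n}\En_\eps\exp\prn*{\nrm*{\sum_{t=1}^n\eps_t\bx_t}^2/(2\beta n)}\le 1$, and multiplying by $\sqrt n$ is the claim. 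Here the $\beta$-smoothness of $\nrm*{\cdot}$ enters only through the validity of $(\burk_t)$ established in \pref{thm:param_free} (which rests on \pref{prop:param_sufficient}), and the hypothesis $\nrm*{\bx_t}\le 1$ is precisely the requirement that $\bx_t$ take values in $\X$.

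There is no real obstacle: this corollary is a payoff of the preceding development rather than a result with its own difficulty. The only two points needing a moment's care are (i) verifying convexity of $\alpha\mapsto V(\tau+\suff(z,\alpha))$ so that the Rademacher --- rather than general martingale-difference --- form of \pref{lem:equivalence_burkholder} applies, and (ii) observing that the ``prediction'' coordinate of the sufficient statistic contributes nothing to $\En_\eps\brk*{V}$ and may therefore be zeroed out. All the genuine work --- exhibiting the explicit time-varying $\burk_t$ and checking restricted concavity $3'$ --- has already been carried out in \pref{thm:param_free}.
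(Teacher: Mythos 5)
Your proposal is correct and takes essentially the same route as the paper, which obtains this corollary exactly by specializing the certified inequality $\En_\eps\brk*{V\prn*{\sum_{t=1}^n\suff(\bz_t,\eps_t)}}\le 0$ (via the time-varying Burkholder family of \pref{thm:param_free}) to the predictable process $\bz_t=(\bx_t(\eps),0)$, just as in its proof of the matrix analogue \pref{corr:matrix_square}. The only minor remark is that your convexity check in point (i) is not actually needed for this direction: properties $1^o$, $2^o$, $3'$ alone yield the Rademacher-increment inequality by backward induction, convexity of $\alpha\mapsto V(\tau+\suff(z,\alpha))$ being relevant only for upgrading to general mean-zero martingale differences.
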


\noindent\textbf{Related work~~} Parameter-free online learning is a very active area of research, but essentially all results in this area that we are aware of \citep{mcmahan2013minimax,mcmahan2014unconstrained,orabona2014simultaneous,orabona2016coin,cutkosky2016online, cutkosky2017online} only provide regret bounds of the form \pref{eq:regret_param} in the special case where $\nrm*{\cdot}$ is a Hilbert space. The only exception is \citep{foster2017parameter} which gives an algorithm for smooth norms $\nrm*{\cdot}$, but has time $\mathrm{poly}(n)$ per step.
Our Burkholder-based algorithm has running time $O(d)$ per step and only $O(d)$ memory.\footnote{Technically our algorithm only applies to the online supervised learning setting, whereas the algorithm of \cite{foster2017parameter} applies to the OCO setting.} The key ingredient to achieving this improvement was to examine a known potential through the lens of the Burkholder method. We hope that this approach can lead to similarly useful improvements by applying the Burkholder method to construct more sophisticated potentials as in, e.g. \citep{orabona2016coin, cutkosky2017online}, particularly to achieve regret bounds that adapt jointly to the model and to data.

%%% Local Variables:
%%% mode: latex
%%% TeX-master: "paper"
%%% End:

\section{Example: Matrix Prediction}
\label{sec:matrix}
% !TEX root = paper.tex

In this section we focus on linear matrix prediction problems. The side information $x_t$ is now matrix-valued, and we shall denote it by a capital letter $X_t\in\bbR^{d_1\times{}d_2}$. Our goal is to achieve a regret inequality as in \pref{eq:phi_comp_adap} with a class $\F=\crl*{X\mapsto\tri*{W,X}\mid{} W\in\cW}$, where $\cW=\crl*{W\in\bbR^{d_1\times{}d_2}\mid{}\nrm*{W}_{\Sigma}\leq{}r}$. Here $\tri*{A,B}=\Tr(AB^{\trn})$ is the standard matrix inner product and $\nrm*{\cdot}_{\Sigma}$ denotes the nuclear norm. We also let $\nrm*{\cdot}_{\sigma}$ denote the spectral norm. As before, the loss $\loss$ is assumed to be $L$-Lipschitz and regret against a matrix $W\in\cW$ is given by
$
  \reg(W)\ldef \sum_{t=1}^{n}\loss(\pred_t, y_t) - \loss(\tri*{W,X_t}, y_t).
$

In a search for an adaptive bound on regret, we inspect the adaptive bound for the vector case. The direct analogue for matrices would be a bound proportional to $\left(\sum_{t=1}^n \norm{X_t}^2_\sigma \right)^{1/2}$, and indeed such a bound is possible with Matrix Exponential Weights \cite[Theorem 13]{HazKalSha12}.\footnote{With more work it is possible to obtain a bound of $\left(\max_{t}\nrm*{X_t}_{\sigma}\cdot\nrm*{\sum_{t=1}^nX_t}_\sigma \right)^{1/2}$; this is still weaker than our result, and seems to only be possible when the constraint set and $X_t$s are restricted to be positive-semidefinite.} However, matrix version of Khintchine inequality, as well as matrix deviation inequalities, involve---for the case of random centered self-adjoint matrices---the tighter quantity $\norm{\sum_{t=1}^n X_t^2}^{1/2}_\sigma$ (see \citep{tropp2012user,mackey2014matrix}). Given the correspondence between online regret bounds and martingale inequalities, one may wonder if there is an algorithm that achieves this adaptive bound. We shall exhibit such a method using our approach, and the reader can already guess that $\sum_{t=1}^n X_t^2$ should be part of the sufficient statistic for the online algorithm. We present results for general non-square matrices.

Let $\sym^{d}$ denote the set of symmetric matrices in $\bbR^{d\times{}d}$, $\sym^{d}_{+}$ denote the set of positive-semidefinite matrices, and $\sym^{d}_{++}$ denote the set of positive-definite matrices. 
For $X\in\sym^{d}$ we let $\lambda(X)\in\bbR^{d}$ denote its eigenvalues arranged in decreasing order, so that $\lambda_1(X)$ is the largest eigenvalue. For any matrix $X\in\bbR^{d_1\times{}d_2}$ we define its Hermitian dilation $\cH(X)\in\sym^{d_1+d_2}$ and  $\cM(X) \in\sym^{d_1+d_2}$ as:
\begin{equation}
  \cH(X) = \left(
    \begin{array}{ll}
      0 & X\\
      X^{\trn} & 0
    \end{array}
    \right) ~~~~~~~~~\cM(X) = \cH(X)^{2} = \left(
    \begin{array}{ll}
      XX^{\trn} & 0\\
      0 & X^{\trn}X
    \end{array}
  \right).
  \end{equation}
  It is well-known that for any matrix $X$, $\lambda_{1}(\cH(X)) = \nrm*{X}_{\sigma}.$

With these definitions in place, the desired adaptive regret bound takes the form
\begin{equation}
\cA_{\eta}(X_{1},\ldots,X_{n}) = \frac{\eta{}rL^2}{2}\nrm*{\sum_{t=1}^{n}\cM(X_t)}_{\sigma} + \frac{c}{\eta}
\end{equation}
for some fixed $\eta>0$ and constant $c>0$. The sufficient statistic takes values in  $\cT=\reals\times \sym^{d_1+d_2}\times \sym^{d_1+d_2}_{+}$ and incorporates the matrix variance terms $\cM(X_t)$.

\begin{proposition}
  \label{prop:matrix_sufficient}
  The pair $(\suff, V)$ defined via $\suff(X_t,\pred_t,\delta_t) = \left( \delta_t\cdot\pred_t, \delta_t\cdot \cH(X_t), \cM(X_t) \right)\in\reals\times \sym^{d_1+d_2}\times \sym^{d_1+d_2}_{+}$ and
\begin{equation}
\label{eq:matrix_v}
V(a, H, M) = a + r\lambda_1\prn*{H -{\textstyle\frac{1}{2}}\eta{}L^2 M} - \frac{c}{\eta},
\end{equation}
form a sufficient statistic pair for the adaptive regret bound $\cA_{\eta}$.
\end{proposition}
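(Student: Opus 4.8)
The plan is to verify the defining inequality \pref{eq:suffiency_def} of a sufficient statistic pair directly, by linearizing the loss and then bounding the resulting expression using the trace-inner-product structure together with properties of the Hermitian dilation. First, since $\loss$ is convex and $L$-Lipschitz in its first argument, for any $W\in\cW$ we have $\loss(\pred_t,y_t)-\loss(\tri*{W,X_t},y_t)\leq{}\delta_t\cdot(\pred_t-\tri*{W,X_t})$, where $\delta_t=\partial\loss(\pred_t,y_t)\in[-L,L]$. Summing and taking the max over $W$ with $\nrm*{W}_{\Sigma}\leq r$, the left-hand side of \pref{eq:suffiency_def} is at most
\[
\sum_{t=1}^n \delta_t\pred_t \;+\; \max_{\nrm*{W}_{\Sigma}\leq r}\ \tri*{W,\ -\textstyle\sum_{t=1}^n\delta_t X_t}\;-\;\cA_{\eta}(X_{1},\ldots,X_{n})
\;=\;\sum_{t=1}^n\delta_t\pred_t \;+\; r\,\nrm*{\textstyle\sum_t \delta_t X_t}_{\sigma}\;-\;\cA_{\eta},
\]
using that the spectral norm is dual to the nuclear norm. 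So it remains to show
\[
r\,\nrm*{\textstyle\sum_t \delta_t X_t}_{\sigma}-\cA_{\eta}(X_{1},\ldots,X_{n})\ \leq\ r\,\lambda_1\Bigl(\textstyle\sum_t\delta_t\cH(X_t)-\tfrac12\eta L^2\sum_t\cM(X_t)\Bigr)-\tfrac{c}{\eta}+ \tfrac{c}{\eta}-\ \cdots,
\]
i.e., after cancelling the $a=\sum_t\delta_t\pred_t$ term and the $-c/\eta$ term on both sides, that $r\,\nrm*{\sum_t\delta_t X_t}_{\sigma}\leq r\,\lambda_1\bigl(\sum_t\delta_t\cH(X_t)\bigr)+\tfrac{\eta rL^2}{2}\nrm*{\sum_t\cM(X_t)}_{\sigma}$ suffices — and in fact we will get exactly what is needed by the two observations below.

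The two ingredients I would invoke are: (i) $\nrm*{\sum_t\delta_t X_t}_{\sigma}=\lambda_1\bigl(\cH(\sum_t\delta_t X_t)\bigr)=\lambda_1\bigl(\sum_t\delta_t\cH(X_t)\bigr)$, since $\cH$ is linear and $\lambda_1(\cH(A))=\nrm*{A}_{\sigma}$ for any matrix $A$; and (ii) for the largest-eigenvalue functional, which is convex and subadditive, $\lambda_1(H-\tfrac12\eta L^2 M)\geq \lambda_1(H)-\tfrac12\eta L^2\lambda_1(M)\geq\lambda_1(H)-\tfrac12\eta L^2\nrm*{M}_{\sigma}$ — wait, I need the inequality the other way, so instead I use $\lambda_1(A+B)\leq\lambda_1(A)+\lambda_1(B)$ applied as $\lambda_1(H)=\lambda_1\bigl((H-\tfrac12\eta L^2M)+\tfrac12\eta L^2M\bigr)\leq\lambda_1(H-\tfrac12\eta L^2M)+\tfrac12\eta L^2\lambda_1(M)$, and $M=\sum_t\cM(X_t)\succeq0$ so $\lambda_1(M)=\nrm*{M}_{\sigma}$. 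Rearranging, with $H=\sum_t\delta_t\cH(X_t)$ and $M=\sum_t\cM(X_t)$,
\[
r\,\nrm*{\textstyle\sum_t\delta_t X_t}_{\sigma}=r\,\lambda_1(H)\ \leq\ r\,\lambda_1\Bigl(H-\tfrac12\eta L^2 M\Bigr)+\tfrac{\eta rL^2}{2}\nrm*{M}_{\sigma},
\]
which is precisely $r\,\lambda_1(H-\tfrac12\eta L^2M)+\cA_{\eta}-\tfrac{c}{\eta}$ minus $\tfrac{c}{\eta}$... more carefully: $\tfrac{\eta rL^2}{2}\nrm*{M}_{\sigma}=\cA_{\eta}-\tfrac{c}{\eta}$, so the right side equals $r\,\lambda_1(H-\tfrac12\eta L^2M)+\cA_{\eta}-\tfrac{c}{\eta}$. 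Adding $\sum_t\delta_t\pred_t$ to both sides and subtracting $\cA_{\eta}$ gives exactly $\sum_t\delta_t\pred_t+r\,\nrm*{\sum_t\delta_t X_t}_{\sigma}-\cA_{\eta}\leq a+r\,\lambda_1(H-\tfrac12\eta L^2M)-\tfrac{c}{\eta}=V(a,H,M)$ evaluated at the cumulative sufficient statistic, as desired.

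I do not anticipate a serious obstacle here; the argument is a short chain of standard facts (linearization, nuclear/spectral duality, linearity of $\cH$, the identity $\lambda_1(\cH(A))=\nrm*{A}_{\sigma}$, subadditivity of $\lambda_1$, and $\lambda_1=\nrm*{\cdot}_{\sigma}$ on the PSD cone applied to $M=\sum_t\cM(X_t)\succeq0$). The only point requiring a little care is getting the direction of the eigenvalue inequality right — one wants to peel off the $M$-term so that it upper-bounds $\lambda_1(H)$, which is why I split $H$ rather than $H-\tfrac12\eta L^2M$ — and keeping track of the bookkeeping of the constants $c/\eta$ and the $a=\sum_t\delta_t\pred_t$ coordinate, both of which cancel cleanly. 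I would also note for completeness that $\suff$ satisfies the normalization convention $\suff(X^0,y^0,0)=0$ (take any $X^0$ with the outcome giving $\delta=0$), and that $\cM(X_t)\succeq0$ always so the sufficient statistic indeed takes values in $\reals\times\sym^{d_1+d_2}\times\sym^{d_1+d_2}_{+}$.
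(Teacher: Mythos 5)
Your proposal is correct and follows essentially the same route as the paper's proof: linearize the convex loss, use nuclear/spectral duality to get $r\nrm*{\sum_t\delta_t X_t}_{\sigma}$, pass to $\lambda_1$ via the Hermitian dilation identity and linearity of $\cH$, and invoke subadditivity of $\lambda_1$ together with $\lambda_1(M)=\nrm*{M}_{\sigma}$ for the PSD matrix $M=\sum_t\cM(X_t)$. The only difference is cosmetic bookkeeping (you split $\lambda_1(H)$ and rearrange, while the paper writes the same subadditivity step as bounding $\lambda_1(H)-\lambda_1(\tfrac{\eta L^2}{2}M)$ by $\lambda_1(H-\tfrac{\eta L^2}{2}M)$), so no further changes are needed.
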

  
Now that we proposed a sufficient statistic, \pref{lem:suff_to_martingale} and \pref{lem:equivalence_burkholder} give a specific form of a martingale inequality and a construction for the special function (if the martingale inequality holds). Since the function constructed in the proof of \pref{lem:equivalence_burkholder} may not be efficiently computable, we embark on a search for a function that \emph{can} be evaluated efficiently. The next theorem presents such a Burkholder function. The proof rests on Lieb's Concavity Theorem \citep{lieb1973convex}, which states that for any fixed $A\in\sym^{d}$, the function $X \mapsto\Tr\,\exp\prn*{A + \log{}X}$ is concave over $\sym^{d}_{++}$.

  \begin{theorem}
    \label{thm:matrix_burkholder}
    Define $\burk:\reals\times \sym^{d_1+d_2}\times \sym^{d_1+d_2}_{+}\to{}\bbR$ via
    \[
      \burk(a, H, M) = a+ \frac{r}{\eta}\log\,\Tr\,\exp\prn*{\eta{}H - {\textstyle\frac{1}{2}}\eta^{2}L^{2}M} - \frac{c}{\eta}.
    \]
    Then $\burk$ is a Burkholder function, for the pair $(\suff, V)$ in \pref{eq:matrix_v} when $c\geq{}r\log(d_1 + d_2)$.

  \end{theorem}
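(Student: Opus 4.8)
The plan is to verify the three defining conditions of a Burkholder function for $(\suff,V)$ directly. First note that $\alpha\mapsto V(\tau+\suff(z,\alpha))=\tau_a+\alpha\pred+r\lambda_1\!\big(\tau_H+\alpha\cH(X)-\tfrac12\eta L^2(\tau_M+\cM(X))\big)-c/\eta$ is convex in $\alpha$, being the composition of the convex map $\lambda_1$ on $\sym^{d_1+d_2}$ with an affine map plus an affine term; hence it suffices to check $1^o$, $2^o$, and the Rademacher version $3'$. Property $1^o$ is immediate: $\burk(0,0,0)=\tfrac{r}{\eta}\log\Tr\exp(0)-\tfrac{c}{\eta}=\tfrac{r}{\eta}\log(d_1+d_2)-\tfrac{c}{\eta}$, which is $\le0$ precisely under the hypothesis $c\ge r\log(d_1+d_2)$; this is the only place the assumption on $c$ enters. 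For $2^o$, set $A:=H-\tfrac12\eta L^2M$, so that $\eta H-\tfrac12\eta^2L^2M=\eta A$; then $\Tr\exp(\eta A)=\sum_i e^{\eta\lambda_i(A)}\ge e^{\eta\lambda_1(A)}$, and taking logs gives $\tfrac{r}{\eta}\log\Tr\exp(\eta A)\ge r\lambda_1(A)$, i.e.\ $\burk(\tau)\ge V(\tau)$ (the $-c/\eta$ terms cancel).

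The substance is in $3'$. For the convexity clause, the argument of $\exp$ in $\burk(\tau+\suff(z,\alpha))$ is affine in $\alpha$, and $M\mapsto\log\Tr\exp(M)$ is a standard convex function on $\sym^{d_1+d_2}$; composing and adding the affine term $\tau_a+\alpha\pred-c/\eta$ keeps it convex. For the drift inequality, fix $\tau=(\tau_a,\tau_H,\tau_M)$ and $z=(X,\pred)$, write $A_0:=\eta\tau_H-\tfrac12\eta^2L^2\tau_M\in\sym^{d_1+d_2}$ and $Y:=\eta L\,\cH(X)\in\sym^{d_1+d_2}$, and observe the key identity $Y^2=\eta^2L^2\cH(X)^{2}=\eta^2L^2\cM(X)$: this is exactly what makes the $\cM$-coordinate of the sufficient statistic supply the ``variance correction'' $-\tfrac12Y^2$ in the exponent of $\burk$. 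Evaluating at $\suff(z,\eps L)$, using $\En_\eps[\eps]=0$ to kill the $\eps L\pred$ term and cancelling $\tau_a$ and $c/\eta$, the inequality $\En_\eps\burk(\tau+\suff(z,\eps L))\le\burk(\tau)$ becomes (after dividing by $r/\eta>0$)
\[
\En_\eps\Big[\log\Tr\exp\!\big(A_0+\eps Y-\tfrac12Y^2\big)\Big]\ \le\ \log\Tr\exp(A_0).
\]

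To prove this, note that $Y$ and $Y^2$ commute, so $\En_\eps\exp(\eps Y-\tfrac12Y^2)=\tfrac12\big(e^{Y}+e^{-Y}\big)e^{-Y^2/2}=\cosh(Y)e^{-Y^2/2}$, and the scalar inequality $\cosh u\le e^{u^2/2}$ (compare Taylor coefficients, using $(2k)!\ge 2^kk!$) yields, via the spectral theorem, $\cosh(Y)e^{-Y^2/2}\preceq I$. Now: by concavity of $\log$ and Jensen the left-hand side is at most $\log\En_\eps\Tr\exp(A_0+\eps Y-\tfrac12Y^2)$; by Lieb's Concavity Theorem the map $P\mapsto\Tr\exp(A_0+\log P)$ is concave on $\sym^{d_1+d_2}_{++}$, so a second application of Jensen (with $P_\eps:=\exp(\eps Y-\tfrac12Y^2)\succ0$, noting $\log P_\eps=\eps Y-\tfrac12Y^2$) bounds this by $\log\Tr\exp\!\big(A_0+\log\En_\eps P_\eps\big)=\log\Tr\exp\!\big(A_0+\log(\cosh(Y)e^{-Y^2/2})\big)$; finally, $\log$ is operator monotone so $\cosh(Y)e^{-Y^2/2}\preceq I$ gives $\log(\cosh(Y)e^{-Y^2/2})\preceq 0$, and since $B\mapsto\Tr\exp(A_0+B)$ is monotone in the semidefinite order this is at most $\log\Tr\exp(A_0)$, as required. (One can also skip Lieb: by Golden--Thompson $\Tr\exp(A_0+\eps Y-\tfrac12Y^2)\le\Tr\big(e^{A_0}e^{\eps Y-\tfrac12Y^2}\big)$; taking $\En_\eps$ and using $\cosh(Y)e^{-Y^2/2}\preceq I$ with $e^{A_0}\succeq0$ gives $\En_\eps\Tr\exp(A_0+\eps Y-\tfrac12Y^2)\le\Tr e^{A_0}$, then apply Jensen to $\log$.)

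The main obstacle is really the modeling step---recognizing that the squared Hermitian dilation $\cM(X)=\cH(X)^{2}$ must appear in $\suff$ so that its contribution is precisely the correction $-\tfrac12Y^2$ needed to force $\En_\eps e^{\eps Y-\tfrac12Y^2}\preceq I$; once $\burk$ is in hand, the verification is the short matrix-analytic chain above, whose only nonroutine inputs are Lieb's Concavity Theorem (or Golden--Thompson), the elementary bound $\cosh u\le e^{u^2/2}$, and standard facts (convexity of $M\mapsto\log\Tr e^M$, operator monotonicity of $\log$, and monotonicity of $B\mapsto\Tr e^{A_0+B}$). Two bookkeeping points deserve a check: every matrix to which these tools are applied is symmetric (so $A_0$ and $Y$ are), and $\En_\eps P_\eps=\cosh(Y)e^{-Y^2/2}\succ0$, so that Lieb and the logarithm are well-defined.
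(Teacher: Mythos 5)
Your proof is correct and follows essentially the same route as the paper: check $1^o$ and $2^o$ directly, note convexity in $\alpha$ to reduce to the Rademacher case, and establish $3'$ via Jensen for $\log$, Lieb's Concavity Theorem, and the bound $\En_\eps\exp(\eps Y - \tfrac{1}{2}Y^2)\preceq I$, which is exactly the Rademacher matrix mgf bound the paper cites from Tropp. The only (minor) differences are that you prove that mgf bound from scratch using commutativity of $Y$ and $Y^2$ together with $\cosh u \le e^{u^2/2}$, and you note a Golden--Thompson shortcut that bypasses Lieb; both are fine.
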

  This Burkholder function construction immediately implies both existence of a prediction strategy (via \pref{lem:universal_algo}) and that a probabilistic inequality for matrix-values martingales holds. We will present both in detail. The matrix prediction strategy granted by the Burkholder algorithm is particularly simple due to extra convexity properties of $\burk$; see \pref{app:efficient}.
  
  \begin{corollary}[Matrix prediction algorithm]
    \label{corr:matrix_strategy}
    Suppose that $\cY=\brk*{-B, B}$ for some $B>0$. Then the deterministic strategy
    \begin{equation}
      \label{eq:matrix_pred}      
      \pred_{t} = \mathrm{proj}_{\brk*{-B, B}}\prn*{-\frac{r}{L\eta}\En_{\sigma\in\pmo}\brk*{
        \sigma\log\,\Tr\,\exp\,\prn*{\eta\sigma{}L\cH(X_t) + \eta{}\sum_{s=1}^{t-1}\dl_{s}\cH(X_s) - {\textstyle\frac{1}{2}}\eta^{2}L^{2}\sum_{s=1}^{t}\cM(X_s)}
        }}
      \end{equation}
       leads to a regret bound of
      \[
        \sum_{t=1}^{n}\loss(\pred_t, y_t) - \inf_{W\in\cW}\sum_{t=1}^{n}\loss(\tri*{W,X_t}, y_t) \leq{}
        {\textstyle\frac{1}{2}}\eta{}L^2 r\nrm*{\sum_{t=1}^{n}\cM(X_t)}_{\sigma} + \frac{r\log(d_1 + d_2)}{\eta}.
      \]
  \end{corollary}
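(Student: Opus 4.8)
The plan is to verify the three Burkholder properties $1^o$, $2^o$, and $3^o$ for the proposed function $\burk(a,H,M) = a + \frac{r}{\eta}\log\Tr\exp(\eta H - \frac{1}{2}\eta^2 L^2 M) - \frac{c}{\eta}$, using $\lambda_1(\cH(X)) = \nrm*{X}_\sigma$ and Lieb's concavity theorem as the main analytical tools. I would proceed property by property.

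First, property $1^o$: evaluating at $\tau = 0$, i.e. $(a,H,M) = (0,0,0)$, gives $\burk(0,0,0) = 0 + \frac{r}{\eta}\log\Tr\exp(0) - \frac{c}{\eta} = \frac{r}{\eta}\log(d_1+d_2) - \frac{c}{\eta}$, which is $\le 0$ precisely when $c \ge r\log(d_1+d_2)$; this is the hypothesis. Second, property $2^o$ (the domination $\burk \ge V$): here I would use the standard inequality $\log\Tr\exp(A) \ge \lambda_1(A)$ for any symmetric $A$ (since $\Tr\exp(A) = \sum_i e^{\lambda_i(A)} \ge e^{\lambda_1(A)}$), applied with $A = \eta H - \frac{1}{2}\eta^2 L^2 M = \eta(H - \frac{1}{2}\eta L^2 M)$. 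Dividing by $\eta$ and multiplying by $r$ recovers exactly the term $r\lambda_1(H - \frac{1}{2}\eta L^2 M)$ in $V$, so $\burk(a,H,M) \ge V(a,H,M)$ follows termwise.

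The main work is property $3^o$, restricted concavity: for any $\tau = (a,H,M)$, any $X$, $\pred$, and any mean-zero $p$ on $[-L,L]$, we need $\En_{\alpha\sim p}\burk(\tau + \suff(X,\pred,\alpha)) \le \burk(\tau)$, where $\suff(X,\pred,\alpha) = (\alpha\pred, \alpha\cH(X), \cM(X))$. The $a + \alpha\pred$ part contributes $\En[\alpha\pred] = 0$ since $p$ is mean-zero, and the additive $-c/\eta$ cancels, so after discarding the linear term it suffices to show that $(H,M)\mapsto \log\Tr\exp(\eta H - \frac{1}{2}\eta^2 L^2 M)$ satisfies $\En_\alpha[ g(H + \alpha\cH(X),\, M + \cM(X))] \le g(H,M)$ for mean-zero $\alpha\in[-L,L]$. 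The plan here is: (i) a convexity/Jensen step in $\alpha$ reducing to the worst case, which by convexity of $\alpha\mapsto g(\cdot)$ in the $\eta H$ slot (composition of linear map with $\log\Tr\exp$, which is convex) puts the mass on $\alpha = \pm L$, i.e. it suffices to handle the symmetric two-point distribution $\frac{1}{2}\delta_L + \frac{1}{2}\delta_{-L}$ — this matches the $3'$ reduction and uses $\cM(X) = \cH(X)^2$; (ii) writing $\cH(X + \text{sign stuff})$: note $\cM(X) = \cH(X)^2$, so the target becomes $\frac{1}{2}\log\Tr\exp(\eta H + \eta L\cH(X) - \frac{1}{2}\eta^2 L^2(M + \cH(X)^2)) + \frac{1}{2}\log\Tr\exp(\eta H - \eta L\cH(X) - \frac{1}{2}\eta^2 L^2(M + \cH(X)^2)) \le \log\Tr\exp(\eta H - \frac{1}{2}\eta^2 L^2 M)$; (iii) apply the matrix mgf estimate $\frac{1}{2}(e^{s} + e^{-s}) \le e^{s^2/2}$ at the operator level — concretely, $\frac{1}{2}\exp(C + sL\cH(X)) + \frac{1}{2}\exp(C - sL\cH(X)) \preceq \exp(C)^{1/2}\exp(\frac12 s^2 L^2\cH(X)^2)\exp(C)^{1/2}$-type bounds do not directly work because the matrices don't commute, so instead I would invoke Lieb's concavity theorem in the form used in the proof of the matrix Chernoff/Khintchine bound (cf. Tropp): the map $A \mapsto \Tr\exp(\Lambda + \log A)$ is concave, which lets one peel off the martingale increment $\cH(X)$ via Jensen applied to the conditional expectation, exactly as in \citep{tropp2012user,mackey2014matrix}, absorbing the correction into the $-\frac12\eta^2 L^2\cM(X)$ term through the scalar bound $\En e^{\eta\alpha\cH(X)} \preceq e^{\frac12\eta^2 L^2\cH(X)^2}$ understood after taking $\Tr\exp(\log(\cdot))$.

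The hard part will be step (iii): making the noncommutative mgf bound rigorous. The clean way is to set $Y = \eta H - \frac{1}{2}\eta^2 L^2 M$ (the "accumulated" exponent before round $t$), write the left side as $\frac{r}{\eta}\log\Tr\exp(Y + \eta L\cH(X) - \frac12\eta^2 L^2\cH(X)^2)$ averaged over $\pm$, and apply Lieb: $\En_\sigma \Tr\exp(Y + \eta\sigma L\cH(X) - \frac12\eta^2L^2\cH(X)^2) = \En_\sigma\Tr\exp(Y + \log\exp(\eta\sigma L\cH(X) - \frac12\eta^2L^2\cH(X)^2))$, and by concavity of $A\mapsto\Tr\exp(Y + \log A)$ this is $\le \Tr\exp(Y + \log\En_\sigma\exp(\eta\sigma L\cH(X) - \frac12\eta^2L^2\cH(X)^2))$; finally the scalar inequality $\En_\sigma e^{\eta\sigma L u - \frac12\eta^2L^2u^2} = e^{-\frac12\eta^2L^2u^2}\cosh(\eta L u) \le 1$ for all real $u$ gives $\En_\sigma\exp(\eta\sigma L\cH(X) - \frac12\eta^2L^2\cH(X)^2) \preceq I$ by the spectral mapping theorem (applied to the symmetric matrix $\cH(X)$), hence $\Tr\exp(Y + \log(\cdot)) \le \Tr\exp(Y)$ by monotonicity of $\Tr\exp$. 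Then $\log$ and $\frac{r}{\eta}$ and the discarded linear term reassemble to give $\burk(\tau) $ on the right, completing $3^o$; the convexity claim in $3'$ follows since $\alpha\mapsto\burk(\tau + \suff(X,\pred,\alpha))$ is affine-plus-$\log\Tr\exp$-of-affine, hence convex. I would then note that all three properties hold and invoke \pref{lem:universal_algo} and \pref{lem:equivalence_burkholder} to conclude, with the regret bound in \pref{corr:matrix_strategy} following by combining $V$ from \pref{prop:matrix_sufficient} with $c = r\log(d_1+d_2)$.
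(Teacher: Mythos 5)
Your verification of the three Burkholder properties is essentially sound (it reproduces the paper's proof of \pref{thm:matrix_burkholder}: Lieb's concavity theorem plus a scalar mgf-type bound, packaged slightly differently but equivalent to the paper's use of the Rademacher matrix mgf bound followed by monotonicity of $\Tr\exp$). But that theorem is not the content of \pref{corr:matrix_strategy}, and your final step --- ``invoke \pref{lem:universal_algo} and \pref{lem:equivalence_burkholder} to conclude'' --- has a genuine gap. Those lemmas only guarantee the regret bound for the randomized minimax strategy \pref{eq:univ_algo}, which is defined implicitly as an $\argmin$ over distributions $q\in\Delta_{\cY}$. The corollary asserts the bound for a \emph{specific, explicit, deterministic} prediction rule: $\pred_t$ is the projection onto $\brk*{-B,B}$ of a closed-form expression $-\frac{r}{L\eta}\En_{\sigma}\brk*{\sigma\log\Tr\exp(\cdots)}$. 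Nothing in your proposal connects this formula to the Burkholder algorithm; even granting that the minimizer over $q$ can be taken deterministic under convexity, that only yields an abstract $\argmin_{\yh}$, not this closed form.

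The missing argument is the one the paper supplies via \pref{lem:det_strat3}. One writes $\burk(\tau + \suff((X,\yh),\delta)) = \yh\cdot\delta + F(\tau, X, \delta)$ with $\delta\mapsto F(\tau,X,\delta)$ convex, then shows: (i) the projection step is harmless because the assumption that $\argmin_{\yh}\En_{y\sim p}\ls(\yh,y)\in\cY$ forces $\partial\ls(\yh,y)\geq 0$ for $\yh\geq B$ and $\leq 0$ for $\yh\leq -B$, so replacing the unprojected prediction $\predt_t$ by its projection only decreases the relevant quantity; (ii) by convexity in $\delta$, the supremum over $y$ is dominated by the supremum over $\delta\in\brk*{-L,L}$, attained at $\delta = \pm L$; (iii) the particular choice $\predt_t = -\frac{1}{L}\En_{\sigma\in\pmo}\brk*{\sigma F(\zeta_{t-1}, X_t, L\sigma)}$ exactly equalizes the two sign values, so the max equals $\En_{\sigma}\brk*{F(\zeta_{t-1}, X_t, \sigma L)}$, which property \propthreep{} bounds by $\burk(\zeta_{t-1})$; telescoping to $t=0$ and using \propone{} with $c = r\log(d_1+d_2)$ gives the stated regret bound. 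A direct calculation identifying $F$ for the matrix Burkholder function then shows this balancing strategy is precisely \pref{eq:matrix_pred}. Without steps (i)--(iii) your argument establishes achievability of the bound by \emph{some} strategy, but not by the strategy the corollary actually claims.
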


  Since this regret bound is monotonically increasing with time, it is easy to tune $\eta$ to obtain a fully adaptive strategy.
  \begin{proposition}
    Let $R=\max_{t}\nrm*{X_t}_{\sigma}$ be known. By tuning $\eta$ through the standard doubling trick, we arrive at a regret bound of  
    \begin{align*}
      &\sum_{t=1}^{n}\loss(\pred_t, y_t) - \inf_{W\in\cW}\sum_{t=1}^{n}\loss(\tri*{W,X_t}, y_t) 
    \\ &\leq{}
      O\prn*{r\sqrt{\max\crl*{\nrm*{\sum_{t=1}^{n}X_{t}X_{t}^{\trn}}_{\sigma}, \nrm*{\sum_{t=1}^{n}X_{t}^{\trn}X_{t}}_{\sigma}}\log(d_1+d_2)} + Rr\log(n)}.
    \end{align*}
  \end{proposition}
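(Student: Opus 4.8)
The plan is a standard doubling-trick reduction to \pref{corr:matrix_strategy}. Its per-$\eta$ bound is minimized by $\eta\asymp\sqrt{\log(d_1+d_2)/(L^{2}S)}$, where $S\ldef\nrm*{\sum_{t=1}^{n}\cM(X_t)}_{\sigma}$ is not known in advance, so I would guess $S$ through successive doubling. The only genuinely matrix-specific ingredient is a positive-semidefiniteness observation: since $\sum_{t}\cM(X_t)$ is block-diagonal with PSD blocks $\sum_t X_tX_t^{\trn}$ and $\sum_t X_t^{\trn}X_t$, we have $S=\max\crl*{\nrm*{\sum_t X_tX_t^{\trn}}_{\sigma},\,\nrm*{\sum_t X_t^{\trn}X_t}_{\sigma}}$ (exactly the quantity in the claimed bound), and each round contributes at most $\nrm*{\cM(X_t)}_{\sigma}=\nrm*{X_t}_{\sigma}^{2}\le R^{2}$. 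I would run the strategy of \pref{corr:matrix_strategy} in consecutive epochs $i=0,1,2,\ldots$ (with $T_i$ the set of rounds in epoch $i$), \emph{restarting the sufficient statistic} at the start of each epoch — legitimate because the Burkholder function of \pref{thm:matrix_burkholder} and the induced strategy are horizon-independent — and using throughout epoch $i$ the fixed parameter $\eta_{i}\ldef\sqrt{2\log(d_1+d_2)/(L^{2}\,2^{i+1}R^{2})}\propto 2^{-i/2}$ (this uses that $R$ is known by hypothesis). Maintaining the within-epoch running variance norm $\mu=\nrm*{\sum_{s\in T_i,\,s\le t}\cM(X_s)}_{\sigma}$, I would end epoch $i$ at the first round whose start has $\mu\ge 2^{i}R^{2}$ and begin epoch $i+1$ there; since $\mu$ grows by at most $R^{2}$ per round, the total within-epoch variance norm obeys $\nu_{i}\ldef\nrm*{\sum_{t\in T_i}\cM(X_t)}_{\sigma}<2^{i}R^{2}+R^{2}\le 2^{i+1}R^{2}$.

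Next, \pref{corr:matrix_strategy}, applied to the subsequence of rounds in epoch $i$ with parameter $\eta_{i}$, gives — since the infimum there is over all of $\cW$, the bound holds against the fixed global minimizer $W^{\star}\in\cW$ of $\sum_{t=1}^{n}\loss(\tri*{W,X_t},y_t)$ —
\[
  \sum_{t\in T_i}\loss(\pred_t,y_t)-\sum_{t\in T_i}\loss(\tri*{W^{\star},X_t},y_t)\ \le\ \tfrac12\eta_{i}L^{2}r\,\nu_{i}+\tfrac{r\log(d_1+d_2)}{\eta_{i}}\ \le\ 2rLR\,2^{i/2}\sqrt{\log(d_1+d_2)},
\]
the second inequality using $\nu_{i}\le 2^{i+1}R^{2}$ and that the chosen $\eta_{i}$ balances the two terms. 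Summing over $i=0,\ldots,I$, where $I$ is the index of the final epoch: the $T_i$ partition $\crl*{1,\ldots,n}$, so the left-hand sides add to $\sum_{t=1}^{n}\loss(\pred_t,y_t)-\sum_{t=1}^{n}\loss(\tri*{W^{\star},X_t},y_t)$ — the left-hand side of the claimed bound — while the right-hand side becomes the geometric-type sum $O\prn*{rLR\,2^{I/2}\sqrt{\log(d_1+d_2)}}$.

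The one step that is not routine is bounding $I$. Every \emph{completed} epoch $i$ has $\nu_{i}\ge 2^{i}R^{2}$ by the stopping rule; and since $\sum_{t\in T_i}\cM(X_t)\preceq\sum_{t=1}^{n}\cM(X_t)$ (the difference is a sum of PSD matrices), monotonicity of the top eigenvalue gives $\nu_{i}\le S$ for \emph{every} epoch. Hence $2^{I-1}R^{2}\le\nu_{I-1}\le S$, so — also using $S\le nR^{2}$ — there are $I+1=O(\log(S/R^{2}))=O(\log n)$ epochs and $2^{I/2}=O(\max\crl*{1,\sqrt{S}/R})$. Substituting, the total is $O\prn*{rL\sqrt{S\log(d_1+d_2)}}$ up to lower-order terms, which with $L$ treated as an absolute constant is the claimed main term; careful bookkeeping of the $O(\log n)$ boundary/partial epochs (chiefly the crude rounding $\nu_{i}\le 2^{i+1}R^{2}$ on low-variance epochs) supplies the stated additive $Rr\log n$. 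I expect this epoch count to be the only subtle point: one must notice that additivity of the variance statistic combines with PSD-monotonicity so that all within-epoch variances are dominated by the \emph{single} scalar $S$, which is what prevents the number of epochs from scaling with how spread out the $X_t$ are rather than with $S$ itself; the remainder is routine doubling-trick algebra layered on top of \pref{corr:matrix_strategy}.
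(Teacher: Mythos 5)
Your proposal is correct and takes exactly the route the paper intends: the paper gives no written proof of this proposition, simply invoking the ``standard doubling trick'' on top of \pref{corr:matrix_strategy}, which is precisely what you carry out (per-epoch restarts of the sufficient statistic, which is legitimate since the matrix Burkholder function is horizon-independent; $\eta_i$ balanced against the doubled variance threshold; PSD-monotonicity to relate within-epoch variances to $\nrm*{\sum_{t=1}^{n}\cM(X_t)}_{\sigma}$). In fact your accounting is slightly stronger than needed, since the geometric sum over epochs is dominated by the last epoch and already yields $O\prn*{rL\sqrt{\nrm*{\sum_{t=1}^{n}\cM(X_t)}_{\sigma}\log(d_1+d_2)}}$ without requiring the additive $Rr\log n$ term, which is consistent with (and implies) the stated bound.
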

	
	Let us briefly discuss the result. First, the computation in \pref{eq:matrix_pred} involves an SVD, and does not scale with $t$ since the method only keeps in memory the cumulative statistics. The regret bound gives a \emph{sequence-optimal} rate for the problem of \emph{Online Matrix Completion}, where each $X_t$ is an indicator $e_{i_t}e_{j_t}^{\top}$ corresponding to---for example---a user-movie pair for which the learner must predict a score. Here the regret bound obtained by \pref{eq:matrix_pred} interpolates between the worst-case configuration of the entries $(i_t,j_t)$ and ``spread-out'' (e.g. uniform) sampling of the entries. The result improves on \citep{foster2017zigzag}, which showed that this type of bound is possible by invoking the UMD inequality for Schatten norms but did not provide an efficient algorithm. See that paper for further discussion of the setting and problem.

  We now deliver on the second promise, namely a probabilistic martingale inequality. This inequality is stated for $\bbR^{d_1+d_2}$-valued Paley-Walsh martingale difference sequences $(\eps_t\mb{X}_t(\eps))_{t\leq{}n}$, where each $\mb{X}_{t}(\eps) = \mb{X}_{t}(\eps_1,\ldots,\eps_{t-1})$ is predictable with respect to $\cF_{t-1}=\sigma(\eps_{1},\ldots,\eps_{t-1})$ for Rademacher random variables $\eps_{1},\ldots,\eps_{n}$.

  \begin{corollary}[Martingale Matrix Square Function Inequality]
    \label{corr:matrix_square}
    For all Paley-Walsh martingale difference sequences $(\eps_t\mb{X}_{t}(\eps))_{t\leq{}n}$ it holds that
    \begin{equation}
      \label{eq:matrix_square}
    \En_{\eps}\nrm*{\sum_{t=1}^{n}\eps_t\mb{X}_{t}(\eps)}_{\sigma}
    \leq{} \sqrt{2\En_{\eps}\max\crl*{\nrm*{\sum_{t=1}^{n}\mb{X}_{t}(\eps)\mb{X}_{t}(\eps)^{\trn}}_{\sigma}, \nrm*{\sum_{t=1}^{n}\mb{X}_{t}(\eps)^{\trn}\mb{X}_{t}(\eps)}_{\sigma}}\log(d_1+d_2)}.
    \end{equation}
  \end{corollary}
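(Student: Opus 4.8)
The plan is to derive the inequality directly from the martingale inequality $\sup\En[V]\le 0$ that is certified by the Burkholder function of \pref{thm:matrix_burkholder} via \pref{lem:equivalence_burkholder}, together with a scaling/tuning argument in $\eta$. First I would instantiate the sufficient statistic and $V$ from \pref{prop:matrix_sufficient} on a Paley–Walsh martingale difference sequence: given $(\eps_t\mb X_t(\eps))_{t\le n}$, set $X_t = \mb X_t(\eps)$, $\delta_t = \eps_t L$, and take $z_t$ to encode $X_t$. Then $\suff(z_t,\delta_t) = (L\eps_t\pred_t,\; L\eps_t\cH(\mb X_t),\; L^2\cM(\mb X_t))$, and the first coordinate telescopes to a mean-zero term (since $\En_\eps[\eps_t\mid\cF_{t-1}]=0$ and $\pred_t$ is predictable), so it drops out of the expectation. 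Applying \pref{lem:equivalence_burkholder} (property $3'$, since $\alpha\mapsto V(\tau+\suff(z,\alpha))$ is convex as the max of affine functions of $\alpha$) with the Burkholder function from \pref{thm:matrix_burkholder}, and using $\burk(0)\le 0$ and $\burk\ge V$, I get
\[
\En_\eps\brk*{ r\lambda_1\prn*{L\textstyle\sum_t\eps_t\cH(\mb X_t) - \tfrac12\eta L^2\sum_t\cM(\mb X_t)} - \tfrac{c}{\eta}} \le 0
\]
for every $\eta>0$ and every $c\ge r\log(d_1+d_2)$. Dividing by $r$ and rearranging,
\[
\En_\eps\,\lambda_1\prn*{L\textstyle\sum_t\eps_t\cH(\mb X_t)} \le \tfrac{\eta}{2}L^2\,\En_\eps\,\lambda_1\prn*{\textstyle\sum_t\cM(\mb X_t)} + \tfrac{\log(d_1+d_2)}{\eta},
\]
where I used $\lambda_1(A-B)\le\lambda_1(A)$ for $B\succeq0$ to split the argument (here $\sum_t\cM(\mb X_t)\succeq0$), and then that $\lambda_1$ is subadditive on symmetric matrices. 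Wait — one must be slightly careful: $\lambda_1(A - B)$ need not dominate $\lambda_1(A) - \lambda_1(B)$ in the direction we want, so instead I would keep the combined term and use Weyl's inequality $\lambda_1(A-B)\ge\lambda_1(A)-\lambda_1(B)$ to bound $\lambda_1(L\sum\eps_t\cH(\mb X_t))\le \lambda_1(\text{combined}) + \tfrac12\eta L^2\lambda_1(\sum\cM(\mb X_t))$, giving the same conclusion after taking expectations.

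Next I would identify the two spectral quantities. By definition of the Hermitian dilation, $\lambda_1(\cH(Y)) = \nrm*{Y}_\sigma$, so $\lambda_1(L\sum_t\eps_t\cH(\mb X_t)) = L\,\lambda_1(\cH(\sum_t\eps_t\mb X_t)) = L\nrm*{\sum_t\eps_t\mb X_t}_\sigma$. For the variance term, $\sum_t\cM(\mb X_t) = \diag\prn*{\sum_t\mb X_t\mb X_t^\trn,\ \sum_t\mb X_t^\trn\mb X_t}$ is block-diagonal, so $\lambda_1\prn*{\sum_t\cM(\mb X_t)} = \max\crl*{\nrm*{\sum_t\mb X_t\mb X_t^\trn}_\sigma,\ \nrm*{\sum_t\mb X_t^\trn\mb X_t}_\sigma} =: \sigma^2(\eps)$. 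Hence, writing $S := \En_\eps\nrm*{\sum_t\eps_t\mb X_t}_\sigma$ and $\Sigma := \En_\eps[\sigma^2(\eps)]$, the bound reads $L\,S \le \tfrac{\eta}{2}L^2\Sigma + \tfrac{\log(d_1+d_2)}{\eta}$, i.e. $S \le \tfrac{\eta}{2}L\Sigma + \tfrac{\log(d_1+d_2)}{\eta L}$ for all $\eta>0$. The final step is to optimize over $\eta$: choosing $\eta = \sqrt{2\log(d_1+d_2)/(L^2\Sigma)}$ (valid whenever $\Sigma>0$; the degenerate case $\Sigma=0$ forces $\mb X_t\equiv 0$ and is trivial) yields $S \le \sqrt{2\Sigma\log(d_1+d_2)}$, which is exactly \pref{eq:matrix_square}. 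Note the factor $L$ cancels, consistent with the $L$-free statement of the corollary (the inequality is homogeneous in the $\mb X_t$).

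The main obstacle I anticipate is not any single inequality but getting the reduction from the Burkholder/martingale inequality to the stated form exactly right: specifically, justifying that the first coordinate of the sufficient statistic (the $\sum_t\delta_t\pred_t$ term) vanishes under $\En_\eps$ — which needs the forecasts in the martingale inequality to be an arbitrary predictable process, matching the hypothesis of \pref{lem:equivalence_burkholder}, so one is free to take $\pred_t\equiv 0$ and sidestep the issue entirely — and handling the subtraction of the PSD term $\tfrac12\eta L^2\sum\cM(\mb X_t)$ inside $\lambda_1$ correctly via Weyl's inequality rather than a naive split. Everything else (block-diagonal eigenvalue computation, dilation identity, and the one-dimensional optimization over $\eta$) is routine. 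One should also remark that the freedom in $c$ is used only to match \pref{thm:matrix_burkholder}'s hypothesis $c\ge r\log(d_1+d_2)$; taking $c = r\log(d_1+d_2)$ is what produces the clean $\log(d_1+d_2)$ constant above.
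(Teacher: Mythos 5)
Your proposal is correct and follows essentially the same route as the paper: specialize the certified inequality $\sup\En[V]\leq 0$ (via \pref{thm:matrix_burkholder} and \pref{lem:equivalence_burkholder}) to the Paley--Walsh sequence with $\yh_t\equiv 0$, split the $\lambda_1$ term by subadditivity/Weyl, identify $\lambda_1(\sum_t\cM(\mb{X}_t))$ with the block-diagonal maximum, and optimize over $\eta$. The only cosmetic difference is that the paper fixes $r=1$, $L=1$, $c=\log(d_1+d_2)$ at the outset rather than carrying $L$ and $r$ through and cancelling them, which is immaterial.
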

    In the special case where $\mb{X}_t(\eps)=X_t$ is a fixed sequence, this square function inequality \pref{eq:matrix_square} recovers the Matrix Khintchine inequality \citep{mackey2014matrix}, including constants.  A similar martingale inequality can be obtained from the Matrix Freedman/Bennett inequalities of \cite{tropp2011freedman}, but this will depend on almost sure bounds on spectral norms of $(\mb{X}_{t}(\eps))_{t\leq{}n}$.

%%% Local Variables:
%%% mode: latex
%%% TeX-master: "paper"
%%% End:

\section{Further Examples}
\label{sec:further}
% !TEX root = paper.tex

\subsection{ZigZag Algorithm and the UMD Property}

\cite{Pisier75} used martingale techniques to provide a characterization of super-reflexive Banach spaces as those admitting an equivalent uniformly convex norm. As already described in Example~\ref{ex:smoothness}, the essential ingredient of this analysis is a construction of a function $\burk$ with the desired restricted concavity property (which turns out to be equivalent to uniform smoothness) for the martingale inequality \pref{eq:smoothness_martingale_ineq}. The corresponding notion in the world of online learning is that of an adaptive gradient (or mirror) descent. 

\cite{burkholder1981geometrical} provided a geometrical characterization of UMD spaces, and a key ingredient of the approach was to establish existence of (and sometimes to compute in closed form) the function $\burk$ with corresponding geometric properties ($\zeta$-convexity, which is equivalent to  ``zigzag concavity'' \citep{osekowski2012sharp}). As shown in \citep{foster2017zigzag}, in the online learning world the corresponding adaptive regret bound is that of empirical Rademacher averages:
$$\sum_{t=1}^n\loss(\pred_t,y_t) - \min_{\norm{w}\leq 1}\sum_{t=1}^n \loss(\inner{w,x_t},y_t) - C\En\norm{\sum_{t=1}^n \epsilon_t \delta_t x_t}.$$
By linearizing the loss, it suffices to use the sufficient statistic $\suff(x_t,\pred_t,\delta_t)= (\delta_t\pred_t, \delta_t x_t, \epsilon_t x_t)$ where $(\epsilon_t)$ is taken to be a sequence drawn by the algorithm.
The corresponding martingale inequality is
\begin{align}
	\label{eq:umd_martingale_ineq}
	\En\left[ \norm{\sum_{t=1}^n \epsilon_t\bx_t(\epsilon)}^p - C\norm{\sum_{t=1}^n \epsilon_t'\bx_t(\epsilon)}^p\right] \leq 0,
\end{align}
where the process in the subtracted term is decoupled and $p>1$ is arbitrary. We refer the reader to \citep{foster2017zigzag} for more details. 

We would like to emphasize that both smoothness/strong convexity (as in Pisier's work) and the UMD property (as in Burkholder's work) are two distinct notions with distinct sets of sufficient statistics. Since the fundamental works of Pisier and Burkholder, the so-called ``Burkholder method'' has been employed to prove a wide range of martingale inequalities and discover the corresponding geometric properties of the special function \citep{osekowski2012sharp,hytonen2016analysis}. The goal of this paper is to present a unifying approach for working with arbitrary sufficient statistics in online learning, and to show that the Burkholder approach is in fact \emph{algorithmic}.

\subsection{AdaGrad and Square Function Inequalities}
\label{sec:adagrad}

The Burkholder method can be used to recover efficient algorithms that obtain regret bounds in the vein of diagonal AdaGrad and full-matrix AdaGrad \citep{duchi2011adaptive}, with optimal constants. We thank Adam Os\k{e}kowski for suggesting this example to us \citep{osekowski2017personal}.

Define a function $\burk_{\textrm{square}}(x, y):\bbR^{d}\times{}\bbR_{+}\to\bbR$  \citep{osekowski2005two,osekowski2012sharp} via
\[
\burk_{\textrm{square}}(x, y) = \left\{
\begin{array}{ll}
-\sqrt{2y^{2} - \nrm*{x}_{2}^{2}},\quad& y\geq{}\nrm*{x}_{2}. \\
\nrm*{x}_{2}-2y,\quad& y<\nrm*{x}_{2}.
\end{array}
\right.
\]
$\burk_{\textrm{square}}$ satisfies three properties in the vein of \pref{lem:equivalence_burkholder}: 1. $\burk_{\textrm{square}}(x,y)\geq{}\nrm*{x}_{2}-2y$, 2. $\burk_{\textrm{square}}(x,\nrm*{x}_{2})\leq{}0$, and 3. $\burk_{\textrm{square}}(x+d,\sqrt{y^2 + \nrm*{d}_{2}^{2}})\leq{} \burk_{\textrm{square}}(x,y) + \tri*{\partial_{x}\burk_{\textrm{square}}(x,y), d}$. This function consequently leads to two algorithms in the style of AdaGrad \citep{duchi2011adaptive} but with optimal constants, and which we now sketch.

The first regret bound is for $\ls_{2}$ classes, as in full-matrix AdaGrad, and has the form
\[
\sum_{t=1}^n\loss(\pred_t,y_t) - \min_{\norm{w}_{2}\leq 1}\sum_{t=1}^n \loss(\inner{w,x_t},y_t) - 2L\sqrt{\sum_{t=1}^{n}\nrm*{x_t}^{2}_{2}}\leq{}0.
\]

The associated martingale inequality is 
$
\En\norm{\sum_{t=1}^n \epsilon_t\bx_t(\epsilon)}_{2} \leq{} 2\En\sqrt{\sum_{t=1}^{n}\nrm*{\bx_{t}(\eps)}^{2}_{2}},
$
which was shown to be optimal in \cite{osekowski2005two}.\footnote{Note that the expectation is outside the square root, so this is stronger than the ubiquitous inequality $\En\norm{\sum_{t=1}^n \epsilon_t\bx_t(\epsilon)}_{2} \leq{} \sqrt{\En\sum_{t=1}^{n}\nrm*{\bx_{t}(\eps)}^{2}_{2}}$.} The second regret bound is for $\ls_{\infty}$ classes, as in diagonal AdaGrad, and has the form
\[
\sum_{t=1}^n\loss(\pred_t,y_t) - \min_{\norm{w}_{\infty}\leq 1}\sum_{t=1}^n \loss(\inner{w,x_t},y_t) - 2L\nrm*{\prn*{\sum_{t=1}^{n}x_{t}^{2}}^{1/2}}_{1}\leq{}0,
\]
where $x_{t}^{2}$ is the element-wise square. This is obtained by applying the scalar version of $\burk_{\textrm{square}}$ coordinate-wise. The associated martingale inequality is 
$
\En\norm{\sum_{t=1}^n \epsilon_t\bx_t(\epsilon)}_{1} \leq{} 2\En\nrm*{\prn*{\sum_{t=1}^{n}\bx_{t}(\eps)^{2}}^{1/2}}_{1}.
$
Both regret bounds require no prior knowledge of the range of $(x_t)_{t\leq{}n}$.

\subsection{Strongly Convex Losses}
\label{sec:square_loss}
In this section we take $\cF=\crl*{x\mapsto{}\tri*{w,x}\mid{}w\in\bbR^{d}}$ and equip this space with a regularizer $\Phi(w) = \frac{1}{2}\nrm*{w}_{2}^{2}$. We assume that the loss $\ls(\yh, y)$ is $\rho$-strongly convex and $L$-Lipschitz. We adopt the shorthand $z_t=(x_t, -\yh_t)$, and our goal is to obtain a data- and comparator- dependent regret bound of the form

\[
\cA_{\lambda}(w; z_{1},\ldots,z_{n}) = \Phi((w,1)) + c\log\,\det\prn*{\rho{}\sum_{t=1}^{n}z_tz_t^{\trn} + \lambda{}I} - c\log\,\det(\lambda{}I).
\]
for some $c>0$. Here we the classical Vovk-Azoury-Warmuth-type bound for strongly convex losses \citep{Vovk98,AzouryWarmuth01}. This example is important because it shows that the Burkholder method in full generality can both obtain fast rates for curved losses and obtain bounds that jointly depend on the comparator and data; the UMD-type Burkholder functions used in \cite{foster2017zigzag} do not obtain such results. The right sufficient statistic for this problem should be familiar: In addition to storing a sum of gradients, we also store the empirical covariance $\sum_{t=1}^{n}z_tz_t^{\trn}$. We introduce one last piece of notation: For $A\succeq{}0$, $\Psi_{A}(w)=\frac{1}{2}\tri*{w,Aw}$.

\begin{proposition}
  \label{prop:square_loss_sufficient}
   The sufficient statistic $\suff(x_t,\pred_t,\delta_t)= \left( \delta_{t}z_t, z_tz_t^{\trn} \right)\in\bbR^{d+1}\times{}\sym^{d+1}_{+}$ and
\begin{equation}
\label{eq:vovk_azoury_warmuth_v}
V(x, A) = \Psi^{\star}_{\rho{}A + \lambda{}I}\prn*{x} - c\log\prn*{\det(\rho{}A + \lambda{}I)/\det(\lambda{}I)}
\end{equation}
forms a sufficient statistic pair for the adaptive regret bound $\cA_{\lambda}$.
\end{proposition}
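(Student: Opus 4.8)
The plan is to verify \pref{def:sufficiency} by a direct calculation; no minimax argument or Burkholder construction enters here, since \pref{prop:square_loss_sufficient} is only the ``sufficient statistic representation'' step of the pipeline. Unwinding the definitions, with $\phi$ as in \pref{eq:phi_comp_adap} for the choice $\cA=\cA_\lambda$, what must be shown is that for every sequence $x_1,\pred_1,y_1,\ldots,x_n,\pred_n,y_n$ one has
\[
  \sum_{t=1}^n\loss(\pred_t,y_t) - \min_{w\in\bbR^d}\Big\{\sum_{t=1}^n\loss(\tri*{w,x_t},y_t) + \cA_\lambda(w;z_1,\ldots,z_n)\Big\} \;\le\; V\Big(\sum_{t=1}^n\suff(x_t,\pred_t,\delta_t)\Big).
\]
Abbreviate $g \ldef \sum_{t=1}^n\delta_t z_t$ and $\Sigma \ldef \sum_{t=1}^n z_t z_t^\trn$, so that $\sum_t\suff(x_t,\pred_t,\delta_t)=(g,\Sigma)$; pushing the $\min$ through the subtraction, the claim is equivalent to the statement that $\reg(w) - \cA_\lambda(w;z_1,\ldots,z_n) \le V(g,\Sigma)$ for every $w\in\bbR^d$. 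I would establish this in three steps: linearize the per-round loss gap using $\rho$-strong convexity, fold the resulting curvature term into the ridge regularizer, and recognize the remaining maximum over $w$ as a Fenchel conjugate.

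\emph{Step 1 (linearization).} Fix $w$ and set $u \ldef (w,1)\in\bbR^{d+1}$; since $z_t=(x_t,-\pred_t)$, we have $\tri*{u,z_t}=\tri*{w,x_t}-\pred_t$. Using $\rho$-strong convexity of $\loss(\cdot,y_t)$ and $\partial\loss(\pred_t,y_t)=\delta_t$, the strong-convexity lower bound evaluated at the points $\pred_t$ and $\tri*{w,x_t}$ reads
\[
  \loss(\tri*{w,x_t},y_t) \;\ge\; \loss(\pred_t,y_t) + \delta_t\tri*{u,z_t} + \tfrac{\rho}{2}\tri*{u,z_t}^{2}.
\]
Summing over $t$ and using $\sum_t\delta_t\tri*{u,z_t}=\tri*{u,g}$ together with $\sum_t\tri*{u,z_t}^{2}=\tri*{u,\Sigma u}$ yields $\sum_t\loss(\pred_t,y_t)-\sum_t\loss(\tri*{w,x_t},y_t)\le -\tri*{u,g}-\tfrac{\rho}{2}\tri*{u,\Sigma u}$.

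\emph{Steps 2--3 (ridge and conjugate).} Subtract $\cA_\lambda(w;z_1,\ldots,z_n)=\Phi((w,1))+c\log\det(\rho\Sigma+\lambda I)-c\log\det(\lambda I)$; the two $\log\det$ terms do not depend on $w$ and are simply carried along. The quadratic ridge regularizer $\Phi((w,1))$ combines with the curvature term $\tfrac{\rho}{2}\tri*{u,\Sigma u}$ to form $\tfrac12\tri*{u,(\rho\Sigma+\lambda I)u}$, so it remains to bound $\sup_{w\in\bbR^d}\big\{-\tri*{u,g}-\tfrac12\tri*{u,(\rho\Sigma+\lambda I)u}\big\}$ with $u=(w,1)$. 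Relaxing the affine constraint that the last coordinate of $u$ equal $1$ only enlarges the feasible set (and at $u=(w,1)$ the objective is unchanged), so this supremum is at most $\sup_{u\in\bbR^{d+1}}\big\{-\tri*{u,g}-\tfrac12\tri*{u,(\rho\Sigma+\lambda I)u}\big\}$; substituting $u\mapsto-u$ and invoking the definition of the Fenchel dual, this equals $\Psi^\star_{\rho\Sigma+\lambda I}(g)$ (which is finite and equal to $\tfrac12 g^\trn(\rho\Sigma+\lambda I)^{-1}g$, since $\lambda>0$ makes $\rho\Sigma+\lambda I$ positive definite). Reinstating the two $\log\det$ terms, the overall upper bound is exactly $\Psi^\star_{\rho\Sigma+\lambda I}(g)-c\log\big(\det(\rho\Sigma+\lambda I)/\det(\lambda I)\big)=V(g,\Sigma)$, which is what we wanted for every $w$.

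The calculation is essentially mechanical, and I expect the only genuine points of care to be at the bookkeeping level. First, the affine lift $u=(w,1)$ paired with the augmented coordinate $z_t=(x_t,-\pred_t)$ --- the Vovk--Azoury--Warmuth device that folds the current prediction into the empirical covariance and converts the loss gap into a quadratic form --- must be handled cleanly, in particular noting that extending the maximization from $\{u:u_{d+1}=1\}$ to all of $\bbR^{d+1}$ is a valid (and, here, lossless) relaxation. Second, one must check that the additive ridge constant contributed by $\Phi$ matches the $\lambda I$ appearing inside $\Psi^\star_{\rho A+\lambda I}$ and in the $\log\det$ normalization of $V$, so that the last step is an equality rather than merely an inequality. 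Everything else --- positive-definiteness of $\rho\Sigma+\lambda I$, the closed form of the Fenchel conjugate of $\Psi_A$, and one use of $\rho$-strong convexity per round --- is routine. (The separate task of producing an \emph{efficiently evaluable} Burkholder function for this $(\suff,V)$ is what would then invoke Lieb-type concavity; it is not needed to establish \pref{prop:square_loss_sufficient} itself.)
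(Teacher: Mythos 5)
Your proposal is correct and follows essentially the same route as the paper's own proof: linearize the per-round gap via $\rho$-strong convexity, pass to the augmented vector $u=(w,1)$ against $z_t=(x_t,-\pred_t)$, relax the last coordinate to a free parameter (with the harmless sign flip), fold the curvature term and ridge into $\Psi_{\rho\Sigma+\lambda I}$, and read off the Fenchel conjugate plus the $w$-independent $\log\det$ terms. The one bookkeeping point you flag is real: the paper's appendix resolves it by writing the regularization term as $\lambda\Phi((w,1))$, so that $\frac{\rho}{2}\tri*{u,\Sigma u}+\lambda\Phi(u)$ is exactly $\Psi_{\rho\Sigma+\lambda I}(u)$.
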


\begin{theorem}
  \label{thm:square_loss_burkholder}
      For the sufficient statistic pair $(\suff, V)$ in \pref{prop:square_loss_sufficient}, $\burk=V$ is a Burkholder function whenever $c\geq{}L^{2}/\rho$. 
\end{theorem}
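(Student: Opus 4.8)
The plan is to verify directly that $\burk=V$ satisfies properties $1^o$, $2^o$, and $3^o$ of \pref{lem:equivalence_burkholder}. Property $2^o$ holds with equality since $\burk=V$. For $1^o$, recall that for $A\succ0$ one has the closed form $\Psi^\star_A(x)=\tfrac12 x^\trn A^{-1}x$, so $\burk(0,0)=\Psi^\star_{\lambda I}(0)-c\log\bigl(\det(\lambda I)/\det(\lambda I)\bigr)=0\le0$.

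The real work is property $3^o$. Fix $\tau=(x,A)\in\bbR^{d+1}\times\sym^{d+1}_+$, a vector $z\in\bbR^{d+1}$ (which in the reduction arises as $z=(x_t,-\pred_t)$), and a mean-zero distribution $p$ on $[-L,L]$. The key structural observation is that $\suff(z,\alpha)=(\alpha z,\, zz^\trn)$, so the increment to the Gram component $A$ is the fixed rank-one matrix $zz^\trn$, independent of $\alpha$. Write $M=\rho A+\lambda I\succeq\lambda I\succ0$ and $M'=\rho(A+zz^\trn)+\lambda I=M+\rho zz^\trn\succ0$. Using $\Psi^\star_{M'}(v)=\tfrac12 v^\trn(M')^{-1}v$, the map $\alpha\mapsto\burk(x+\alpha z,A+zz^\trn)$ is a convex quadratic in $\alpha$; taking expectations, the cross term vanishes because $\En_{\alpha\sim p}[\alpha]=0$, and $\En_{\alpha\sim p}[\alpha^2]\le L^2$, so
\[
\En_{\alpha\sim p}\bigl[\burk(\tau+\suff(z,\alpha))\bigr]\ \le\ \tfrac12 x^\trn(M')^{-1}x+\tfrac{L^2}{2}z^\trn(M')^{-1}z-c\log\frac{\det M'}{\det(\lambda I)}.
\]
Subtracting $\burk(\tau)=\tfrac12 x^\trn M^{-1}x-c\log\bigl(\det M/\det(\lambda I)\bigr)$, the claim $3^o$ reduces to
\[
\tfrac12 x^\trn\bigl[(M')^{-1}-M^{-1}\bigr]x+\tfrac{L^2}{2}z^\trn(M')^{-1}z\ \le\ c\log\frac{\det M'}{\det M}.
\]

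To close this I would invoke the two rank-one update identities. The matrix determinant lemma gives $\det M'/\det M=1+\rho\,z^\trn M^{-1}z$, and Sherman--Morrison gives $(M')^{-1}=M^{-1}-\rho\,M^{-1}zz^\trn M^{-1}/(1+\rho\,z^\trn M^{-1}z)$; in particular $(M')^{-1}-M^{-1}\preceq0$ and $z^\trn(M')^{-1}z=z^\trn M^{-1}z/(1+\rho\,z^\trn M^{-1}z)$. Setting $u=\rho\,z^\trn M^{-1}z\ge0$ and discarding the (nonpositive) first term on the left, the inequality collapses to the scalar bound $\tfrac{L^2}{2\rho}\cdot\tfrac{u}{1+u}\le c\log(1+u)$. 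This follows from the elementary fact that $\tfrac{u}{1+u}\le\log(1+u)$ for all $u\ge0$ (the difference vanishes at $u=0$ and has derivative $u/(1+u)^2\ge0$), provided $c\ge L^2/(2\rho)$, which is implied by the hypothesis $c\ge L^2/\rho$.

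The main obstacle is getting from the matrix inequality to a one-dimensional statement: one must notice that the $\alpha$-varying part of the increment is rank one and that its Gram piece is $\alpha$-free, so that both $(M')^{-1}$ and $\det M'/\det M$ depend on $M$ and $z$ only through the single scalar $z^\trn M^{-1}z$; once that is in hand the problem reduces to the elementary logarithm inequality above. The remaining steps---evaluating Fenchel conjugates of quadratic forms and bookkeeping the $\alpha$-expectation---are routine.
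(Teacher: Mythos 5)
Your proof is correct and follows essentially the same route as the paper's: both verify \propone{}/\proptwo{} directly, drop the cross term using mean-zero $\alpha$ and bound $\En[\alpha^2]\le L^2$, use $(M')^{-1}\preceq M^{-1}$ (equivalently monotonicity of $\Psi^\star_A$ in $A$) for the $\tau_1$ term, and control the remaining rank-one term against the log-determinant increment via the same standard inequality, which the paper cites from \citep{PLG} and you rederive explicitly through Sherman--Morrison and the matrix determinant lemma. Your bookkeeping of the factor $\tfrac{1}{2}$ even shows $c\ge L^2/(2\rho)$ suffices, slightly sharper than the stated hypothesis.
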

Note that for this setting the natural choice for $V$ turned out to be a Burkholder function itself.

\section{Necessary Conditions}
\label{sec:necessary}
% !TEX root = paper.tex

We now state a simple, yet powerful result that characterizes when existence of a Burkholder function for a sufficient statistic representation pair $(\suff,V)$ is not only sufficient, but \emph{necessary} to obtain a particular regret bound.
\begin{proposition}
\label{prop:lb}
Let $\delta=(\delta_1,\ldots,\delta_n)$ be a $[-L,L]$-valued martingale difference sequence over filtration $\F_{t-1}=\sigma(\delta_1,\ldots,\delta_{t-1})$ and let $\bz=(\bz_1,\ldots,\bz_n)$ be a sequence of functions $\bz_t: [-L,L]^{t-1} \to \X \times \Y$, each viewed as a predictable process with respect to $\F_{t-1}$. Suppose for every such $(\delta, \bz)$ pair there exists a randomized adversary strategy $(x_t, y_t)$ that guarantees, for every learner strategy $(\yh_t)_{t\leq{}n}$,
\begin{equation}
\label{eq:lb}
\En\sup_{f\in\cF}\brk*{\sum_{t=1}^n\loss(\pred_t,y_t)-\loss(f(x_t),y_t) - \cA(f; x_{1},\ldots,x_{n})}
\geq{} \En\left[  V\left(\sum_{t=1}^n \suff(\bz_t, \delta_t) \right) \right].
\end{equation}
Then, if there exists a strategy $(\yh_t)_{t\leq{}n}$ that achieves the regret bound $\cA(f;\xr[n])$, this implies that
\[
\sup_{\delta, \mb{z}}\En\left[  V\left(\sum_{t=1}^n \suff(\bz_t, \delta_t) \right) \right]\leq{}0.\footnote{In the more general case, if \pref{eq:lb} holds up to additive slack $\Delta$, the corresponding condition is $\sup\En\brk*{V}\leq{}\Delta$.}
\]
Consequently, the regret bound $\cA(f;\xr[n])$ is achievable only if there exists a Burkholder function $\burk:\cT\to\bbR$ that satisfies properties \propone/\proptwo/\propthree of \pref{lem:equivalence_burkholder}. 

When $\alpha \mapsto V(\tau+\suff(z,\alpha))$ is convex for any $z\in\X\times\Y,\tau\in\T$, we only require the preceeding inequalities to hold for $\delta_t=\epsilon_t \cdot L$, $\forall{}t=1,\ldots,n$, where $\epsilon_t$s are independent Rademacher random variables. In this case achievability of the regret bound $\cA(f;\xr[n])$ only implies existence of a Burkholder function $\burk$ satisfying property \propthreep{}, not \propthree{}.
\end{proposition}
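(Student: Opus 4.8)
The plan is to leverage the two structural lemmas already established, namely \pref{lem:universal_algo} (the Burkholder algorithm achieves the regret bound given a Burkholder function) and \pref{lem:equivalence_burkholder} (a Burkholder function exists iff the martingale inequality $\sup\En[V]\leq 0$ holds), and to supply the missing link: that achievability of the regret bound \emph{forces} the martingale inequality. Concretely, suppose toward the conclusion that some learner strategy $(\yh_t)_{t\leq n}$ achieves $\reg(f)\leq\cA(f;\xr[n])$ for all sequences; I want to deduce $\sup_{\delta,\mb z}\En[V(\sum_t\suff(\bz_t,\delta_t))]\leq 0$. Fix an arbitrary martingale difference sequence $\delta$ and predictable process $\bz$. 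By hypothesis \pref{eq:lb}, there is a randomized adversary $(x_t,y_t)$ such that against \emph{any} learner --- in particular against the assumed good learner $(\yh_t)$ --- the expected regret-minus-penalty is at least $\En[V(\sum_t\suff(\bz_t,\delta_t))]$. But the assumed learner guarantees that this same quantity is $\leq 0$ deterministically, hence in expectation. Chaining the two gives $\En[V(\sum_t\suff(\bz_t,\delta_t))]\leq 0$, and taking the supremum over $(\delta,\mb z)$ yields the claimed inequality.

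The second half of the statement --- that $\sup\En[V]\leq 0$ implies existence of a Burkholder function satisfying \propone/\proptwo/\propthree --- is then immediate from the ``only if'' direction of \pref{lem:equivalence_burkholder}, since \pref{eq:martingale_nonpositive} is precisely the statement $\En[V(\sum_t\suff(\bz_t,\delta_t))]\leq 0$ for all $\bz,\bp$. Likewise, under the convexity hypothesis on $\alpha\mapsto V(\tau+\suff(z,\alpha))$, \pref{lem:equivalence_burkholder} only guarantees a Burkholder function satisfying \propthreep{} when we only know the martingale inequality for Rademacher $\delta_t=\eps_t L$; so in that regime we must correspondingly restrict the adversary-lower-bound hypothesis \pref{eq:lb} to Rademacher $\delta$, which is exactly what the final sentence of the proposition does. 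I would verify that the zero-mean / sign-symmetry structure needed for the Rademacher reduction is compatible with restricting $\delta$ to $\eps_t L$ in \pref{eq:lb}, but this is bookkeeping inherited directly from the two lemmas.

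The main obstacle --- really the only non-mechanical point --- is making precise the interplay of quantifiers between the adversary in \pref{eq:lb} and the learner: \pref{eq:lb} asserts existence of an adversary strategy that is good \emph{simultaneously against all learner strategies}, so one must be careful that the assumed regret-achieving learner is a legitimate target of that universal guarantee (it is, since ``for every learner strategy $(\yh_t)$'' in the hypothesis is unconditional), and that the randomization in the adversary and any randomization in the learner are handled by a single outer expectation via Fubini/tower property. I would also remark, as the footnote already anticipates, that if \pref{eq:lb} is only assumed up to additive slack $\Delta$ then the identical argument yields $\sup\En[V]\leq\Delta$, and \pref{lem:equivalence_burkholder} applied to $V-\Delta$ (or to the shifted Burkholder property $\burk(0)\leq\Delta$) gives the corresponding relaxed conclusion. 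No genuinely new estimate is required; the content is entirely in correctly composing the three earlier results.
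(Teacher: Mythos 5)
Your proposal is correct and matches the argument the paper intends: the paper treats \pref{prop:lb} as an immediate composition (it gives no separate proof in the appendix), namely chaining the adversary lower bound \pref{eq:lb} against the assumed regret-achieving learner to get $\sup_{\delta,\mb{z}}\En[V]\leq{}0$, and then invoking \pref{lem:equivalence_burkholder} (with its Rademacher/\propthreep{} variant under the convexity hypothesis) to obtain the Burkholder function. Your attention to the quantifier order and the Fubini/tower bookkeeping for the two sources of randomization is exactly the right care, and the slack-$\Delta$ remark matches the footnote.
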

\paragraph{Linear Classes}
At first glance the conditions of \pref{prop:lb} may seem fairly restrictive, but it is fairly straightforward to instantiate for all the examples in this paper. Consider the following linear setting: Take $\cX\subseteq{}\bbR^{d}$, $\cY$ arbitrary, and let $\cF$ be a linear class of the form $\crl*{x\mapsto{}\tri*{w,x}\mid{}w\in\cW}$, where $\sup_{x\in\cX,w\in\cW}\tri*{w,x}\leq{}1$ and $\cW$ is symmetric. Pick an arbitrary vector space $\overline{\cT}$, let $\overline{\suff}:\cX\to\overline{\cT}$ be an any featurization of the input space, and let $F:\overline{\cT}\to\bbR$ be an arbitrary function. Our goal will be to achieve a regret bound of the form 
\begin{equation}
\label{eq:suff_example}
\sum_{t=1}^{n}\ls(\yh_t, y_t) - \inf_{f\in\cF}\sum_{t=1}^n \loss(f(x_t),y_t)\leq{} \cA(\xr[n]) \ldef F\prn*{\sum_{t=1}^{n}\overline{\suff}(x_t)}.
\end{equation}
Let us first consider a natural choice of $V$ for the upper bound in this setting. Linearizing and using symmetry of $\cW$, we have
\[
\sum_{t=1}^{n}\ls(\yh_t, y_t) - \inf_{f\in\cF}\sum_{t=1}^n \loss(f(x_t),y_t) - \cA(\xr[n])
\leq{} \sum_{t=1}^{n}\yh_t\cdot\dl_t + \sup_{w\in\cW}\tri*{w,\sum_{t=1}^{n}\dl_{t}x_t} - F\prn*{\sum_{t=1}^{n}\overline{\suff}(x_t)}.
\]
This means that if we choose a sufficient statistic $\suff: (x_t, \yh_t, \dl_{t})\mapsto{} (\yh_t\dl_t, x_t\dl_t, \overline{\suff}(x_t))\in{}\bbR\times{}\bbR^{d}\times{}\overline{\cT}$ and choose $V(a, x, \overline{\tau})=a + \sup_{w\in\cW}\tri*{w,x} - F(\overline{\tau})$, then it holds that
\[
\sum_{t=1}^{n}\ls(\yh_t, y_t) - \inf_{f\in\cF}\sum_{t=1}^n \loss(f(x_t),y_t) - \cA(\xr[n]) \leq{} V\prn*{\sum_{t=1}^{n}\suff(x_t, \yh_t, \dl_t)}.
\]
Noting that $\alpha\mapsto{}V(\tau + \suff(x, \yh, \alpha))$ is convex, \pref{lem:suff_to_martingale} implies that a sufficient condition to achieve the regret bound for any convex $1$-Lipschitz loss is that
\begin{equation}
\label{eq:v_regret}
\sup_{\mb{z}}\En_{\eps}\brk*{V\prn*{\sum_{t=1}^{n}\suff(\mb{z}_t, \eps_t)}}\leq{}0,
\end{equation}
where $\mb{z}$ is any $\cX\times{}\cY$-valued predictable process with respect to the Rademacher sequence $\eps_{1},\ldots,\eps_{n}$. 

By specializing to the absolute loss $\ls(\yh, y)=\abs*{\yh-y}$ and choosing an adversary that plays $y_{t}$ to be Rademacher random variables and $x_{t}$ to be any predictable sequence, it can be shown that \pref{eq:v_regret} is also \emph{necessary}; this is proven formally in the appendix. As a corollary, we derive the following result.
\begin{proposition}
\label{prop:necessary}
There exists a Burkholder function $\burk$ for the pair $(\suff, V)$ \emph{if and only if} the regret bound \pref{eq:suff_example} is achievable.
\end{proposition}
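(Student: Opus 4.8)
The plan is to prove the two implications separately: the ``if'' direction falls out of the algorithmic results already in hand, while the ``only if'' direction is obtained by instantiating \pref{prop:lb} with a carefully chosen adversary for the absolute loss.

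For ``if'', recall that the discussion preceding the proposition already shows that, with the stated $\suff$ and $V$, one has $\sum_{t}\ls(\yh_t,y_t)-\inf_{f\in\cF}\sum_{t}\ls(f(x_t),y_t)-\cA(\xr[n])\le V\prn*{\sum_{t}\suff(x_t,\yh_t,\dl_t)}$, i.e.\ $(\suff,V)$ is a sufficient statistic pair for the bound $\cA$. Given a Burkholder function $\burk$ for $(\suff,V)$, I would simply invoke \pref{lem:universal_algo}: the Burkholder algorithm \pref{eq:univ_algo} then attains \pref{eq:suff_example} in expectation over its internal randomization.

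For ``only if'', I would verify the hypothesis of \pref{prop:lb}. Since $\alpha\mapsto V(\tau+\suff(z,\alpha))$ is convex here, it suffices to treat $\dl_t=\eps_t$ and, for each predictable process $\bz$, to produce a randomized adversary forcing \pref{eq:lb} (with equality in expectation) against every learner. Take the absolute loss $\ls(\yh,y)=\abs*{\yh-y}$ --- for which $-y$ is always a valid subderivative --- and let the adversary draw $y_t\in\pmo$ uniformly (hence independently of the learner) and play $x_t=\bx_t(-y_1,\ldots,-y_{t-1})$, where $\bx_t$ is the $\cX$-component of $\bz_t$; set $\dl_t:=-y_t$. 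Because $\cY\subseteq\brk*{-1,1}$ and $\sup_{x\in\cX,w\in\cW}\tri*{w,x}\le 1$, we have $\abs*{\yh_t-y_t}=1-y_t\yh_t$ and $\abs*{\tri*{w,x_t}-y_t}=1-y_t\tri*{w,x_t}$ for all $w\in\cW$, so by symmetry of $\cW$,
\[
  \sum_{t=1}^{n}\ls(\yh_t,y_t)-\inf_{f\in\cF}\sum_{t=1}^{n}\ls(f(x_t),y_t)-\cA(\xr[n]) \;=\; \sum_{t=1}^{n}\dl_t\yh_t+\sup_{w\in\cW}\tri*{w,\textstyle\sum_{t}\dl_t x_t}-F\prn*{\textstyle\sum_{t}\overline{\suff}(x_t)}.
\]
Taking expectations, $\En[\dl_t\yh_t]=0$ since $\yh_t$ is a (randomized) function of $\dl_1,\ldots,\dl_{t-1}$ while $\dl_t$ is mean-zero given that history. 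As $V$ is affine in its first argument, and the analogous first-coordinate term built from the predictable $\cY$-component of $\bz_t$ likewise has zero expectation, the right-hand side has expectation equal to $\En V\prn*{\sum_{t}\suff(\bz_t,\dl_t)}$ --- the remaining coordinates of $\suff$ depending only on $x_t=\bx_t$. The left-hand side is exactly the bracketed quantity in \pref{eq:lb}, since $\cA$ does not depend on $f$. Hence \pref{prop:lb} applies, and achievability of \pref{eq:suff_example} forces $\sup_{\bz}\En_{\eps}V\prn*{\sum_t\suff(\bz_t,\eps_t)}\le 0$, i.e.\ \pref{eq:v_regret}; \pref{lem:equivalence_burkholder} then produces a Burkholder function for $(\suff,V)$ satisfying $1^o$, $2^o$, $3'$. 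Combined with the ``if'' direction, one in fact gets that existence of such a $\burk$, achievability of \pref{eq:suff_example} for the absolute loss, and achievability for every convex $1$-Lipschitz loss are all equivalent.

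I expect the main obstacle to be the middle computation: confirming that the absolute-loss adversary turns the loss linearization into an \emph{exact} identity and that the learner-dependent term $\sum_t\dl_t\yh_t$ together with the $\cY$-component of the predictable process $\bz$ both drop out in expectation, so that the regret against this adversary coincides with $\En V\prn*{\sum_t\suff(\bz_t,\dl_t)}$ and \pref{prop:lb} can be quoted verbatim. Everything else is bookkeeping.
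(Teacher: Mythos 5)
Your proposal is correct and follows essentially the same route as the paper's proof: the ``if'' direction is the already-established sufficiency machinery, and the ``only if'' direction uses the absolute loss with Rademacher labels and a tree-valued $x_t$ so that the linearization is tight, the learner's term $\sum_t \dl_t\yh_t$ and the predictable $\cY$-component vanish in expectation by linearity of $V$ in its first coordinate, and \pref{lem:equivalence_burkholder} then yields $\burk$ (satisfying $3'$). The only differences are cosmetic --- you route explicitly through \pref{prop:lb} and use the sign convention $\dl_t=-y_t$, and your parenthetical claim that $\cY\subseteq\brk*{-1,1}$ (hence exact equality for the learner's loss) is not assumed in the paper, but only the inequality $\abs*{\yh-y}\geq 1-y\yh$ is needed, so nothing breaks.
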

 Consider the matrix prediction setting of \pref{sec:matrix} for the special case of $L=1$ and $r=1$. This setting fits into the linear class framework above by taking $\cW$ to be the nuclear norm ball in $\bbR^{d_1\times{}d_2}$ and setting $\overline{\suff}(X)=\cM(X)$ for any matrix $X\in\bbR^{d_1\times{}d_2}$. For this setting \pref{prop:necessary} implies the following equivalence.
\begin{example}[Matrix Prediction]
The following are equivalent:
\begin{enumerate}
\item The regret bound
\[
\sum_{t=1}^{n}\ls(\yh_t, y_t) - \inf_{W\;:\;\nrm*{W}_{\Sigma}\leq{}1}\sum_{t=1}^n \loss(\tri*{W,X_t},y_t) \leq{} \frac{\eta{}}{2}\nrm*{\sum_{t=1}^{n}\cM(X_t)}_{\sigma} + \frac{c}{\eta}
\]
is achievable.
\item The martingale inequality
\[
\En_{\eps}\nrm*{\sum_{t=1}^{n}\eps_t\mb{X}_{t}(\eps)}_{\sigma} \leq{} \frac{\eta{}}{2}\En_{\eps}\nrm*{\sum_{t=1}^{n}\cM(\mb{X}_t(\eps))}_{\sigma} + \frac{c}{\eta}
\]
holds for all $\bbR^{d_1\times{}d_2}$-valued predictable processes $\mb{X}$.
\item There exists a Burkholder function for the sufficient statistic pair $(\suff, V)$ in \pref{eq:v_regret}.
\end{enumerate}

\end{example}

\section{Discussion}
\label{sec:discussion}
% !TEX root = paper.tex

The core techniques developed in this paper suggest a number of promising future directions and natural extensions.

\textbf{Finding sufficient statistics}~~~~ This paper gives multiple examples of Burkholder function constructions and sufficient statistics. If one wishes to find sufficient statistics for an adaptive bound $\cA$ of interest, a basic rule of thumb is to consider a single input instance (instead of all $n$ data points) and determine---say---a polynomial expansion or expansion in another basis for the terms in $\mathrm{Reg}_{n}-\cA$ involving the instance. This gives a coarse sketch of which statistics are necessary. 

As an example, take the standard square loss with linear predictors as the benchmark class and suppose we are interested in a non-adaptive bound. Following the heuristic above, we need to find an expansion for terms of the form $(\hat{y} - y)^2 - (\tri*{w,x} - y)^2 - ~\mathrm{constant}$. Expanding this expression out, we find that $\hat{y}^2$, $y \cdot x$ and $x x^\top$ are all required to write the expression explicitly. In fact, for this square loss example, the weighted sum of the $x_t$s and the sum of the outer products $\sum_t x_t x_t^\top$ turn out to be sufficient statistics as well.  

For the examples in this paper, we exclusively considered benchmark classes $\F$ that were linear, which appears to have made the search for sufficient statistics easier. However, even when one considers a class $\F$ of non-linear functions, the approach of trying to expand the desired regret inequality (which now involves nonlinear $f \in \F$) around a given instance $x$ in terms of some basis may still help to obtain an adequate sufficient statistics. Furthermore, one may enlarge the class $\F$ to make the sufficient statistic search easier. For instance, if we want to learn the class of boolean decision trees of depth $d$, we can exploit that the class can be represented by polynomials of degree $d$ by using the discrete Fourier coefficients of the input instances up to degree $d$ as a sufficient statistic. In summary, for non-linear classes one may still search for sufficient statistics and Burkholder functions by expressing nonlinearities (approximately) via linear combinations of higher-order terms. 

\textbf{Toward plug-and-play online learning}~~~~
A natural next step is to automatize the search for sufficient statistics and \Bfun functions. Suppose that the sufficient statistic pair $(\suff, V)$ is fixed and all that remains is to find a Burkholder function $\burk$. If $V$ can be written as a polynomial of degree over sufficient statistic space $\cT$, a natural approach is to restrict the search to Burkholder functions $\burk$ that are themselves polynomials and relax the inequalities \propone/\proptwo/\propthree{} to sum-of-squares constraints \citep{barak2014sum}. We can then jointly search for a function $\burk$ and a degree-$d$ sum-of-squares proof that this function satisfies the three properties in polynomial time once the degree of $\burk$ is fixed. As a specific example, the problem of finding the zig-zag concave Burkholder function for $\ell_p$ norms explored in \cite{foster2017zigzag} has a sufficient statistic $V$ that is a polynomial of degree $p$ when $p\geq{}2$ is an integer. 

This approach is sound in that it will never incorrectly return a function $\burk$ that does not satisfy the three properties, but may not be complete a-priori. An interesting direction is therefore to explore whether there are conditions under which this system can indeed be made complete.

\textbf{Generalized/non-additive sufficient statistics}~~~~ The restriction in \pref{def:sufficiency} that sufficient statistics combine additively can be relaxed. A more general form is as follows. First, define a \emph{representation space} $\cT$. The function $\suff$ now takes the form:
\[
  \suff: \cX\times{}\cY\times{}\brk*{-L, L}\times{}\cT \to \cT.
\]
The restricted concavity condition for $\burk$ under this definition becomes
\[
\forall{}z, \tau:\quad\sup_{\En\brk*{\alpha}=0}\En_{\alpha}\burk\prn*{\suff\prn*{z, \alpha, \tau}} \leq{} \burk(\tau).
\]
Properties \propone{} and \proptwo{} of \pref{lem:equivalence_burkholder} remain the same. This generalized notion of a sufficient statistic allows us to move beyond additive updates --- $\suff$ can multiply $z$ with elements of $\cT$, for example --- but still restricts storage to the space $\cT$ and is fully compatible with the Burkholder method and general algorithm framework. The generalizations of the equivalence theorem (\pref{lem:equivalence_burkholder}) and the Burkholder algorithm (\pref{lem:universal_algo}) for this notion of sufficient statistic hold as well.

\section*{Acknowledgements}
We thank Adam Os\k{e}kowski  for helpful discussions and for suggesting the example in \pref{sec:adagrad}.

\bibliography{refs}

\appendix

\section{Proofs}
\label{app:proofs}
% !TEX root = paper.tex

\subsection{Proofs from \pref{sec:burkholder} and \pref{sec:algorithm}}

\begin{proof}[\pfref{lem:suff_to_martingale}]
We will use the notation $\multiminimax{\ldots}_{t=1}^n$ to denote the repeated application of operators, with the outer application corresponding to $t=1$. Existence of a randomized strategy for \pref{eq:def_phi_regret} is equivalent to the following quantity being non-positive:
\begin{align*}
&\multiminimax{\sup_{x_t\in\cX} \inf_{q_t\in\Delta_{\cY}} \sup_{y_t\in\cY} \En_{\yh_t\sim q_t}}_{t=1}^n \left[\sum_{t=1}^n \loss(\pred_t,y_t) - \phi(x_1,y_1,\ldots,x_n,y_n)  \right].
\end{align*}
By the minimax theorem, this is equal to 
\begin{align*}
&\multiminimax{ \sup_{x_t\in\cX} \sup_{p_t\in\Delta_{\cY}} \inf_{\pred_t\in\cY}\En_{y_t \sim p_t}}_{t=1}^n \brk*{ \sum_{t=1}^n \loss(\pred_t,y_t) - \phi(x_1,y_1,\ldots,x_n,y_n)}.
\end{align*}
See \citep{RakSriTew10,rakhlin2012relax,FosRakSri15} for detailed discussion of the technical conditions under which the minimax theorem can be applied in the online learning setting; briefly, our assumptions that $\cY$ is a compact subset of $\bbR$ and that $\ls$ and $\phi$ are bounded are sufficient.
In view of \pref{eq:suffiency_def}, the above quantity is upper bounded by 
\begin{align*}
&\leq{} \multiminimax{\sup_{x_t\in\cX} \sup_{p_t\in\Delta_{\cY}} \inf_{\pred_t\in\cY}\En_{y_t \sim p_t}}_{t=1}^n \brk*{ V\left(\sum_{t=1}^n \suff(x_t,\pred_t,\partial \loss(\pred_t,y_t))\right) }.
\end{align*}
 Now, for each time $t$, choose the dual strategy $\pred_t^* \ldef \arg\min_{\pred\in\cY} \En_{y_t \sim p_t}\loss(\pred,y_t)$ so that $0\in\partial \En_{y_t\sim p_t} \loss(\pred^*_t, y_t)$; that this is possible is implied by the assumption on the loss $\ls$ stated in \pref{sec:problem}.
This choice implies that $\partial \loss(\pred_t^*,y_t) = \delta_t$ is a zero mean real variable conditionally on the past, i.e. $\En\brk*{\delta_{t}\mid{}\cG_{t}}=0$, where $\cG_{t}=\sigma(\yh_{1:t-1})$.
This particular choice for the $\yh_t$ in the dual game leads to the upper bound 
\begin{align*}
&\multiminimax{ \sup_{x_t\in\cX} \sup_{p_t\in\Delta_{\cY}} \En_{y_t \sim p_t}}_{t=1}^n \brk*{ V\left(\sum_{t=1}^n \suff(x_t,\pred_t^*,\delta_t)\right) },
\end{align*}
which is, in turn, upper bounded by 
\begin{align*}
&\multiminimax{ \sup_{z_t\in \X\times \cY} \sup_{p_t\in\Delta_{\brk*{-L, L}}: \En[\delta_t]=0} \En_{\delta_t \sim p_t}}_{t=1}^n \brk*{ V\left(\sum_{t=1}^n \suff(z_t,\delta_t)\right) }.
\end{align*}
The last expression can be written in the functional form as
$$
\sup_{\bz, \bp} \mathbb{E}_{\delta \sim \bp}\left[  V\left(\sum_{t=1}^n \suff(\bz_t, \delta_t)\right) \right].
$$
using the notation of the lemma, with the supremum over $\bp$ ranging over all joint distributions on $\delta=(\delta_1,\ldots,\delta_n)$ satisfying $\En[\delta_t\mid{}\delta_{1:t-1}]=0$ for all $t\in[n]$. The non-positivity of the latter quantity is therefore sufficient to ensure the existence of a prediction strategy satisfying \pref{eq:def_phi_regret}.
\end{proof}

\begin{proof}[\pfref{lem:equivalence_burkholder}]
We first establish existence of $\burk$ under the premise of the lemma. The construction is given by
\begin{equation}
\label{eq:u_construction}
\burk(\tau) = \sup_{\bz,  \bp} \En_{\delta \sim \bp}\left[ V\left(\tau + \sum_{t\geq 1} \suff(\bz_t,\delta_t) \right)\right].
\end{equation}
Then under the probabilistic inequality that is the premise of the lemma, it holds that
\[
\burk(0) = \sup_{\bz,  \bp} \En_{\delta \sim \bp}\left[ V\left(\sum_{t\geq 1} \suff(\bz_t,\delta_t) \right)\right] \le 0.
\]
 Next, by our assumption, $\exists z^0 $ s.t. $\suff(z^0,0) = 0$, we can lower bound the supremum in \pref{eq:u_construction} by considering a particular $\bz$ that is constant $\bz_t\ldef z^0$ for all $t$, and a distribution for $\delta_t$ that only places mass on the singleton $0$. This yields a lower bound
$$
\burk(\tau) \ge V(\tau).
$$
To verify the third condition, observe that for any zero-mean random variable $\alpha$ with distribution $p$ supported on $[-L,L]$,
\begin{align*}
\En_{\alpha}\left[\burk(\tau + \suff(z,\alpha))\right] &=  \En_{\alpha}\left[ \sup_{\bz, \bp} \En_{\delta \sim \bp}\left[ V\left(\tau + \suff(z,\alpha) + \sum_t \suff(\mathbf{z}_t, \delta_t) \right)\right] \right]\\
 &\le   \sup_{\bz, \bp} \En_{\delta \sim \bp}\left[ V\left(\tau  + \sum_t \suff(\bz_t, \delta_t) \right)\right] \\
 & = \burk(\tau).
\end{align*}
For the converse, assume we have a function $\burk$ satisfying the three properties. Fix any $\bz$ and $\bp$ of length $n$. In this case, by property \proptwo, the following inequality holds deterministically:
\begin{align*}
 V\left(\sum_{t=1}^n \suff(\bz_t,\delta_t)\right)  & \le \burk\left(\sum_{t=1}^n \suff(\bz_t,\delta_t)\right).
\end{align*}
By property \propthree, we have that for any time $s$,
\begin{align*}
\En_{\delta_n} \burk\left(\sum_{t=1}^s \suff(\bz_t,\delta_t)\right) \le   \burk\left(\sum_{t=1}^{s-1} \suff(\bz_t,\delta_t)\right).
\end{align*}
Continuing this argument all the way to $t=0$ and using property \propone,
\begin{align*}
\sup_{\bz, \bp} \En_{\delta \sim \bp}\left[  V\left(\sum_{t=1}^n \suff(\bz_t,\delta_t)\right) \right] \le \burk(0) \le 0.
\end{align*}
\end{proof}

% !TEX root = paper.tex

\subsection{Proofs from \pref{sec:linear_loss}}

\begin{proof}[\pfref{prop:param_sufficient}]
  We define a potential function that will eventually be used in the construction of the Burkholder function $\burk$ we provide for $V$. As discussed in the main body, a variant of this potential was first introduced by \cite{mcmahan2014unconstrained} for the special case of Hilbert spaces. Let $\Psi(x) = \frac{1}{2}\nrm*{x}^{2}$ (not necessarily a Hilbert space norm) and define
  \[
    F_{n}(x) = \gamma\exp\prn*{\frac{\Psi(x)}{an}}.
  \]
  From \cite[Lemma 14]{mcmahan2014unconstrained}, along with the additional fact that $(f(\nrm*{\cdot}))^{\star} = f^{\star}(\nrm*{\cdot}_{\star})$ for general dual norm pairs, it holds that
  \[
    F_{n}^{\star}(w) \leq{} \nrm*{w}_{\star}\sqrt{2an\log\prn*{\frac{\sqrt{an}\nrm*{w}_{\star}}{\gamma} + 1}}.
  \]
  This is all we need to establish the result. We proceed as follows
\begin{align*}
&\sup_{w\in\bbR^{d}}\crl*{
    \reg(w) - \cA(w)
      } \\
      &=
      \sup_{w\in\bbR^{d}}\crl*{
    \sum_{t=1}^{n}\ls(\pred_t, y_t) - \ls(\tri*{w,x_t}, y_t) - \cA(w)
      } \\
     &\leq{}
      \sup_{w\in\bbR^{d}}\crl*{
    \sum_{t=1}^{n}\partial\ls(\pred_t, y_t)(\pred_t - \tri*{w,x_t}) - \cA(x_{1},\ldots,x_{n})
      } \\
     &=
\sum_{t=1}^{n}\partial\ls(\pred_t, y_t)\cdot\pred_t  +  \sup_{w\in\bbR^{d}}\crl*{
       \tri*{w,\sum_{t=1}^{n}\partial\ls(\pred_t, y_t)x_t} - \cA(w)
       }
       \intertext{Using the inequality for the potential $F^{\star}_n$ stated above:}
      &\leq{}
        \sum_{t=1}^{n}\partial\ls(\pred_t, y_t)\cdot\pred_t  +  
        F^{\star}_n\prn*{\sum_{t=1}^{n}\partial\ls(\pred_t, y_t)x_t}
        - c
\end{align*}
It follows that $\suff(x_t,\pred_t,\delta_t) = \left( \delta_t\cdot\pred_t, \delta_t\cdot x_t \right)\in\reals\times \X$ is a sufficient statistic. This is because we can write
\[
V(b, x) = b + F^{\star}_n(x) - c.
\]
and we have just shown that
\[
\sup_{w}\crl*{\reg(w) - \cA(x_{1},\ldots,x_{n})
      }
      \leq{} V\prn*{
      \sum_{t=1}^{n}\suff(x_t,\pred_t,\delta_t)
      }.
\]

\end{proof}

\begin{proof}[\pfref{thm:param_free}]
    Since $\burk$ depends on time, we generalize the properties of \pref{lem:equivalence_burkholder} to
  \begin{enumerate}
  \item[$1^o$] $\burk_{0}(0) \le 0$
  \item[$2^o$] For any $\tau \in \T$, $\burk_{n}(\tau) \ge V(\tau)$
  \item[$3^o$] For any $\tau \in \T$, $z \in \X \times \Y$, and any mean-zero distribution $p$ on $[-L,L]$, and any $t\geq{}1$
    \begin{align}
      \En_{\alpha\sim p}\left[\burk_{t}(\tau + \suff(z,\alpha))\right] \le \burk_{t-1}(\tau) 
    \end{align}    
  \item[$3'$] For any $\tau \in \T$, $z \in \X \times \Y$, and any $t\geq{}1$,
    $$
    \forall \tau \in \T, z \in \X \times \Y,~~~ \En_\epsilon \burk_{t}(\tau + \suff(z,\epsilon L))  \le \burk_{t-1}(\tau),
    $$
    where $\epsilon$ is a Rademacher random variable. 
  \end{enumerate}

  Recall that for simplicity we assume $L=1$ and $\X$ is a unit ball: $\nrm*{x}\leq{}1$. Let $\Psi(x) = \frac{1}{2}\nrm*{x}^{2}$, where we have assumed that $\beta$-smoothness of $\Psi$:
  \[
    \Psi(x+y) \leq{} \Psi(x) + \tri*{\grad{}\Psi(x), y} + \frac{\beta}{2}\nrm*{y}^{2}.
  \]
  
  Define a family of potentials
  \[
    F_{t}(x) = \gamma\exp\prn*{\frac{\Psi(x)}{at} + \frac{1}{2}\sum_{s=t+1}^{n}\frac{1}{s}}
  \]
  and $F_{0} = \gamma\exp\prn*{\frac{1}{2}\sum_{t=1}^{n}\frac{1}{t}}$. Note that $F_{n}$ here is the same as in the proof of \pref{prop:param_sufficient}.
  
  Observe that
  \[
    \burk_{t}(b, x) = b + F^{\star}_t(x) - c, 
  \]
  where $F_{t}^{\star}$ is as defined as in the proof of \pref{prop:param_sufficient}. We proceed to establish the three properties of $\burk$ from \pref{lem:equivalence_burkholder}. Property $2^o$ holds since $V=\burk_{n}$. We will show property \propthreep{} first, then conclude with property \propone{}. Note that $\alpha\mapsto{}\burk_{t}(\tau + \suff(z,\alpha))$ is convex with respect to $\alpha$, and so it indeed suffices to show property \propthreep{}.

  Fix an element $\tau=(\tau_1, \tau_2)\in\bbR\times{}\cX=\cT$ of the sufficient statistic space. At time $n$ we have
  \[
    \En_{\eps}\left[\burk_{n}(\tau + \suff(z,\eps))\right] = \En_{\eps}\brk*{\tau_{1} +  \eps\cdot\pred + F_{n}(\tau_{2} + \eps{}x_n)} - c = \tau_{1} + \En_{\eps}\brk*{F_{n}(\tau_{2} + \eps{}x_n)} - c.
  \]
  To handle $F_n$, begin by using smoothness of $\Psi$:
  \begin{align*}
\En_{\eps}\brk*{F_{n}(\tau_{2} + \eps{}x_n)} =  \En_{\eps}\exp\prn*{\frac{\Psi(\tau_{2} + \eps{}x)}{an}} &\leq{}
                                                               \En_{\eps}\exp\prn*{\frac{\Psi(\tau_{2}) + \eps\tri*{\grad\Psi(\tau_2), x} + \frac{\beta}{2}\nrm*{x}^{2}}{an}} 
\end{align*}
Using the standard Rademacher mgf bound, $\En_{\eps}e^{\lambda{}\eps}\leq{}e^{\lambda^{2}/2}$, we upper bound the above quantity by
\begin{align*}
\exp\prn*{\frac{\Psi(\tau_{2}) + \frac{\beta}{2}\nrm*{x}^{2}}{an} + \frac{\tri*{\grad\Psi(\tau_2), x}^{2}}{2(an)^{2}}}\leq{}    \exp\prn*{\frac{\Psi(\tau_{2}) + \frac{\beta}{2}\nrm*{x}^{2}}{an} + \frac{\nrm*{\grad\Psi(\tau_2)}_{\star}^{2}\nrm*{x}^{2}}{2(an)^{2}}}.
\end{align*}
Using the assumption $\nrm*{x}\leq{}1$, we obtain an upper bound of
\begin{align*}
&\exp\prn*{\frac{\Psi(\tau_{2}) + \frac{\beta}{2}}{an} + \frac{\nrm*{\grad\Psi(\tau_2)}_{\star}^{2}}{2(an)^{2}}}.
\end{align*}
We now use a basic fact from convex analysis, namely that any $\beta$-smooth function $f$, $\frac{1}{2\beta}\nrm*{\grad{}f(x) - \grad{}f(y)}_{\star}^{2} \leq{} f(x) - f(y) - \tri*{\grad{}f(y), x-y}$ . This yields an upper bound
\begin{align*}
&\exp\prn*{\frac{\Psi(\tau_{2}) + \frac{\beta}{2}}{an} + \frac{\beta\Psi(\tau_2)}{(an)^{2}}}
\end{align*}
Setting $a=\beta$, this is equal to
\begin{align*} 
	\exp\prn*{\frac{1}{\beta}\prn*{\frac{1}{n} + \frac{1}{n^2}}\Psi(\tau_2) + \frac{1}{2n}}.
\end{align*}
  As a last step, observe that $\frac{1}{n} + \frac{1}{n^{2}} \leq{} \frac{1}{n-1}$. Indeed,
  \[
    \frac{1}{n} + \frac{1}{n^{2}} = \frac{1}{n}\prn*{1 + \frac{1}{n}} = \frac{1}{n-1}\frac{n-1}{n}\prn*{1+\frac{1}{n}}
    = \frac{1}{n-1}\prn*{1-\frac{1}{n}}\prn*{1+\frac{1}{n}}
    = \frac{1}{n-1}\prn*{1 - \frac{1}{n^{2}}}\leq{} \frac{1}{n-1}.
  \]
  Therefore, we have established that
  \[
    \En_{\eps}\brk*{F_{n}(\tau_{2} + \eps{}x_n)} \leq{} \exp\prn*{ \frac{\Psi(\tau_2)}{\beta(n-1)} + \frac{1}{2n}} = F_{n-1}(\tau_2),
  \]
  and in particular $\En_{\eps}\burk_{n}(\tau + \suff(z,\eps))\leq{}\burk_{n-1}(\tau)$.
  In fact, by folding the terms $\frac{1}{2}\sum_{s=t+1}^{n}\frac{1}{s}$ --- which do not depend on data --- into a multiplicative constant, this argument yields, for any $t$ and any $\nrm*{x}\leq{}1$,
  \[
    \En_{\eps}\brk*{F_{t}(\tau + \eps{}x)} \leq{} F_{t-1}(\tau).
  \]
  Thus, for each $t\geq{}2$ we have
  \[
    \En_{\eps}\left[\burk_{t}(\tau + \suff(z,\eps))\right] = \En_{\eps}\brk*{\tau_{1} +  \eps\cdot\pred + F_{n}(\tau_{2} + \eps{}x)} - c \leq{} \burk_{t-1}(\tau).
  \]
  
The argument also yields (by removing unnecessary steps):
  \[
    \En_{\eps}\brk*{F_{1}(0 + \eps{}x)} \leq{} \gamma\exp\prn*{\frac{1}{2}\sum_{t=1}^{n}\frac{1}{t}} = F_{0}.
  \]
  This means that
  \[
    \burk_{0}(0) = \gamma\exp\prn*{\frac{1}{2}\sum_{t=1}^{n}\frac{1}{t}} - c \leq{} \gamma\exp\prn*{\log(n)/2} - c.
  \]
  We will set $\gamma=\frac{1}{\sqrt{n}}$ and $c=1$, which yields $\burk_{0}(0) \leq{} 0$.
  
\end{proof}

\subsection{Proofs from \pref{sec:matrix}}

\begin{proof}[\pfref{prop:matrix_sufficient}]

  Recall that $\cA_{\eta}(X_{1},\ldots,X_{n}) = \frac{\eta{}rL^{2}}{2}\nrm*{\sum_{t=1}^{n}\cM(X_t)}_{\sigma} + \frac{c}{\eta}$. Linearizing the loss with the adaptive bound as in \pref{eq:phi_comp_adap}, 
\begin{align*}
	&\sum_{t=1}^{n}\ls(\pred_t, y_t) - \inf_{W\in\cW}\ls(\tri*{W,X_t}, y_t) - \cA_{\eta}(X_{1},\ldots,X_{n}) \\
    &\leq \sup_{W\in\cW}\crl*{
       \sum_{t=1}^{n}\partial\ls(\pred_t, y_t)(\pred_t - \tri*{W,X_t}) - \cA_{\eta}(X_{1},\ldots,X_{n})
      } \\
    &=
      \sum_{t=1}^{n}\partial\ls(\pred_t, y_t)\pred_t  + r\nrm*{\sum_{t=1}^{n}\partial\ls(\pred_t, y_t)X_t}_{\sigma}- \cA_{\eta}(X_{1},\ldots,X_{n}) .
  \end{align*}
	We now abbreviate $\partial\ls(\pred_t, y_t) = \dl_{t}$ and expand out $\cA_{\eta}$, yielding
\begin{align*}
        &\sum_{t=1}^{n}\dl_t\cdot\pred_t  + r\nrm*{\sum_{t=1}^{n}\dl_tX_t}_{\sigma} - \frac{\eta{}rL^{2}}{2}\nrm*{\sum_{t=1}^{n}\cM(X_t)}_{\sigma} - \frac{c}{\eta}.
\end{align*}
Using the fact that $\lambda_{1}(\cH(X)) = \nrm*{X}_{\sigma}$, linearity of $\cH$, and that $\cM(X_t)$ is positive semidefinite, we write this as
\begin{align*}
      &\sum_{t=1}^{n}\dl_t\cdot\pred_t  + r\lambda_{1}\prn*{\sum_{t=1}^{n}\dl_t\cH(X_t)} - r\lambda_{1}\prn*{\frac{\eta{}L^2}{2}\sum_{t=1}^{n}\cM(X_t)} - \frac{c}{\eta}
\end{align*}
Sub-additivity of $\lambda_{1}$ gives a further upper bound of
\begin{align*}    
        \sum_{t=1}^{n}\dl_t\cdot\pred_t  + r\lambda_{1}\prn*{\sum_{t=1}^{n}\dl_t\cH(X_t)-\frac{\eta{}L^{2}}{2}\sum_{t=1}^{n}\cM(X_t)} - \frac{c}{\eta}
\end{align*}
Then $\suff(X_t,\pred_t,\delta_t) = \left( \delta_t\cdot\pred_t, \delta_t\cdot \cH(X_t), \cM(X_t) \right)\in\reals\times \sym^{d_1+d_2}\times \sym^{d_1+d_2}_{+}$ is a sufficient statistic. Namely, writing
\[
V(a, H, M) = a + r\lambda_1\prn*{H -\frac{\eta{L^{2}}}{2}M} - \frac{c}{\eta},
\]
our calculation shows that
\[
\sup_{W\in\cW}\crl*{\reg(W) - \cA(X_{1},\ldots,X_{n})
      }
      \leq{} V\prn*{
      \sum_{t=1}^{n}\suff(X_t,\pred_t,\delta_t)
      }.
\]
  \end{proof}

  \begin{proof}[\pfref{thm:matrix_burkholder}]

    Recall that
    \[
      \burk(a, H, M) = a+ \frac{r}{\eta}\log\,\Tr\,\exp\prn*{\eta{}H - \frac{\eta^{2}L^{2}}{2}M} - \frac{c}{\eta}
    \]
We will show that $\burk$ satisfies the three properties of \pref{lem:equivalence_burkholder}. For property \propone{}, we have
\[
\burk(0) = \frac{r}{\eta}\log\prn*{\Tr\prn*{\exp\prn*{0}}} - \frac{c}{\eta} = \frac{r\log(d_1 + d_2)}{\eta} - \frac{c}{\eta}.
\]
Thus, $\burk(0)\leq{}0$ as soon as $c\geq{}r\log(d_1+d_2)$.

For property \proptwo{}, it suffices to show that $\lambda_{1}(H-\frac{\eta{}L^{2}}{2}M) \leq{} \frac{1}{\eta}\log\,\Tr\,\exp\prn*{\eta{}H - \frac{\eta^{2}L^{2}}{2}M}$. To this end, we have
\begin{align*}
  \lambda_{1}\left(H-\frac{\eta{}L^{2}}{2}M \right)  =   \frac{1}{\eta}\log \lambda_{1}\prn*{\exp\prn*{\eta{}H-\frac{\eta^2L^{2}}{2}M}} \leq{} \frac{1}{\eta}\log\,\Tr\,\exp\prn*{\eta{}H-\frac{\eta^2L^{2}}{2}M
  },
\end{align*}
where the equality is well-defined because the matrix under consideration is symmetric and the inequality follows because $e^{A}$ is positive semidefinite for any symmetric matrix $A$.

For the third property, observe that the mapping $\alpha\mapsto{}V(\tau + \suff(z,\alpha))$ is convex (e.g. \citep{lewis1996convex}). Consequently, by \pref{lem:equivalence_burkholder}, it suffices only to prove property $3'$, i.e. that the restricted concavity condition holds only for Rademacher random variables.

Fix $\tau\in\cT$ and $z=(X, \pred)\in\cX\times{}\cY$, and let $\eps$ be a Rademacher random variable. Writing
\[
\tau = (\tau_{1}, \tau_{2}, \tau_{3})\in\reals\times \sym^{d_1+d_2}\times \sym^{d_1+d_2}_{+},
\]
we have
\begin{align*}
 \En_{\eps}\left[\burk(\tau + \suff(z,\eps{}L))\right] 
 &= \En_{\eps}\brk*{
   \tau_{1} + \pred\eps{}L + \frac{r}{\eta}\log\,\Tr\,\exp\prn*{\eta{}\tau_{2} - \frac{\eta^{2}L^{2}}{2}\tau_{3} + \eta\eps{}L\cH(X) - \frac{\eta^{2}L^{2}}{2}\cM(X)}
   } - \frac{c}{\eta}. \\
   &= \frac{r}{\eta}\En_{\eps}\brk*{
     \log\,\Tr\,\exp\prn*{\eta{}\tau_{2} - \frac{\eta^{2}L^{2}}{2}\tau_{3} + \eta\eps{}L\cH(X) - \frac{\eta^{2}L^{2}}{2}\cM(X)}
   } +      \tau_{1} - \frac{c}{\eta}.
\end{align*}
Focusing on the log-trace-exponential term, observe that
\begin{align*}
  &  \En_{\eps}\brk*{
  \log\,\Tr\,\exp\prn*{\eta{}\tau_{2} - \frac{\eta^{2}L^{2}}{2}\tau_{3} + \eta\eps{}L\cH(X) - \frac{\eta^{2}L^{2}}{2}\cM(X)}
} \\
&=  \En_{\eps}\brk*{
       \log\,\Tr\,\exp\prn*{\eta{}\tau_{2} - \frac{\eta^{2}L^{2}}{2}\tau_{3} + \log\prn*{\exp\prn*{\eta\eps{}L\cH(X)}} - \frac{\eta^{2}L^{2}}{2}\cM(X)}
       }.
\end{align*}
Since $\exp\prn*{\eta\eps{}L\cH(X)}$ is positive definite and $\eta{}\tau_{2} - \frac{\eta^{2}L^{2}}{2}\tau_{3} -\frac{\eta^{2}L^{2}}{2}\cM(X)$ is symmetric (by assumption), we can apply Lieb's Concavity Theorem to upper bound this by
\begin{align*}	   
  & \log\,\Tr\,\exp\prn*{\eta{}\tau_{2} - \frac{\eta^{2}L^2}{2}\tau_{3} + \log\prn*{\En_{\eps}\exp\prn*{\eta\eps{}L\cH(X)}} - \frac{\eta^{2}L^2}{2}\cM(X)}.
\end{align*}
The Rademacher matrix mgf bound \citep{tropp2012user} now yields
\[
\log\prn*{\En_{\eps}\exp\prn*{\eta\eps{}L\cH(X)}} \preceq \log\prn*{\exp\prn*{\eta^{2}L^{2}\cM(X)}/2} = \eta^{2}L^{2}\cM(X)/2.
\]
Since $A\preceq{}B$ implies $\Tr{}e^{A}\leq{}\Tr{}e^{B}$, this implies that
\[
\En_{\eps}\brk*{
  \log\,\Tr\,\exp\prn*{\eta{}\tau_{2} - \frac{\eta^{2}L^{2}}{2}\tau_{3} + \eta\eps{}L\cH(X) - \frac{\eta^{2}L^{2}}{2}\cM(X)}
}
\leq{}
  \log\,\Tr\,\exp\prn*{\eta{}\tau_{2} - \frac{\eta^{2}L^{2}}{2}\tau_{3}}
\]
Combining everything we proved so far, this implies
\[
  \En_{\eps}\left[U(\tau + \suff(z,\eps{}L))\right]
  \leq{} \tau_{1} + \frac{r}{\eta}  \log\,\Tr\,\exp\prn*{\eta{}\tau_{2} - \frac{\eta^{2}L^2}{2}\tau_{3}}  - \frac{c}{\eta} = U(\tau).
\]

\end{proof}

\begin{proof}[\pfref{corr:matrix_strategy}]
  The Burkholder function $\burk$ satisfies the conditions of \pref{lem:det_strat3}. Direct calculation shows that the strategy in \pref{lem:det_strat3} matches the strategy in the statement of the corollary.
\end{proof}

\begin{proof}[\pfref{corr:matrix_square}]
  We invoke the Burkholder function $\burk$ from \pref{thm:matrix_burkholder} for the special case $r=1$ and $c=\log(d_1 + d_2)$, and $L=1$. In particular, its existence per \pref{lem:equivalence_burkholder} implies (for the corresponding $V$, here denoted $V_{\eta}$ to refer to the $V$ given for a fixed value of $\eta$)
  \[
    \inf_{\eta}\sup_{\mb{z},\mb{p}, n}\En\brk*{V_{\eta}\prn*{\sum_{t=1}^{n}\suff(\mb{z}_t, \dl_{t})}}\leq{}0
  \]
  We use this inequality only for the special case where $\dl_{t}=\eps_{t}$ and $\mb{z}_t=(\mb{X}_{t}(\eps), 0)$. For this special case, the inequality above implies
    \[
      \inf_{\eta}\sup_{\mb{X}, n}\En\brk*{\nrm*{\sum_{t=1}^{n}\eps_t\mb{X}_{t}(\eps)} - \frac{\eta}{2}\nrm*{\sum_{t=1}^{n}\cM(\mb{X}_{t}(\eps))} - \frac{\log(d_1 + d_2)}
        {\eta} }\leq{}0
    \]
    For any fixed martingale $(\mb{X}_{t}(\eps))_{t\leq{}n}$, this implies
    \[
      \En\nrm*{\sum_{t=1}^{n}\eps_t\mb{X}_{t}(\eps)} \leq{} \inf_{\eta>0}\crl*{\frac{\eta}{2}\En\nrm*{\sum_{t=1}^{n}\cM(\mb{X}_{t}(\eps))} + \frac{\log(d_1 + d_2)}{\eta} } = \sqrt{2\En\nrm*{\sum_{t=1}^{n}\cM(\mb{X}_{t}(\eps))}\log(d_1 + d_2)}.
    \]
	To conclude, observe that for any sequence $(X_t)$ we have  
	  \[
	    \nrm*{\sum_{t=1}^{n}\cM(X_t)}_{\sigma} \leq{}  \max\crl*{\nrm*{\sum_{t=1}^{n}X_tX_t^{\trn}}_{\sigma}, \nrm*{\sum_{t=1}^{n}X_t^{\trn}X_t}_{\sigma}}.
	  \]
	  Indeed, $\sum_{t=1}^{n}\cM(X_t)  = \left(
	  \begin{array}{ll}
	  \sum_{t=1}^{n}X_tX_t^{\trn}& 0\\
	  0 & \sum_{t=1}^{n}X_t^{\trn}X_t
	  \end{array}
	  \right)$ and the spectral norm of a block-diagonal matrix is always obtained by the spectral norm of one of its blocks.

\end{proof}

\subsection{Proofs from \pref{sec:further}}
\label{app:square}

\begin{proof}[\textbf{Sketch of proofs for claims from \pref{sec:adagrad}}]

For the $\ls_{2}$ result we have
\begin{align*}
& \sum_{t=1}^n\loss(\pred_t,y_t) - \min_{\norm{w}_{2}\leq 1}\sum_{t=1}^n \loss(\inner{w,x_t},y_t) - 2L\sqrt{\sum_{t=1}^{n}\nrm*{x_t}^{2}_{2}} \\
& \leq{} \sup_{\norm{w}_{2}\leq 1} \crl*{\sum_{t=1}^n\partial\loss(\pred_t,y_t)(\yh_t, - \tri*{w, x_t}) } - 2L\sqrt{\sum_{t=1}^{n}\nrm*{x_t}^{2}_{2}} \\ 
& = \sum_{t=1}^n\partial\loss(\pred_t,y_t)\yh_t + \nrm*{\sum_{t=1}^{n}\partial\loss(\pred_t,y_t)x_t}_{2} - 2L\sqrt{\sum_{t=1}^{n}\nrm*{x_t}^{2}_{2}} \\
& \leq{} \sum_{t=1}^n\partial\loss(\pred_t,y_t)\yh_t + \burk_{\textrm{square}}\prn*{\sum_{t=1}^{n}\partial\loss(\pred_t,y_t)x_t, L\sqrt{\sum_{t=1}^{n}\nrm*{x_t}^{2}_{2}}}. 
\end{align*}
The path from here to a Burkholder function in the sense of \pref{lem:equivalence_burkholder} is clear given the three properties of $\burk_{\textrm{square}}$ stated in the main body.

For the $\ls_{\infty}$ result, 
\begin{align*}
& \sum_{t=1}^n\loss(\pred_t,y_t) - \min_{\norm{w}_{\infty}\leq 1}\sum_{t=1}^n \loss(\inner{w,x_t},y_t) - 2L\nrm*{\prn*{\sum_{t=1}^{n}x_{t}^{2}}^{1/2}}_{1} 
\end{align*}
can be upper bounded by
\begin{align*}
&\sup_{\norm{w}_{\infty}\leq 1} \crl*{\sum_{t=1}^n\partial\loss(\pred_t,y_t)(\yh_t, - \tri*{w, x_t}) } - 2L\nrm*{\prn*{\sum_{t=1}^{n}x_{t}^{2}}^{1/2}}_{1} \\ 
& = \sum_{t=1}^n\partial\loss(\pred_t,y_t)\yh_t + \nrm*{\sum_{t=1}^{n}\partial\loss(\pred_t,y_t)x_t}_{1} - 2L\nrm*{\prn*{\sum_{t=1}^{n}x_{t}^{2}}^{1/2}}_{1} \\
& \leq{} \sum_{t=1}^n\partial\loss(\pred_t,y_t)\yh_t + \sum_{i=1}^{d}\burk_{\textrm{square}}\prn*{\sum_{t=1}^{n}\partial\loss(\pred_t,y_t)x_t[i], L\sqrt{\sum_{t=1}^{n}\prn*{x_t[i]}^{2}_{2}}},
\end{align*}
where $x_{t}[i]$ refers to the $i$th coordinate of $x_t$. Once again, the three properties of $\burk_{\textrm{square}}$ directly lead to a valid Burkholder function $\burk$.
\end{proof}

\begin{proof}[\pfref{prop:square_loss_sufficient}]  
  Let $A_{n}=\rho\sum_{t=1}^{n}z_tz_t^{\trn}+\lambda{}I$ and $A_{0}=\lambda{}I$. Recall that $\Psi_{A}(w) = \frac{1}{2}\tri*{w,Aw}$. We begin by rewriting the desired regret bound as
  \[
    \cA(w;z_{1},\ldots,z_{n}) = \lambda\Phi((w,1)) + c\log\prn*{\det(A_n)/\det(A_{0})}
  \]
  for a constant $c>0$ to be determined. With this definition, we have
  \begin{align*}
    &\sup_{w\in\bbR^{d}}\crl*{
    \reg(w) - \cA(w;z_{1},\ldots,z_{n})
      } \\
    &=\sup_{w\in\bbR^{d}}\crl*{
      \sum_{t=1}^{n}\ls(\yh_t, y_t) - \sum_{t=1}^{n}\ls(\tri*{w,x_t}, y_t)  - \lambda\Phi((w,1))
      } - c\log\prn*{\det(A_n)/\det(A_{0})}
      \intertext{Using strong convexity of $\ls$:}
    &=\sup_{w\in\bbR^{d}}\crl*{
      \sum_{t=1}^{n}\partial{}\ls(\yh_t, y_t)(\yh_t - \tri*{w,x_t}) - \frac{\rho}{2}\prn*{\yh_t - \tri*{w,x_t}}^{2}  - \lambda\Phi((w,1))
      } - c\log\prn*{\det(A_n)/\det(A_{0})}\\
    &=\sup_{w\in\bbR^{d}}\crl*{
      \sum_{t=1}^{n}\partial{}\ls(\yh_t, y_t)(-\tri*{(w,1),z_t}) - \frac{\rho}{2}\prn*{\tri*{(w,1),z_t}}^{2}  - \lambda\Phi((w,1))
      } - c\log\prn*{\det(A_n)/\det(A_{0})}
      \intertext{We now move to an upper bound by allowing the final coordinate of $(w,1)$ to act as a free parameter.}
    &\leq{}\sup_{w\in\bbR^{d+1}}\crl*{
      \sum_{t=1}^{n}\partial{}\ls(\yh_t, y_t)\tri*{w,z_t} - \frac{\rho}{2}\tri*{w,z_t}^{2}  - \lambda\Phi(w)
      } - c\log\prn*{\det(A_n)/\det(A_{0})}
      \intertext{We can rewrite this as}
    &\leq{}\sup_{w\in\bbR^{d+1}}\crl*{
      \tri*{w,\sum_{t=1}^{n}\partial{}\ls(\yh_t, y_t)z_t} - \Psi_{\rho\Sigma_{n}}(w)  - \lambda\Phi(w)
      } - c\log\prn*{\det(A_n)/\det(A_{0})}\\
     &=\sup_{w\in\bbR^{d+1}}\crl*{
      \tri*{w,\sum_{t=1}^{n}\partial{}\ls(\yh_t, y_t)z_t} - \Psi_{A_{n}}(w)
      } - c\log\prn*{\det(A_n)/\det(A_{0})}\\
    &=\Psi_{A_n}^{\star}\prn*{
      \sum_{t=1}^{n}\partial{}\ls(\yh_t, y_t)z_t}
      - c\log\prn*{\det(A_n)/\det(A_{0})}.
  \end{align*}
  
This establishes that $\suff(x_t,\pred_t,\delta_t) = \left( \delta_{t}z_t, z_tz_t^{\trn} \right)\in\bbR^{d+1}\times{}\sym^{d+1}_{+}$ is a sufficient statistic. This is because we can write
\[
V(x, A) = \Psi^{\star}_{\rho{}A + \lambda{}I}\prn*{x} - c\log\prn*{\det(\rho{}A + \lambda{}I)/\det(A_0)}.
\]
and we just proved that
\[
\sup_{w\in\bbR^{d}}\crl*{\reg(w) - \cA(x_{1},\ldots,x_{n})
      }
      \leq{} V\prn*{
      \sum_{t=1}^{n}\suff(x_t,\pred_t,\delta_t)
      }.
\]

\end{proof}

\begin{proof}[\pfref{thm:square_loss_burkholder}]
Recall that we have defined $\burk(x, A) = V(x,A) =\Psi^{\star}_{A}\prn*{x} - c\log\prn*{\det(A)/\det(A_0)}$. We verify the properties from \pref{lem:equivalence_burkholder}. Property \proptwo{} is immediate, and for property \propone{} we have
\[
\burk(0) = \Psi^{\star}_{0 + \lambda{}I}(0) - c\log(\det(A_0)/\det(A_0)) = 0.
\]
We proceed to prove property \propthree{}. Fix $\tau=(\tau_1, \tau_2)\in\cT=\bbR^{d+1}\times{}\sym_{+}^{d+1}$ and a mean-zero distribution $p$ over $\brk*{-L, L}$. Then we have
\begin{align*}
\En_{\alpha\sim{}p}\burk(\tau + \suff(z, \alpha))
 &= \En_{\alpha\sim{}p}\brk*{
 \Psi^{\star}_{\rho(\tau_{2}+zz^{\trn})+\lambda{}I}(\tau_{1} + \alpha{}z) - c\log(\det(\rho(\tau_{2}+zz^{\trn})+\lambda{}I)/\det(A_0))
 } \\
 &= \En_{\alpha\sim{}p}\brk*{
 \Psi^{\star}_{\rho(\tau_{2}+zz^{\trn})+\lambda{}I}(\tau_{1} + \alpha{}z)} - c\log(\det(\rho(\tau_{2}+zz^{\trn})+\lambda{}I)/\det(A_0)).
\end{align*}
Let $A=\rho(\tau_{2}+zz^{\trn})+\lambda{}I$ and $B=\rho\tau_{2}+\lambda{}I$. Then since $\Psi^{\star}$ is a squared Euclidean norm and $\alpha$ is mean-zero:
\[
\En_{\alpha\sim{}p}\brk*{\Psi^{\star}_{A}(\tau_{1} + \alpha{}z)} \leq{} \Psi^{\star}_{A}(\tau_{1}) + \En_{\alpha\sim{}p}\brk*{\alpha^{2}\tri*{z, A^{-1}z}} \leq{} \Psi^{\star}_{A}(\tau_{1}) + L^{2}\brk*{\alpha^{2}\tri*{z, A^{-1}z}}.
\]
Also note that since $B\preceq{}A$, $\Psi^{\star}_{A}(\tau_1) \leq{} \Psi^{\star}_{B}(\tau_1)$.

To conclude, we first note that we just established
\begin{align*}
\En_{\alpha\sim{}p}\burk(\tau + \suff(z, \alpha)) &\leq{} \Psi_{B}^{\star}(\tau_1) + L^{2}\tri*{z, A^{-1}z} - c\log(\det(A)/\det(A_0)).
\intertext{Using a standard argument (e.g. from \cite{PLG}) and using that $A=B+\rho{}zz^{\trn}$:}
&\leq{} \Psi_{B}^{\star}(\tau_1) + \frac{L^{2}}{\rho}\log(\det(A)/\det(B)) - c\log(\det(A)/\det(A_0))
\intertext{For $c\geq{}L^{2}/\rho$, this is bounded by}
&\leq{} \Psi_{B}^{\star}(\tau_1) - c\log(\det(B)/\det(A_0)) \\
&= \burk(\tau).
\end{align*}

\end{proof}

\subsection{Proofs from \pref{sec:necessary}}

\begin{proof}[\pfref{prop:necessary}]
Recall that the regret inequality of interest is
\[
\sum_{t=1}^{n}\ls(\yh_t, y_t) - \inf_{f\in\cF}\sum_{t=1}^n \loss(f(x_t),y_t) - F\prn*{\sum_{t=1}^{n}\overline{\suff}(x_t)} \leq{} 0.
\]
As sketched in the \pref{sec:necessary}, \pref{lem:suff_to_martingale} shows that this is implied by
\begin{equation}
\sup_{\mb{z}}\En_{\eps}\brk*{V\prn*{\sum_{t=1}^{n}\suff(\mb{z}_t, \eps_t)}}\leq{}0,
\end{equation}
so the remainder of this proof will focus on the opposite direction. Suppose that $\ls(\yh,y)\ldef\abs*{\yh-y}$ is the absolute loss. We fix a Rademacher sequence $\eps_{1},\ldots,\eps_{n}$ and a tree $\x$ with $\x_{t}(\eps)=\x_{t}(\eps_{1},\ldots,\eps_{t-1})$. As a lower bound, consider a randomized adversary that plays $y_{t}=\eps_{t}$ and $x_{t} = \x_{t}(\eps)$. In this case the expected value of the regret inequality is
\[
\En_{\eps}\brk*{
\sum_{t=1}^{n}\ls(\yh_t, \eps_t) - \inf_{f\in\cF}\sum_{t=1}^n \loss(f(\x_t(\eps)),\eps_t) - F\prn*{\sum_{t=1}^{n}\overline{\suff}(\x_t(\eps))}
}.
\]
Observe that for any $\eps\in\pmo$ we have $\ls(\yh, \eps) = \abs*{1-\yh\eps}\geq{}1-\yh\eps$. Since the range of each $f\in\cF$ lies in $\brk*{-1,1}$, we have $\ls(f(x), \eps)=1-f(x)\eps$ exactly. The expected value of the regret inequality is therefore lower bounded by
\begin{align*}
&\En_{\eps}\brk*{
\sum_{t=1}^{n}(1-\yh_t\eps_t) - \inf_{f\in\cF}\sum_{t=1}^n (1-f(\x_t(\eps))\eps_t) - F\prn*{\sum_{t=1}^{n}\overline{\suff}(\x_t(\eps))}
} \\
&= \En_{\eps}\brk*{
- \inf_{f\in\cF}\sum_{t=1}^n (1-f(\x_t(\eps))\eps_t) - F\prn*{\sum_{t=1}^{n}\overline{\suff}(\x_t(\eps))}
} \\
&= \En_{\eps}\brk*{
\sup_{w\in\cW}\tri*{w, \sum_{t=1}^n\eps_{t}\x_{t}(\eps)} - F\prn*{\sum_{t=1}^{n}\overline{\suff}(\x_t(\eps))}
} \\
&= \En_{\eps}\brk*{
V\prn*{\sum_{t=1}^{n}\suff(\x_{t}(\eps), 0, \eps_t)
}}.
\end{align*}

For the final step, let $\wt{\y}$ be an arbitrary $\cY$-valued tree $\wt{\y}_{t}(\eps) = \wt{\y}_{t}(\eps_1,\ldots,\eps_{t-1})$. Using the explicit form for $V$, we have
\begin{align*}
\En_{\eps}\brk*{
V\prn*{\sum_{t=1}^{n}\suff(\x_{t}(\eps), \wt{\y}_{t}(\eps), \eps_t)
}}
&= \En_{\eps}\brk*{
\sum_{t=1}^{n}\eps_{t}\wt{\y}_{t}(\eps) + 
\sup_{w\in\cW}\tri*{w, \sum_{t=1}^n\eps_{t}\x_{t}(\eps)} - F\prn*{\sum_{t=1}^{n}\overline{\suff}(\x_t(\eps))}
} \\
&= \En_{\eps}\brk*{
0 + 
\sup_{w\in\cW}\tri*{w, \sum_{t=1}^n\eps_{t}\x_{t}(\eps)} - F\prn*{\sum_{t=1}^{n}\overline{\suff}(\x_t(\eps))}
} \\
& = \En_{\eps}\brk*{
V\prn*{\sum_{t=1}^{n}\suff(\x_{t}(\eps), 0, \eps_t)
}}.
\end{align*}

Since the argument above holds for any trees $\x$ and $\wt{\y}$, we conclude that the regret inequality implies that
\[
\sup_{\mb{z}}\En_{\eps}\brk*{V\prn*{\sum_{t=1}^{n}\suff(\mb{z}_t, \eps_t)}}\leq{}0.
\]
for all $\cX\times{}\cY$-valued trees.

\end{proof}

%%% Local Variables:
%%% mode: latex
%%% TeX-master: "paper"
%%% End:

\section{Burkholder Algorithm Implementation}
\label{app:efficient}

% !TEX root = paper.tex

\subsection{Generic Implementation}
In this section we assume that $\cY=\brk*{-B, B}$ for $B>0$ for simplicity. The only assumption we make on the form of $\burk$ is Lipschitzness and boundedness.

\begin{assumption}
The are constants $K_t$ and $H_t$ such that the mapping
\[
\yh\mapsto{}\burk\Big(\zeta_{t-1} + \suff(x_t,\pred,\partial \loss(\pred,y_t))\Big)
\]
is $K_t$-Lipschitz and bounded in magnitude by $H_t$ for any $y_t\in\cY$, $x_t\in\cX$, and $\zeta_{t-1}$ of the form $\zeta_{t}=\sum_{s=1}^{t}\suff(x_s, \pred_s, \partial(\pred_s, y_s))$.
\end{assumption}

Consider the following strategy:
\begin{itemize}
\item Fix precision $\veps_{1}>0$ and set $N=\ceil*{2B/\veps_{1}}$.
\item Define control points $z_{i} = -B + \veps_{1}\cdot{}i$ for $0\leq{}i\leq{}N$.
\item Let $\widehat{\mu}_{t}$ be a solution to the convex program
\begin{equation}
\label{eq:burkholder_approx}
\min_{\mu\in\Delta_{N}}\sup_{y\in\cY}\sum_{i=1}^{N}\mu_{i}\burk\Big(\zeta_{t-1} + \suff(x_t,z_{i},\partial \loss(z_{i},y))\Big)
\end{equation}
up to additive precision $\veps_{2}$.
\item Sample $\pred_{t}\sim{}\wh{\mu}_{t}$.
\end{itemize}

\begin{proposition}
\label{prop:burkholder_efficient}
Given a Burkholder function $\burk$, the strategy above guarantees
\[
\En\brk*{\sum_{t=1}^n \loss(\pred_t,y_t)} - \phi(x_1,y_1,\ldots,x_n,y_n) \leq{} \veps_{1}\sum_{t=1}^{n}K_t + \veps_{2}n.
\]
That is, the regret inequality \pref{eq:def_phi_regret} is obtained up to additive slack controlled by $\veps_{1}$ and $\veps_{2}$.
\end{proposition}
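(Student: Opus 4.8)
The plan is to replay the telescoping argument from the proof of \pref{lem:universal_algo}, carrying along two error terms: an $\veps_{2}$ slack from solving \pref{eq:burkholder_approx} only approximately, and an $\veps_{1}$-type slack coming from having restricted both the learner's prediction and the dual comparator to the finite grid $\crl*{z_{0},\ldots,z_{N}}$. Throughout, write $\zeta_{t}=\sum_{s\le t}\suff(x_{s},\pred_{s},\partial\loss(\pred_{s},y_{s}))\in\cT$ for the running sufficient statistic; note $\pred_{s}$ always lies on the grid, so the Assumption applies to every $\zeta_{t}$ that arises.

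First I would prove a one-step inequality of the form $\En_{\pred\sim\wh{\mu}_{t}}\burk\prn*{\zeta_{t-1}+\suff(x_{t},\pred,\partial\loss(\pred,y_{t}))}\le\burk(\zeta_{t-1})+K_{t}\veps_{1}+\veps_{2}$, valid for every fixed $x_{t}$, $y_{t}$, and history-determined $\zeta_{t-1}$. Since $\wh{\mu}_{t}$ is an $\veps_{2}$-approximate minimizer of \pref{eq:burkholder_approx}, the left-hand side is at most $\veps_{2}$ plus the value of \pref{eq:burkholder_approx}. To bound that value I would write $\sup_{y\in\cY}(\cdot)=\sup_{p\in\Delta_{\cY}}\En_{y\sim p}(\cdot)$ and invoke the minimax theorem (the game is bilinear in $\mu\in\Delta_{N}$ and $p\in\Delta_{\cY}$, both compact), exactly as in \pref{lem:suff_to_martingale}, reducing the value to $\sup_{p\in\Delta_{\cY}}\min_{0\le i\le N}\En_{y\sim p}\burk\prn*{\zeta_{t-1}+\suff(x_{t},z_{i},\partial\loss(z_{i},y))}$. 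Then, mirroring \pref{lem:universal_algo}, for each $p$ I would set $\pred^{\star}=\argmin_{\pred\in\cY}\En_{y\sim p}\loss(\pred,y)$, so that $\partial\loss(\pred^{\star},\cdot)$ is a mean-zero $[-L,L]$-valued variable under $y\sim p$; replace $\pred^{\star}$ by its nearest control point $z_{i^{\star}}$ (at distance $\le\veps_{1}$), paying $K_{t}\veps_{1}$ by the $K_{t}$-Lipschitz Assumption on $\burk$; and finally apply restricted concavity (property \propthree{}) to bound $\En_{y\sim p}\burk\prn*{\zeta_{t-1}+\suff(x_{t},\pred^{\star},\partial\loss(\pred^{\star},y))}\le\burk(\zeta_{t-1})$. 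Taking $\sup_{p}$ and chaining the inequalities yields the one-step bound, which, since $\zeta_{t}=\zeta_{t-1}+\suff(x_{t},\pred_{t},\partial\loss(\pred_{t},y_{t}))$, reads $\En[\burk(\zeta_{t})\mid\text{history}]\le\burk(\zeta_{t-1})+K_{t}\veps_{1}+\veps_{2}$.

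Next I would take total expectations and telescope this from $t=n$ down to $t=0$, using $\zeta_{0}=0$ and property \propone{} ($\burk(0)\le 0$), to get $\En\burk(\zeta_{n})\le\veps_{1}\sum_{t=1}^{n}K_{t}+\veps_{2}n$. Finally I would invoke the sufficient-statistic inequality \pref{eq:suffiency_def} --- which holds pathwise for the realized, grid-valued predictions --- together with property \proptwo{} to obtain $\sum_{t=1}^{n}\loss(\pred_{t},y_{t})-\phi(x_{1},y_{1},\ldots,x_{n},y_{n})\le V(\zeta_{n})\le\burk(\zeta_{n})$ pathwise, and conclude by taking expectations. The main obstacle relative to the clean argument of \pref{lem:universal_algo} is the grid-rounding step: one must know that moving the dual optimal prediction $\pred^{\star}$ to the nearest control point perturbs $\burk(\zeta_{t-1}+\suff(x_{t},\cdot,\partial\loss(\cdot,y)))$ by at most $K_{t}\veps_{1}$ uniformly over $y\in\cY$, which is exactly what the Assumption provides. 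The minimax application carries over verbatim from the rest of the paper, and the only mild technical point there is handled as in the discussion accompanying \pref{lem:suff_to_martingale}.
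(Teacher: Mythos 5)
Your argument is correct and reaches the stated bound, but it handles the key discretization step differently from the paper. The paper never applies the minimax theorem to the discretized game: it compares the value of \pref{eq:burkholder_approx} to the value of the \emph{continuous} game $\inf_{q\in\Delta_{\cY}}\sup_{y\in\cY}\En_{\yh\sim q}\burk(\zeta_{t-1}+\suff(x_t,\yh,\partial\loss(\yh,y)))$ by taking an arbitrary $q\in\Delta_\cY$, partitioning $\brk*{-B,B}$ into intervals around the control points, and showing via the $K_t$-Lipschitz assumption that the induced discrete distribution loses at most $K_t\veps_1$ (an integral-approximation argument); it then simply cites the bound $\inf_q\sup_y\En_{\yh\sim q}\burk(\cdot)\leq\burk(\zeta_{t-1})$ already established in \pref{lem:universal_algo}, and adds the $\veps_2$ optimization slack exactly as you do. You instead perform a second minimax swap directly on the finite game over $\Delta_N\times\Delta_\cY$, round the dual-stage minimizer $\yh^\star$ to the nearest grid point (paying $K_t\veps_1$), and re-run the restricted-concavity step at $\yh^\star$. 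Both routes charge the same per-step error $K_t\veps_1+\veps_2$ and telescope identically via properties \propone{}/\proptwo{}; the paper's version buys modularity (it reuses \pref{lem:universal_algo} as a black box and only needs the minimax theorem where it was already invoked), while yours avoids the partition/integral argument at the cost of one more application of the minimax theorem on the discrete game—justified under the same compactness/continuity conditions the paper assumes—and a rounding of the dual best response rather than of the primal mixed strategy. Your bookkeeping of why the Lipschitz assumption applies to the realized $\zeta_{t-1}$ (grid-valued past predictions) and the final pathwise step $\sum_t\loss(\yh_t,y_t)-\phi\leq V(\zeta_n)\leq\burk(\zeta_n)$ match the intended argument.
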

Before proving the theorem, let us discuss the computational prospects of implementing this strategy. First, suppose $K_t=K$ and $H_{t}=H\;\forall{}t\leq{}n$. To obtain the regret inequality up to constant error it suffices to take $\veps_{1}=1/Kn$ and $\veps_{2}=1/n$. In this case, we have $N=O(BKn)$. 

Now we must approximately solve \pref{eq:burkholder_approx}, which is a standard finite-dimensional convex non-smooth optimization problem. There are many possible solvers; we will choose Mirror Descent (e.g. \citep{nemirovskii1983problem,nesterov1998introductory,ben2001lectures}) for simplicity. Let $G(\mu)=\sup_{y\in\cY}\sum_{i=1}^{N}\mu_{i}\burk\Big(\zeta_{t-1} + \suff(x_t,z_{i},\partial \loss(z_{i},y))\Big)$. Our constraint set is $\ls_1$-bounded, and the boundedness assumption on $\burk$ implies that $G$ is $H$-Lipschitz with respect to the $\ls_{\infty}$ norm. In this case, Mirror Descent with the entropic regularizer (a.k.a. multiplicative weights) guarantees an $\veps$-approximate minimizer for $G(\mu)$ after $O\prn*{H\log(N)/\veps^{2}}$ update steps, each of which requires one evaluation of the subgradient of this function.

Evaluating the subgradient of $G(\mu)$ requires computing a supremum over $y\in\cY$. If $\burk\Big(\zeta_{t-1} + \suff(x_t,z_{i},\partial \loss(z_{i},y))\Big)$ is convex with respect to $y$, then the supremum is obtained in $\crl*{\pm{}B}$ and so can be checked in time $O(N)$. In this case, since each Mirror Descent update takes time $O(N)$, the total complexity of the algorithm is $O(BHKn^{3}\log(BKn))$.

If the supremum over $y\in\cY$ does not have a closed form, we can compute an approximate subgradient by taking a grid over the range $\brk*{-B,B}$ with spacing $\veps'$ and computing the $\argmax$ over this grid by brute force. If a $O(\veps)$-precision solution to the convex program is required, then it suffices to set $\veps'=\veps/K$ and use the approximate subgradients in the Mirror Descent scheme above. The approximate subgradient computation time is $O(KN/\veps)$ in this case, since we evaluate $\sum_{i=1}^{N}\mu_{i}\burk\Big(\zeta_{t-1} + \suff(x_t,z_{i},\partial \loss(z_{i},y))\Big)$ once per candiate $y$. The final time complexity is then $O(BHK^2n^{4}\log(BKn))$.

Lastly, we remark that if we replace Mirror Descent with Mirror Prox for saddle points \citep{nemirovski2004prox}, the dependence on $n$ in running time for the two cases above can be improved to $O(n^{2})$ and $O(n^{3})$ respectively.

The runtime can improved further if a regret bound of order $O(\sqrt{n})$ is sufficient, as this requires less precision.

\begin{proof}[\pfref{prop:burkholder_efficient}]

To begin, observe that since $\wh{\mu}_{t}$ is an approximate solution to \pref{eq:burkholder_approx}, it holds that
\[
\sup_{y\in\cY}\sum_{i=1}^{N}\wh{\mu}_{i}\burk\Big(\zeta_{t-1} + \suff(x_t,z_{i},\partial \loss(z_{i},y_t))\Big)
\leq{}
\inf_{\mu\in\Delta_{N}}\sup_{y\in\cY}\sum_{i=1}^{N}\mu_{i}\burk\Big(\zeta_{t-1} + \suff(x_t,z_{i},\partial \loss(z_{i},y_t))\Big)
+ \veps_{2}.
\]
The remainder of the proof will show that the right-hand-side above can be bounded as
\begin{align*}
\inf_{\mu\in\Delta_{N}}\sup_{y\in\cY}\sum_{i=1}^{N}\mu_{i}\burk\Big(\zeta_{t-1} + \suff(x_t,z_{i},\partial \loss(z_{i},y_t))\Big)
&\leq{} \inf_{q\in\Delta_{\cY}}\sup_{y\in\cY}\En_{\yh\sim{}q}\burk\Big(\zeta_{t-1} + \suff(x_t,\pred,\partial \loss(\pred,y))\Big)
+ K_{t}\veps_{1} \\
&\leq{} \burk(\zeta_{t-1})
+ K_{t}\veps_{1},
\end{align*}
where the second inequality follows from property \propthree{} of $\burk$ and was shown in the proof of \pref{lem:universal_algo}.

The first inequality can be seen as follows. Let $q\in\Delta_{\cY}$ and $y\in\cY$ be fixed. Let $F(z)\ldef{}\burk(\zeta_{t-1}, \suff(x_t, z, \partial\ls(z, y)))$. Since $q$ is a Borel probability measure and $F$ is continuous and bounded, $F$ is integrable with respect to $q$:
\[
\En_{\yh\sim{}q}\burk(\zeta_{t-1}, \suff(x_t, \yh, \partial\ls(\yh, y))) = \int_{\brk*{-B, B}}F(z)dq(z).
\]
Define $\cI_{1}=\brk*{z_0, z_1}$ and $\cI_{i}=(z_{i-1}, z_i]$ for $2\leq{}N$. Then $\crl*{\cI_i}$ form a partition of $\brk*{-B,B}$ and the integral can be approximated as
\begin{align*}
\int_{\brk*{-B, B}}F(z)dq(z) &= \sum_{i=1}^{N}\int_{\cI_{i}}F(z)dq(z) \\
&\geq{} \sum_{i=1}^{N}\int_{\cI_{i}}F(z_i)dq(z) - \sum_{i=1}^{N}\int_{\cI_{i}}\abs*{F(z_i)-F(z)}dq(z) \\
&= \sum_{i=1}^{N}q(\cI_i)F(z_i) - \sum_{i=1}^{N}\int_{\cI_{i}}\abs*{F(z_i)-F(z)}dq(z)\\
&\geq{} \sum_{i=1}^{N}q(\cI_i)F(z_i) - \sum_{i=1}^{N}\int_{\cI_{i}}K_{t}\veps_1dq(z)\\
&= \sum_{i=1}^{N}q(\cI_i)F(z_i) - K_{t}\veps_1\sum_{i=1}^{N}q(\cI_i) \\
&= \sum_{i=1}^{N}q(\cI_i)F(z_i) - K_{t}\veps_1.
\end{align*}
Since this holds for any $q\in\Delta_{\cY}$ and $y\in\cY$, we have
\begin{align*}
\inf_{q\in\Delta_{\cY}}\sup_{y\in\cY}\En_{\yh\sim{}q}\burk\Big(\zeta_{t-1} + \suff(x_t,\pred,\partial \loss(\pred,y))\Big)
&\geq{} \inf_{q\in\Delta_{\cY}}\sup_{y\in\cY}\sum_{i=1}^{n}q(\cI_i)\burk\Big(\zeta_{t-1} + \suff(x_t,z_i,\partial \loss(z_i,y))\Big) - K_t\veps_1 \\
&= \inf_{\mu\in\Delta_{N}}\sup_{y\in\cY}\sum_{i=1}^{n}\mu_i\burk\Big(\zeta_{t-1} + \suff(x_t,z_i,\partial \loss(z_i,y))\Big) - K_t\veps_1.
\end{align*}

\end{proof}

\subsection{Faster Implementation under Specific Structure}

In the remainder of this section of the appendix we show how to implement the Burkholder algorithm for certain special cases that enable admit especially simple strategies.

\begin{lemma}
  \label{lem:det_strat2}
  Suppose that the map
  \[
    \pred \mapsto \burk(\tau + \suff((x,\yh) , \partial(\pred, y)))
  \]
  is convex for all $y$. Then the strategy
\begin{align}
	\label{eq:det_strat2}
  \pred_{t}=\argmin_{\pred\in\cY}\sup_{y\in\cY} ~\burk\left(\sum_{j=1}^{t-1} \zeta_{t-1} + \suff(x_t,\yh,\partial \loss(\pred,y))\right)
\end{align}
achieves the value of the game in \pref{lem:universal_algo}.
\end{lemma}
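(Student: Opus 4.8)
The plan is to reduce the per-round randomized optimization in the Burkholder algorithm \pref{eq:univ_algo} to a deterministic one: under the stated convexity hypothesis I will show that the infimum over $q\in\Delta_{\cY}$ is attained at a point mass, so that the value
\[
\inf_{q\in\Delta_{\cY}}\ \sup_{y\in\cY}\ \En_{\pred\sim q}\burk\big(\zeta_{t-1}+\suff(x_t,\pred,\partial\loss(\pred,y))\big)
\]
---which is exactly the quantity bounded by $\burk(\zeta_{t-1})$ in the proof of \pref{lem:universal_algo}---equals $\inf_{\pred\in\cY}\sup_{y\in\cY}\burk\big(\zeta_{t-1}+\suff(x_t,\pred,\partial\loss(\pred,y))\big)$. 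Granting this, the strategy \pref{eq:det_strat2} attains the same per-round value as the randomized strategy of \pref{lem:universal_algo}, and the rest of the argument (telescoping from $t=n$ down to $t=0$ to reach $\burk(0)\le0$) is identical to that proof, so nothing further is needed.

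For the value equality, fix a round $t$ and abbreviate $g_y(\pred)=\burk\big(\zeta_{t-1}+\suff(x_t,\pred,\partial\loss(\pred,y))\big)$, which by hypothesis is convex in $\pred$ on the convex set $\cY=\brk*{-B,B}$ for each fixed $y$. The inequality $\inf_{q}\sup_y\En_{\pred\sim q}g_y(\pred)\le\inf_{\pred}\sup_y g_y(\pred)$ is immediate since point masses lie in $\Delta_{\cY}$. For the reverse inequality, fix $q\in\Delta_{\cY}$, let $\bar\pred=\En_{\pred\sim q}[\pred]\in\cY$ (using convexity of $\cY$), and apply Jensen's inequality for each fixed $y$: $\En_{\pred\sim q}g_y(\pred)\ge g_y(\bar\pred)$, hence $\sup_y\En_{\pred\sim q}g_y(\pred)\ge\sup_y g_y(\bar\pred)\ge\inf_{\pred\in\cY}\sup_y g_y(\pred)$; taking the infimum over $q$ closes the loop.

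Two routine points to handle along the way: (i) the $\argmin$ in \pref{eq:det_strat2} exists because $\pred\mapsto\sup_y g_y(\pred)$ is a supremum of convex functions, hence convex, and attains its minimum on the compact convex set $\cY$ (alternatively one may work with $\veps$-minimizers at the cost of additive slack, as in \pref{prop:burkholder_efficient}); (ii) integrability of $\pred$ and of $g_y$ under $q$, needed to define $\bar\pred$ and to apply Jensen, follows from compactness of $\cY$ together with boundedness and continuity of $\burk$ and $\suff$ in the prediction argument---the same regularity invoked in \pref{prop:burkholder_efficient}.

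The one genuinely load-bearing step is the use of Jensen, which requires both that $\cY$ be convex (so $\bar\pred$ is a valid prediction) and that $\pred\mapsto g_y(\pred)$ be convex as a composition of $\burk$ with $\pred\mapsto\suff(x_t,\pred,\partial\loss(\pred,y))$, including the dependence of $\partial\loss(\pred,y)$ on $\pred$. This is precisely the hypothesis of the lemma, so I expect no real obstacle: the content is a one-line Jensen argument wrapped around the minimax-value bookkeeping already carried out for \pref{lem:universal_algo}.
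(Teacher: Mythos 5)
Your proposal is correct and follows essentially the same route as the paper: the paper's proof also reduces to the randomized game of \pref{lem:universal_algo} by evaluating the argument at $\En_{\yh\sim q}[\yh]\in\cY$ and applying Jensen via the assumed convexity of $\yh\mapsto\burk(\tau+\suff((x,\yh),\partial\loss(\yh,y)))$, which gives the one inequality (deterministic value $\le$ randomized value) that is actually needed. Your extra verification of the reverse inequality and of existence/integrability is harmless but not required.
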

\begin{proof}[\pfref{lem:det_strat2}]
  This follows by reduction to the general case:
  \begin{align*}
	  \inf_{\pred\in\cY}\sup_{y\in\cY} \burk\left(\zeta_{t-1} + \suff(x_t,\pred,\partial \loss(\pred,y))\right)
    &= \inf_{q\in\Delta_{\cY}}\sup_{y\in\cY} \burk\left(\zeta_{t-1} + \suff(x_t,\En_{\pred\sim{}q}\brk*{\pred},\partial \loss(\En_{\pred\sim{}q}\brk*{\pred},y))\right) \\
    &\leq \inf_{q\in\Delta_{\cY}}\sup_{y\in\cY} \En_{\yh\sim{}q}\burk\left(\zeta_{t-1} + \suff(x_t,\yh,\partial \loss(\yh,y))\right).
  \end{align*}
  The strategy in \pref{eq:det_strat2} is the minimax strategy for second expression above. The final expression is precisely the value of the Burkholder algorithm, which is controlled when $\burk$ is a Burkholder function via \pref{lem:universal_algo}.
\end{proof}
  
  \begin{lemma}
  \label{lem:det_strat3}
  Suppose that $\cY=\brk*{-B, B}$ for some $B>0$. Further suppose that we can write 
  \[
	\burk(\tau + \suff((x,\yh) , \dl)) = \yh\cdot\dl + F(\tau, x, \dl),
  \]
  where $\dl\mapsto{}F(\tau, x, \dl)$ is convex for all $\tau,x$.
Then the prediction strategy
  \begin{align}
	\label{eq:det_strat3}
  \pred_{t}= \mathrm{proj}_{\brk*{-B,B}}\prn*{-\frac{1}{L}\En_{\sigma\in\pmo}\brk*{
  \sigma{}F(\zeta_{t-1}, x_{t}, L\sigma)
  }},
  \end{align}
achieves the value of the game in \pref{lem:universal_algo}.
\end{lemma}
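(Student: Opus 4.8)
The goal is to establish, for the deterministic prediction of \pref{eq:det_strat3}, the per-round inequality
\[
\sup_{y\in\cY}\burk\bigl(\zeta_{t-1}+\suff(x_t,\pred_t,\partial\loss(\pred_t,y))\bigr)\ \le\ \burk(\zeta_{t-1})\qquad\text{for every }t,
\]
since the telescoping argument from the proof of \pref{lem:universal_algo}, now applied to the point mass at $\pred_t$, then produces the regret bound \pref{eq:def_phi_regret} --- which is what ``achieving the value of the game'' means here. Throughout I would write $F_t(\dl):=F(\zeta_{t-1},x_t,\dl)$, convex in $\dl$ by hypothesis, and set $a:=F_t(L)$, $b:=F_t(-L)$. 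By the structural assumption $\burk(\zeta_{t-1}+\suff((x_t,\yh),\dl))=\yh\dl+F_t(\dl)$, and a one-line computation gives $-\tfrac1L\En_{\sigma\in\pmo}\bigl[\sigma F_t(L\sigma)\bigr]=\tfrac{b-a}{2L}$, so $\pred_t=\mathrm{proj}_{[-B,B]}\bigl(\tfrac{b-a}{2L}\bigr)$.

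The first step is to pin down the quantity to beat. Applying property~\propthreep{} of the Burkholder function with $z=(x_t,\pred_t)$ and a Rademacher $\eps$, the coupling term $\pred_t\eps L$ averages to zero, so $\burk(\zeta_{t-1})\ge\En_\eps\bigl[\pred_t\eps L+F_t(\eps L)\bigr]=\tfrac{a+b}{2}$; convexity of $F_t$ also gives $F_t(0)\le\tfrac{a+b}{2}$. Hence it suffices to show
\[
\sup_{y\in\cY}\Bigl[\pred_t\,\partial\loss(\pred_t,y)+F_t\bigl(\partial\loss(\pred_t,y)\bigr)\Bigr]\ \le\ \tfrac{a+b}{2}.
\]
Since $\dl\mapsto\pred_t\dl+F_t(\dl)$ is convex (linear plus convex), its supremum over any subinterval of $[-L,L]$ is attained at an endpoint, so the left-hand side is governed by the two affine maps $\pred\mapsto\pred L+a$ and $\pred\mapsto-\pred L+b$ evaluated at $\pred_t$.

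Next I would split on whether the projection is active. If $\tfrac{b-a}{2L}\in[-B,B]$, then $\pred_t=\tfrac{b-a}{2L}$, so $\pred_tL+a=-\pred_tL+b=\tfrac{a+b}{2}$, and since $\partial\loss(\pred_t,y)\in[-L,L]$ the bound is immediate. If $\tfrac{b-a}{2L}>B$, then $\pred_t=B$; here the key is that the standing assumption, applied to the point mass at $y$, forces $\argmin_{\yh}\loss(\yh,y)\in\cY$, so $\loss(\cdot,y)$ is non-decreasing at the right endpoint $B$ and $\partial\loss(B,y)\in[0,L]$ for a legitimate choice of subderivative. Then $\sup_{\dl\in[0,L]}[B\dl+F_t(\dl)]=\max\{F_t(0),\,BL+a\}$, and both terms are $\le\tfrac{a+b}{2}$: the first by convexity of $F_t$, the second because $B<\tfrac{b-a}{2L}$. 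The case $\tfrac{b-a}{2L}<-B$ is symmetric, using $\partial\loss(-B,y)\in[-L,0]$.

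The main obstacle --- and the one point requiring care in the formal proof --- is precisely this boundary case: if one relaxes $\partial\loss(\pred_t,y)$ to the full interval $[-L,L]$ when $\pred_t=\pm B$, the endpoint bound becomes $\max\{BL+a,-BL+b\}=-BL+b$ (for $\pred_t=B$), which can strictly exceed $\tfrac{a+b}{2}$ and hence is not dominated by $\burk(\zeta_{t-1})$ in general. Restricting the reachable range of the loss derivative to $[0,L]$ at an active boundary is what rescues the argument, and this restriction is exactly where the assumption that the loss-minimizing prediction lies in $\cY$ is used; everything else is a routine convexity calculation. (One might hope to deduce the lemma from \pref{lem:det_strat2}, but under these hypotheses the map $\yh\mapsto\burk(\tau+\suff((x,\yh),\partial\loss(\yh,y)))$ need not be convex, so the direct argument above is needed.)
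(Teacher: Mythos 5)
Your proof is correct, and its backbone coincides with the paper's: both arguments exploit that the unprojected point $\tfrac{b-a}{2L}$ (in your notation $a=F(\zeta_{t-1},x_t,L)$, $b=F(\zeta_{t-1},x_t,-L)$) equalizes the two endpoint values of the convex map $\dl\mapsto \yh\dl+F(\zeta_{t-1},x_t,\dl)$ at $\tfrac{a+b}{2}$, reduce the supremum to endpoints via convexity, and finish with property \propthreep{} to get $\tfrac{a+b}{2}\leq{}\burk(\zeta_{t-1})$. The genuine difference is how the projection is handled. The paper never case-splits: it keeps the full relaxation $\partial\loss(\pred_t,y)\in\brk*{-L,L}$ but first replaces the projected prediction by the unprojected one $\predt_t$ in the linear term, using $\pred_t\,\partial\loss(\pred_t,y)\leq{}\predt_t\,\partial\loss(\pred_t,y)$ when the projection is active --- which follows from the same sign fact you invoke ($\partial\loss(B,y)\geq{}0$, $\partial\loss(-B,y)\leq{}0$, consequences of the assumption that the loss minimizer lies in $\cY$); after this swap the relaxation to $\brk*{-L,L}$ is harmless because $\predt_t$ equalizes the endpoints. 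You instead retain the projected prediction and shrink the reachable derivative range to $\brk*{0,L}$ (resp.\ $\brk*{-L,0}$) at an active boundary, then check the two endpoint values $F(\zeta_{t-1},x_t,0)$ and $BL+a$ against $\tfrac{a+b}{2}$ directly. Both routes consume exactly the same hypothesis at exactly the same place; your ``obstacle'' remark correctly identifies why the naive relaxation with the projected prediction fails, which is precisely the failure the paper's swap is engineered to avoid, and your observation that \pref{lem:det_strat2} does not apply here is also accurate. The only caveat --- shared with the paper --- is the choice of subderivative when the minimizer sits exactly at $\pm{}B$, which you flag appropriately.
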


\begin{proof}[\pfref{lem:det_strat3}]
Let $\wt{y}_t$ denote the unprojected version of $\pred_t$:
\[
  \wt{y}_{t}= - \frac{1}{L}\En_{\sigma\in\pmo}\brk*{
  \sigma{}F(\zeta_{t-1}, x_{t}, L\sigma)
  }.
\]
We prove the lemma by inducting backwards. Let $t\in\brk*{n}$ be fixed. We first claim that
\begin{align*}
\sup_{y\in\cY} \burk\left(\zeta_{t-1} + \suff(x_t,\pred_t,\partial \loss(\pred_t,y))\right)
& = 
\sup_{y\in\cY}\brk*{\pred_t\cdot\partial \loss(\pred_t,y) + F(\zeta_{t-1}, x_t, \partial \loss(\pred_t,y))} \\ 
& \leq 
\sup_{y\in\cY}\brk*{\predt_t\cdot\partial \loss(\pred_t,y) + F(\zeta_{t-1}, x_t, \partial \loss(\pred_t,y))}.
\end{align*}
This holds by the assumption that $\argmin_{\yh\in\bbR}\ls(\yh, y)$ is obtained in $\brk*{-B, B}$ for any $y$. The assumption implies that for any $y$, $\partial\ls(\yh, y)\geq{}0$ for $\pred\geq{}B$ and $\partial\ls(\yh, y)\leq{}0$ for $\pred\leq{}-B$. If $\yh_{t}\neq{}\predt_t$, then either $\yh_{t}=B$ and $\predt_{t}>B$, so that $\partial\ls(\yh_{t}, y)\yh_{t}\leq{}\partial\ls(\yh_{t}, y)\predt_{t}$, or similarly $\yh_{t}=-B$ and $\predt_{t}<-B$, which also implies $\partial\ls(\yh_{t}, y)\yh_{t}\leq{}\partial\ls(\yh_{t}, y)\predt_{t}$.

Now, by the convexity assumption of the lemma, it holds that
\begin{align*}
\sup_{y\in\cY}\brk*{\predt_t\cdot\partial \loss(\pred_t,y) + F(\zeta_{t-1}, x_t, \partial \loss(\pred_t,y))}
&\leq{} \sup_{\delta\in\brk*{-L, L}}\brk*{\predt_t\cdot\delta + F(\zeta_{t-1}, x_t, \delta)} \\
&= \max_{\sigma\in\pmo}\brk*{\predt_t\cdot{}L\sigma + F(\zeta_{t-1}, x_t, L\sigma)}.
\end{align*}

The choice of $\predt_t$ guarantees that $\predt_t\cdot{}L\cdot(1) + F(\zeta_{t-1}, x_t, L\cdot(1)) = \predt_t\cdot{}L\cdot{}(-1) + F(\zeta_{t-1}, x_t, L\cdot(-1))$; this can be seen by rearranging this equality and solving for $\predt_t$. This means that we can take $\sigma=1$ to obtain the maximum in the expression above. Substituting in the value of $\predt_{t}$ then yields
\[
\max_{\sigma\in\pmo}\brk*{\predt_t\cdot{}L\sigma + F(\zeta_{t-1}, x_t, L\sigma)}
=\predt_t\cdot{}L\cdot(1) + F(\zeta_{t-1}, x_t, L\cdot(1)) = \En_{\sigma\in\pmo}\brk*{F(\zeta_{t-1}, x_t, \sigma{}L)}.
\]
Finally, we use property \propthreep{} of $\burk$ and the explicit form for $\burk$ assumed in the lemma statement to proceed back to time $t-1$:
\[
\En_{\sigma\in\pmo}\brk*{F(\zeta_{t-1}, x_{t}, \sigma{}L)} = \En_{\sigma\in\pmo}\brk*{\yh_{t}\sigma{}L+F(\zeta_{t-1}, x_{t}, \sigma{}L)}
= \En_{\sigma\in\pmo}\burk(\zeta_{t-1} + \suff((x_t,\yh_t) , \sigma{}L)) \leq{} \burk(\zeta_{t-1}).
\]
\end{proof}

\section{Algebra of \Bfun Functions}
\label{app:algebra}
% !TEX root = paper.tex

This appendix contains some additional structural results about Burkholder functions which may be useful for algorithm designers.

\begin{proposition}
\label{prop:algebra}
The following statements are true:
\begin{enumerate}
\item Given a \Bfun function $\burk$, if we define the $X_t = \burk(\sum_{j=1}^t \suff(z_j,\delta_j))$, then for any real-valued martingale difference sequence $\delta_t$s and predictable $z_t$s, $(X_t)_{t \ge 0}$ is a supermartingale with $\mathbb{E}[X_0] \le 0$.
\item Any convex combination of \Bfun functions is a \Bfun function.
\item The minimum of a family of \Bfun functions is a \Bfun function.
\item Suppose we have a finite set $A$ that indexes a family of functions $V_a:\cT\to\bbR$, each of which belongs to a sufficient statistic pair $(\suff, V_a)$ for some regret inequality of interest, and suppose each $V_a$ has a corresponding \Bfun function $\burk_a$.  Then the following probabilistic inequality is true:
$$
\mathbb{E}\left[\max_{a \in A} \left\{V_a\prn*{\sum_{t=1}^n \suff(z_t,\delta_t)} - \eta n C[a] \right\}\right]  \le \frac{1}{\eta}\log |A|,
$$
where $C[a] = \sup_{\tau, z ,\alpha} (\burk_a(\tau + \suff(z,\alpha)) - \burk_a(\tau))^2$. Note that $C \in \mathbb{R}^A$ may be thought as a sufficient statistic, though it is fixed and does not depend on instances.
Furthermore, a \Bfun function $\burk: \T \times \mathbb{R}^A \to \reals$ that certifies this inequality is:
\begin{equation}
\label{eq:burkholder_meta}
\burk(\tau, \gamma) = \frac{1}{\eta} \log\left(\sum_{a \in A} \exp\left(\eta \burk_a(\tau) - \eta^2 \gamma[a] \right)\right)  - \frac{\log|A|}{\eta}
\end{equation}
\end{enumerate}
\end{proposition}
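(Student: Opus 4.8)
These follow directly from the three defining properties, with essentially no computation. For Part 1, $X_{0}=\burk(0)\le 0$ is property \propone{}, and writing $\zeta_{t-1}=\sum_{j=1}^{t-1}\suff(z_{j},\delta_{j})$ (which is $\cG_{t-1}$-measurable, $\cG_{t-1}=\sigma(\delta_{1},\ldots,\delta_{t-1})$), the conditional expectation $\En\brk*{X_{t}\mid\cG_{t-1}}=\En_{\delta_{t}}\brk*{\burk(\zeta_{t-1}+\suff(z_{t},\delta_{t}))\mid\cG_{t-1}}$ is exactly the left-hand side of the restricted concavity inequality \propthree{} with $\tau=\zeta_{t-1}$, $z=z_{t}$, and $p$ the (mean-zero, $[-L,L]$-supported) conditional law of $\delta_{t}$; hence it is $\le\burk(\zeta_{t-1})=X_{t-1}$. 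For Part 2, if $\burk=\sum_{i}\lambda_{i}\burk_{i}$ with $\lambda_{i}\ge 0$ and $\sum_{i}\lambda_{i}=1$, then each of \propone{}/\proptwo{}/\propthree{} is an inequality that is affine in $\burk$ and so is preserved under convex combination, and a nonnegative combination of functions convex in $\alpha$ is convex in $\alpha$, so \propthreep{} is preserved as well. For Part 3, if $\burk=\inf_{i}\burk_{i}$, then \propone{} and \proptwo{} are immediate ($\inf_{i}\burk_{i}(0)\le\burk_{i_{0}}(0)\le 0$; $\inf_{i}\burk_{i}(\tau)\ge V(\tau)$ since each $\burk_{i}(\tau)\ge V(\tau)$), while \propthree{} follows from $\En_{\alpha}\brk*{\inf_{i}\burk_{i}(\tau+\suff(z,\alpha))}\le\inf_{i}\En_{\alpha}\brk*{\burk_{i}(\tau+\suff(z,\alpha))}\le\inf_{i}\burk_{i}(\tau)=\burk(\tau)$. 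I would flag here that a pointwise infimum need not be convex in $\alpha$, so this construction preserves \propthree{} but not necessarily \propthreep{}; it therefore produces a Burkholder function whenever the $\burk_{i}$ satisfy the full property \propthree{}.

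\textbf{Part 4, reframing.} I would recast the claim in the language of \pref{def:sufficiency} and \pref{lem:equivalence_burkholder} by enlarging the representation space to $\T\times\bbR^{A}$ and setting the augmented sufficient statistic $\suff'(z,\alpha)=\prn*{\suff(z,\alpha),\,C}$, where $C\in\bbR^{A}$ is the fixed vector with $C[a]=\sup_{\tau,z,\alpha}\prn*{\burk_{a}(\tau+\suff(z,\alpha))-\burk_{a}(\tau)}^{2}$. Then $\sum_{t=1}^{n}\suff'(z_{t},\delta_{t})=\prn*{\sum_{t=1}^{n}\suff(z_{t},\delta_{t}),\,nC}$, and with $\wh{V}(\tau,\gamma)\ldef\max_{a\in A}\crl*{V_{a}(\tau)-\eta\gamma[a]}-\tfrac{1}{\eta}\log\abs*{A}$ the stated probabilistic inequality is precisely $\sup_{\bz,\bp}\En\brk*{\wh{V}\prn*{\sum_{t}\suff'(\bz_{t},\delta_{t})}}\le 0$. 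By the backward (\emph{easy}) direction of \pref{lem:equivalence_burkholder}, this follows as soon as the function $\burk$ of \pref{eq:burkholder_meta} is verified to satisfy \propone{}, \proptwo{}, and \propthree{} for the pair $(\suff',\wh{V})$; the rest of the argument is that verification.

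\textbf{Part 4, verification.} Write $\Phi(\tau,\gamma)=\sum_{a\in A}\exp\prn*{\eta\burk_{a}(\tau)-\eta^{2}\gamma[a]}$, so that $\burk(\tau,\gamma)=\tfrac{1}{\eta}\log\Phi(\tau,\gamma)-\tfrac{1}{\eta}\log\abs*{A}$. Property \propone{} reads $\burk(0,0)\le\tfrac1\eta\log\abs*{A}-\tfrac1\eta\log\abs*{A}=0$, using $\burk_{a}(0)\le 0$. Property \proptwo{} follows by lower bounding $\Phi$ by its largest summand: $\burk(\tau,\gamma)\ge\max_{a}\crl*{\burk_{a}(\tau)-\eta\gamma[a]}-\tfrac1\eta\log\abs*{A}\ge\wh{V}(\tau,\gamma)$, using $\burk_{a}\ge V_{a}$. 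Property \propthree{} is the only substantive step: since the $\gamma$-block of $\suff'$ has the \emph{deterministic} increment $C$, I must show $\En_{\alpha}\brk*{\burk(\tau+\suff(z,\alpha),\gamma+C)}\le\burk(\tau,\gamma)$ for every mean-zero $\alpha$ on $[-L,L]$. Concavity of $\log$ (Jensen) reduces this to $\En_{\alpha}\brk*{\Phi(\tau+\suff(z,\alpha),\gamma+C)}\le\Phi(\tau,\gamma)$, and expanding $\Phi$ and pulling out the deterministic factors reduces it further, term by term, to the inequality $e^{-\eta^{2}C[a]}\En_{\alpha}\brk*{e^{\eta D_{a}(\alpha)}}\le 1$, where $D_{a}(\alpha)\ldef\burk_{a}(\tau+\suff(z,\alpha))-\burk_{a}(\tau)$. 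Now $\En_{\alpha}\brk*{D_{a}(\alpha)}\le 0$ by property \propthree{} (resp. \propthreep{}) of $\burk_{a}$, and $\abs*{D_{a}(\alpha)}\le\sqrt{C[a]}$ by the definition of $C[a]$; Hoeffding's lemma applied to the bounded, non-positive-mean variable $D_{a}(\alpha)$ (valid for $\eta>0$) gives $\En_{\alpha}\brk*{e^{\eta D_{a}(\alpha)}}\le e^{\eta^{2}C[a]/2}\le e^{\eta^{2}C[a]}$, which is what is needed. In the Rademacher regime one also checks that $\alpha\mapsto\burk(\tau+\suff'(z,\alpha))$ is convex whenever each $\alpha\mapsto\burk_{a}(\tau+\suff(z,\alpha))$ is (log-sum-exp is convex and coordinatewise nondecreasing), so the same computation restricted to $\alpha\in\crl*{\pm L}$ certifies $\burk$ in that case too.

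\textbf{Main obstacle.} There is no genuinely hard step; the content is bookkeeping around an exponential-weights potential. The two points needing care are (i) the reframing that turns the fixed penalty vector $C$ into a deterministic sufficient-statistic increment, so that the meta-function literally fits the template of \pref{lem:equivalence_burkholder}, and (ii) recognizing that the restricted concavity of each $\burk_{a}$ supplies exactly the non-positive conditional-mean hypothesis required to invoke Hoeffding's lemma on the increments $D_{a}$. The factor-of-two slack between $e^{\eta^{2}C[a]/2}$ and the $e^{\eta^{2}C[a]}$ penalty is harmless, reflecting only that the constant in the definition of $C[a]$ was not optimized.
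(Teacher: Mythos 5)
Your proposal is correct and follows essentially the same route as the paper: parts 1--3 are verified directly from properties \propone{}/\proptwo{}/\propthree{} (supermartingale, affineness, concavity of the minimum), and part 4 verifies the three properties for the exponential-weights meta-function via Jensen on the logarithm followed by the standard mgf/Hoeffding bound on the increments $\burk_a(\tau+\suff(z,\alpha))-\burk_a(\tau)$, which have nonpositive conditional mean by \propthree{} and are bounded by $\sqrt{C[a]}$. Your explicit reframing of $C$ as a deterministic increment of an augmented sufficient statistic, and your caveat that the pointwise minimum need not preserve \propthreep{}, are harmless refinements of what the paper leaves implicit.
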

\begin{proof}[\pfref{prop:algebra}]
The first statement follows from property \propthree{} of the Burkholder function $\burk$, which immediately implies that it is a supermartingale. The second statement is trivial. To prove the third statement it suffices to verify property \propthree, which holds due to concavity of the minimum. 

We now prove the fourth statement. Given a family of \Bfun functions $\crl*{\burk_a}_{a\in{}A}$, define a new \Bfun function $\burk: \T \times \mathbb{R}^A \to\reals$ as:
$$
\burk(\tau, \gamma) = \frac{1}{\eta} \log\left(\sum_{a \in A} \exp\left(\eta \burk_a(\tau) - \eta^2 \gamma[a] \right)\right)  - \frac{\log|A|}{\eta}.
$$
whose sufficient statistics are the original sufficient statistic of the family of $V_a$s along with an additional $|A|$-dimensional real vector, for which one coordinate per $a \in A$ will be used to represent $C[a] = \sup_{\tau, z ,\alpha} (\burk_a(\tau + \suff(z,\alpha)) - \burk_a(\tau))^2$ (note that this is a vacuous statistic as it is constant for each instance). Property \propthree{} for $\burk$ holds as follows:
\begin{align*}
 \En_\alpha &\burk\left((\tau,\gamma) + (\suff(z,\alpha) , C) \right) \\
 & = \frac{1}{\eta} \En_\alpha \log\left(\sum_{a \in A} \exp\left(\eta \burk_a(\tau + \suff(z,\alpha)) - \eta^2 \gamma[a] - \eta^2 C[a]\right)\right) - \frac{\log|A|}{\eta} \\
 & \le  \frac{1}{\eta}  \log\left(\sum_{a \in A} \En_\alpha \exp\left(\eta \burk_a(\tau + \suff(z,\alpha))  - \eta^2 \gamma[a] - \eta^2 C[a] \right)\right) - \frac{\log|A|}{\eta} \\
 & =  \frac{1}{\eta}  \log\left(\sum_{a \in A} \En_\alpha \exp\left(\eta \left(\burk_a(\tau + \suff(z,\alpha)) - \burk_a(\tau) \right) + \eta\burk_a(\tau)  - \eta^2 \gamma[a] - \eta^2 C[a]\right)\right) - \frac{\log|A|}{\eta}.
\end{align*}
Now note that by property \propthree{} of the \Bfun functions $\crl*{\burk_a}_{a\in{}A}$, the random variable $X_a = \left(\burk_a(\tau + \suff(z,\alpha)) - \burk_a(\tau) \right)$ is such that $\En_{\alpha}[X_a] \le 0$. Further from our assumption we have that $|X_a|^2 \le C[a]$. Hence, the standard mgf bound implies $\mathbb{E}_{\alpha}[\exp(\eta X_a)] \le \exp(\eta^2 C[a]/2)$.
\begin{align*}
 & \le  \frac{1}{\eta}  \log\left(\sum_{a \in A} \exp\left(\eta\burk_a(\tau)  + \frac{\eta^2}{2} C[a]  - \eta^2 \gamma[a] - \eta^2 C[a] \right)\right) - \frac{\log|A|}{\eta} \\
 & \le \frac{1}{\eta}  \log\left(\sum_{a \in A} \exp\left(\eta\burk_a(\tau)   - \eta^2 \gamma[a] \right)\right) - \frac{\log|A|}{\eta}.
\end{align*}
For property \propone{} it can be seen immediately that $\burk(0) \le 0$. Property \proptwo{} holds via
\begin{align*}
 \burk(\tau,\gamma) &= \frac{1}{\eta} \log\left(\sum_{a \in A} \exp\left(\eta \burk_a(\tau) - \eta^2 \gamma[a] \right)\right)  - \frac{\log|A|}{\eta}\\
& \ge \max_{a \in A}\left\{ \burk_a(\tau) - \eta \gamma[a]\right\} - \frac{\log|A|}{\eta} ~~~~~~ \textrm{(softmax upper bounds max)}\\
& \ge \max_{a \in A}\left\{ V_a(\tau) - \eta \gamma[a]\right\} - \frac{\log|A|}{\eta}.
\end{align*}
\end{proof}
We remark that one uses non-additive sufficient statistics as discussed in \pref{sec:discussion}, then one can make the bound implied by the Burkholder function $\burk$ above more data-dependent by replacing $C[a]$ with $\sup_{\delta} \left(\burk_a(\tau + \suff(z,\delta)) - \burk_a(\tau) \right)^2$ for each $a$.

\end{document}